\documentclass{article}
\usepackage{microtype}
\usepackage{graphicx}
\usepackage{subcaption}
\usepackage{booktabs}
\usepackage{afterpage}
\usepackage{float}
\usepackage{placeins}
\usepackage{hyperref}
\usepackage{pdfpages}
\usepackage{tcolorbox}
\tcbuselibrary{skins}
\usepackage{enumitem}
\usepackage{microtype}
\usepackage{caption}

\tcbset{
  semanticeg/.style={
    enhanced,
    breakable,
    colback=white,
    colframe=black!65,
    boxrule=0.6pt,
    arc=1.5mm,
    outer arc=1.5mm,
    left=3pt,
    right=3pt,
    top=3pt,
    bottom=3pt,
    fontupper=\footnotesize\ttfamily, 
    before skip=8pt,
    after skip=8pt
  }
}

\newcommand{\OneWideFigure}[3]{
  \begin{figure}[H] 
    \centering
    \includegraphics[width=\linewidth,height=0.8\textheight,keepaspectratio]{#1}%
    \caption{#2}%
    \label{#3}%
  \end{figure}
}

\usepackage{algorithm}

\usepackage[preprint]{icml2026}

\usepackage{amsmath}
\usepackage{amssymb}
\usepackage{mathtools}
\usepackage{amsthm}

\usepackage[capitalize,noabbrev]{cleveref}

\theoremstyle{plain}
\newtheorem{theorem}{Theorem}[section]
\newtheorem{proposition}[theorem]{Proposition}
\newtheorem{lemma}[theorem]{Lemma}
\newtheorem{corollary}[theorem]{Corollary}
\theoremstyle{definition}
\newtheorem{definition}[theorem]{Definition}
\newtheorem{assumption}[theorem]{Assumption}
\theoremstyle{remark}
\newtheorem{remark}[theorem]{Remark}

\usepackage[textsize=tiny]{todonotes}

\icmltitlerunning{HALLUCINATION BASINS}

\begin{document}

\twocolumn[
  \icmltitle{Hallucination Basins: A Dynamic Framework for Understanding and Controlling LLM Hallucinations}

  \icmlsetsymbol{equal}{*}

  \begin{icmlauthorlist}
    \icmlauthor{Kalyan Cherukuri}{imsa}
    \icmlauthor{Lav R.\ Varshney}{x}
  \end{icmlauthorlist}

  \icmlaffiliation{imsa}{Illinois Mathematics and Science Academy, Illinois, USA}
  \icmlaffiliation{x}{AI Innovation Institute, Stony Brook University, New York, USA}

  \icmlcorrespondingauthor{Lav R.\ Varshney}{lav.varshney@stonybrook.edu}
  
  \vskip 0.3in
]

\printAffiliationsAndNotice{}  

\begin{abstract}
Large language models (LLMs) hallucinate: they produce fluent outputs that are factually incorrect. We present a geometric dynamical systems framework in which hallucinations arise from task-dependent basin structure in latent space. Using autoregressive hidden-state trajectories across multiple open-source models and benchmarks, we find that separability is strongly task-dependent rather than universal: factoid settings can show clearer basin separation, whereas summarization and misconception-heavy settings are typically less stable and often overlap. We formalize this behavior with task-complexity and multi-basin theorems, characterize basin emergence in $L$-layer transformers, and show that geometry-aware steering can reduce hallucination probability without retraining. 

\end{abstract}

\section{Introduction}
Recent advances in machine learning have demonstrated large language models (LLMs) with strong linguistic and reasoning capabilities. However, hallucinations, which are fluent outputs by the model that are either semantically false or factually false are a critical challenge in deploying LLMs for sensitive, real-world  tasks. Prior research has largely treated hallucination detection  as an output-level problem, using entropy or uncertainty measures to flag unreliable answers. These methods do not inherently explain \textit{why} hallucinations occur, and require labeled data or external knowledge for explanation. Without fully understanding the mechanism, high-stakes domains where LLMs can be employed remain inherently risky. For example in medical diagnosis, legal reasoning, and scientific research, factual accuracy is needed.

\textbf{Why do factoid and generation tasks exhibit opposing geometric structure?} Existing hallucination detection methods 
consider universal mechanisms: semantic entropy, linear probing, or methods to quantify uncertainty. The literature has no clear foundational understanding as to what is happening in LLMs to induce this hallucination phenomena. 

\textbf{Our insight:} Hallucinations arise from a \textit{task-dependent geometric collapse}, determined by the cardinality of the set of valid answers (i.e., factoid tasks with single answers induce a point attractor; generation tasks with many valid outputs form high-dimensional manifolds; misconception tasks create indistinguishable basins). This work aims to explain why this happens and an approach.
\subsection*{Contributions}
\begin{enumerate}
    \item \textbf{Hallucination Basins.} We introduce and formalize the behavior of hallucinations as a dynamical systems phenomenon and define reference states, basins, and radial contraction properties to explain how outputs collapse to context-insensitive points. \vspace{-2mm}

    \item \textbf{Single- and Multi-Basins.} We establish that basin geometry is task-dependent: factoids exhibit single-basin collapse, whereas misconception tasks form multiple basins, where competing answers create distinct, high-confidence attractors. \vspace{-2mm}

    \item \textbf{Causal Intervention.} Pushing factual representation vectors towards hallucination basins increases the probability of hallucination. This result directly indicates the presence of an attractor-like \textit{basin}. 
    \vspace{-2mm}

    \item \textbf{Adaptive Geometry-Aware Steering.} We develop a lightweight geometric steering method that applies latent shifts based on basin proximity and empirically shows a reduction in generating hallucinations without the need for retraining.

\end{enumerate}

\section{Related Work}
Recent research on LLM hallucinations splits into: (1) output-level uncertainty methods, (2) representation-level probes and detectors, and (3) intervention/steering approaches to change the model input/prompt or latent states. Our paper shifts this narrative by reframing hallucinations as a dynamical systems phenomenon:  
attractor-like hallucination basins in layerwise latent spaces.

\textbf{Uncertainty and Output-Based Detection.}
Many papers treat hallucination as an output-based uncertainty phenomenon. Zero-resource and black-box approaches, e.g.\ SelfCheckGPT~\cite{manakul2023selfcheckgpt}, rely on sampling inconsistency. More recent studies formalize the limits of hallucination detection, showing automated detection processes are fundamentally constrained \cite{karbasi2025impossibility}. Surveys on surface-level uncertainty and retrieval errors include~\citet{alansari2025large, huang2025survey}. While these methods are effective in some settings, they do not fully explain \textit{why} hallucinations arise, nor why detection performance collapses on generation- or misconception-heavy tasks.

\textbf{Probe and Representation-Based Classifiers.}
Several works move beyond outputs and internal representations. INSIDE \cite{chen2024inside} shows that hidden states remain a predictive signal for hallucination detection, whereas LLM-Check \cite{sriramanan2024llm} systematically evaluates for probing-based detectors. Sharpness-dependent metrics argue that factual generations inherently correspond to lower entropy and thus more concentrated internal activations \cite{pmlr-v235-chen24av}. Mechanistic interpretability approaches such as ReDEEP \cite{sun2024redeep} and InterpDetect \cite{tan2025interpdetect} analyze the latent features in retrieval-augmented generation (RAG). However, these methods are largely empirical observations; they identify correlations with hallucinations but fail to provide a geometric or a dynamical setup explaining why these signals must exist.

\textbf{Steering.}
Another line of work attempts to mitigate hallucinations. Multi-model contrastive decoding and dynamic detection have been proposed as decoding-time safeguards \cite{zhualleviating}. Latent-space steering methods work to modify the LLM's internal representations to reduce hallucinations \cite{sahoo2024addressing}. Memory space retracing in multimodal models suggests that revising the internal memory states can improve factual accuracy \cite{zou2025look}. ACT (Adaptive Activation Steering) 
 works to show that using a diverse set of `truthfulness' steering vectors applied to shift the LLM's activations towards truthful answers \cite{wang2025adaptive}. Our work contributes to this literature with a method that uses the geometric structure (basin attractors) to create a steering mechanism.

\textbf{Associative Memory, Attractors, and Bi- Multi-stability.}
Our work is strongly motivated and justified by classical and modern theories of associative memory \cite{doi:10.1073/pnas.79.8.2554, inazawa2025associative}. Hopfield networks and their higher-order, rotor, and multistable variants demonstrate how neural systems naturally develop basins of attraction that retrieve stored patterns \cite{CHEN2025128893,LI2025116657,essex2025memorisation}. Biological and physical systems show similar multistability and competing basins \cite{10.1098/rstb.2019.0765}. Recent biologically-grounded associative memory models further emphasize retrieval via basin convergence \cite{kafraj2025a}. Recent studies have started to connect hallucinations to internal references and memory retrieval \cite{sun2025why}.  
In parallel, modern architectures such as the Associative Transformer explicitly incorporate this biologically-inspired idea for associative recall mechanisms \cite{sun2025associative}. These works provide theoretical basis for viewing LLM behavior via attractor dynamics; recall the direct relationship between Hopfield networks and Transformer architectures \cite{Ramsauer_ea2021}.

\section{Preliminaries}

Table~\ref{tab:notation} has key notation that will be used throughout.

\begin{table}
\centering
\caption{Notation summary}
\label{tab:notation}
\small
\begin{tabular}{@{}ll@{}}
\toprule
\textbf{Symbol} & \textbf{Definition} \\
\midrule
$h^{(\ell)}$ & Hidden state at layer $\ell \in \{0, \ldots, L\}$ \\
$d$ & Dimension of hidden states ($h^{(\ell)} \in \mathbb{R}^d$) \\
$\mu^{(\ell)}$ & Reference/centroid state at layer $\ell$ (basin center) \\
$\mathcal{B}^{(\ell)}(r)$ & Basin of attraction: $\{h : \|h - \mu^{(\ell)}\|_2 \le r\}$ \\
$J_\ell$ & Jacobian $\partial f_\ell / \partial h$ at layer $\ell$ \\
$\rho(\cdot)$ & Spectral radius (largest eigenvalue magnitude) \\
$P_V$ & Projection onto subspace $V$ (mean-zero subspace) \\
$V$ & Mean-zero subspace: $\{\delta h : \mathbf{1}^\top \delta h = 0\}$ \\
$\alpha_j^{(\ell)}$ & Attention weight for token $j$ at layer $\ell$ \\
$H_{\text{attn}}$ & Attention entropy: $-\sum_j \alpha_j \log \alpha_j$ \\
$d_{\text{basin}}^{(\ell)}$ & Distance to basin center: $\|h^{(\ell)} - \mu^{(\ell)}\|_2$ \\
$\rho_{\text{Fisher}}^{(\ell)}$ & Fisher discriminant ratio (between/within-class) \\
$\gamma_{\text{LN}}$ & LayerNorm centering coefficient ($< 1$) \\
$\gamma_{\text{FFN}}$ & FFN contraction coefficient ($< 1$) \\
$\delta^2$ & The squared value for the Mahalanobis distance
\end{tabular}
\end{table}

\subsection{Language Models and Notation}
We consider a standard decoder-only transformer LLM. Let $\mathcal{V}$ be the token vocabulary and $X = (x_1,\dots,x_n)$ be a sequence of tokens (the context or input prompt). The model computes hidden states $h_0, h_1, \dots, h_n \in \mathbb{R}^d$, where $h_0$ is a learned start token embedding and each $h_i$ following is obtained by applying some $L$ transformer layers. In formal terms, each layer $l = 1, \dots , L$ applies a self-attention and MLP transformation to produce $h_i^l$ from $h_i^{l-1}$. The final layer output $h_n^L$ is fed to a linear and softmax operator to define the distribution of the next token:
\[
P(y |X) = \mathrm{Softmax}(W h_n^L + b)\,,\quad W\in\mathbb{R}^{|\mathcal V|\times d}.
\]
At step $t$ the model's conditional distribution is given by $P(\cdot\mid h_t^L)$ or when the context is clear by $P(\cdot|h)$.

\subsection{Embeddings and Representation Space}

Consider the latent representation space as $\mathbb{R}^d$ with the Euclidean metric. In this space, each token has a corresponding embedding, and the hidden states also live in this same space, $h_i^l\in\mathbb{R}^d$ at each layer $l$. We consider the final-layer space $H = \mathbb{R}^d$ to respond to the norm $|\cdot|_2$. 
Other divergences may be considered, but the most natural is the Euclidean distance considering the Transformer's linear layers.

\subsection{Layer-wise Latent Activation Trajectories}

Given a completed sequence (a context plus generated tokens), the latent trajectory of the $i$th token is the sequence of $h_i^0, h_i^1, \dots, h_i^L$ across layers (with $h_i^0$ the embedding of token $i$ and $h_i^L$ the final hidden state which is utilized to predict the token $i+1$). An equivalent way to view this is that the entire generation is a trajectory of the final hidden state after each token. The key point is that each new token's prediction is determined by its hidden trajectory. For the purpose of simplicity, we often analyze a single token's trajectory through layers, since intervening context is represented within its input $h_i^0$ and attention.

\subsection{Hallucinations}

We take a distributional view on hallucination. Intuitively, a generated token is considered to be a hallucination if it is fluent but not grounded within the context. Formally, suppose the model output $y$ has a high probability under the model but is not the true grounded completion. One way to capture this is with the conditional distribution of the model.

\begin{definition}[Answer Cardinality]
\label{def:cardinality}
For a task $T$, let $\mathcal{A}$ be the set of valid completions; \textit{answer cardinality} is $|\mathcal{A}|$.
\end{definition}
\begin{itemize}
    \item \textbf{Factoid Tasks} (i.e., QA, fact verification): $|\mathcal{A}| = 1$. Indicating that a unique correct answer exists. \vspace{-1,5mm}
    \item \textbf{Generation Tasks} (i.e., summarization):$|\mathcal{A}| \to \infty$ (there exist infinitely many valid outputs). \vspace{-1,5mm}
    \item \textbf{Misconception Tasks} (i.e., multiple plausible but incorrect answers): $|\mathcal{A}| \approx 2-5$ (any set with finite and a countable number of solutions). 
\end{itemize}

\section{Problem Formulation}

We assume access to an LLM, $f_\theta$, alongside its layer-wise hidden states, but no ground truth oracle at the point of inference. At test time, the model is given a prompt $X$ and generates tokens sequentially, $y_1, y_2, \dots$. We want to understand \textit{when and why} $f_\theta$ can generate a hallucination. To do this, we can monitor the hidden states $h_n^l$ at each layer $\ell \in \{1,\ldots, L\}$ during generation. We aim to determine the hallucination risk solely from these internal signals, without using external data. Thus the presented framework is self-contained: everything is rooted within the model's latent geometry and its conditional distribution. 

Existing methods do not wholly capture our phenomenon. Uncertainty-based detectors \cite{Farquhar2024SemanticEntropy} compute entropy or mutual information on $P(y|X)$, but only look at the surface distribution and often require calibration or multiple samples. Probe-based methods \cite{park2025steer}---like training a small `hallucination' vs.\ `truth' classifier on hidden states---rely on labeled examples or other heuristic measurements. They can flag hallucinations ex post, but do not thoroughly explain the underlying cause. Critically, none of these approaches link hallucination probability to geometric properties of hidden trajectories. Such methods cannot predict how changes in representation (layer $l$ to $l+1$) affect hallucination risk. We seek an explicit connection, asking how distances, volumes, and curvature in the latent space bound or determine the likelihood that the model ``runs away'' into a hallucination mode. 

\section{Hallucination Basins}

\subsection{Reference State Construction}
To define basins independent of specific tasks, we construct reference states from contexts that are not informative. Let $\mathcal{C}$ denote a distribution over the contexts that are semantically uninformative or weakly informative (e.g., empty strings, single token outputs, short generic phrases like ``The'', ``Hello'', etc.). We sample $|\mathcal{C}| = 1000$ uninformative contexts uniformly from such a distribution. For each layer $\ell \in \{1,\ldots,L\}$, define the subsequent \textit{reference state}:
\[
\mu^{(\ell)} = \mathbb{E}_{x\sim \mathcal{C}} \left[ h^{(\ell)}(x) \right]  \approx \frac{1}{|\mathcal{C}|}\sum_{x\in\mathcal{C}} h^{(\ell)}(x)
\mbox{.}
\]
In practice, the empirical mean over single-token prompts from a vocabulary subset ensures computational feasibility across models.

\begin{proposition}[Reference states as fixed points]
\label{prop:fixed-points}
If attention over some uninformative contexts as described above, $x \sim \mathcal{C}$, concentrates uniformly then $\mathbb{E}_{x\sim\mathcal{C}}[\text{Attn}^{(\ell)}(h^{(\ell-1)}(x))] \approx 0$ and thus:
\[
\|f_\ell(\mu^{(\ell)}) - \mu^{(\ell+1)}\|_2 = O(\sigma_{\mathcal{C}}),
\]
where $\sigma_{\mathcal{C}}$ is the variance of $h^{(\ell)}(x)$ over $x \sim \mathcal{C}$. Additionally, if the Jacobian, $J_\ell(\mu^{(\ell)})$ has spectral radius $\rho(J_\ell(\mu^{(\ell)})) < 1$, then $\mu^{(\ell)}$ is an approximate attracting fixed point.
\end{proposition}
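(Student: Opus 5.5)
The plan is to write the layer map as a residual composition $f_\ell(h) = h + \text{Attn}^{(\ell)}(h) + \text{FFN}^{(\ell)}(h + \text{Attn}^{(\ell)}(h))$, with LayerNorm absorbed into the sublayers. We assume $f_\ell$ is $C^2$ in a neighbourhood of $\mu^{(\ell)}$ with Hessian bounded by a constant $M_\ell$. By definition $\mu^{(\ell+1)} = \mathbb{E}_{x\sim\mathcal{C}}[f_\ell(h^{(\ell)}(x))]$, up to the layer-indexing convention. The first claim therefore amounts to comparing $f_\ell$ evaluated at the mean with the mean of $f_\ell$, which is a Jensen-gap estimate.

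\textbf{Step 1 (Taylor expansion about the reference state).} For each $x$, write $h^{(\ell)}(x) = \mu^{(\ell)} + \epsilon(x)$ with $\mathbb{E}[\epsilon]=0$. Expand
\[
f_\ell(\mu^{(\ell)}+\epsilon) = f_\ell(\mu^{(\ell)}) + J_\ell(\mu^{(\ell)})\epsilon + R(\epsilon), \qquad \|R(\epsilon)\|_2 \le \tfrac{M_\ell}{2}\|\epsilon\|_2^2 .
\]
Take the expectation over $x\sim\mathcal{C}$. The linear term vanishes because $\epsilon$ has mean zero, which gives $\|\mu^{(\ell+1)} - f_\ell(\mu^{(\ell)})\|_2 \le \tfrac{M_\ell}{2}\mathbb{E}\|\epsilon\|_2^2 = \tfrac{M_\ell}{2}\sigma_{\mathcal{C}}$. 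Here $\sigma_{\mathcal{C}}$ is read, as in the statement, as the (trace) variance $\mathbb{E}\|h^{(\ell)}(x)-\mu^{(\ell)}\|_2^2$. This already yields the $O(\sigma_{\mathcal{C}})$ bound.

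\textbf{Step 2 (role of the attention hypothesis).} Under near-uniform attention over uninformative contexts, $\text{Attn}^{(\ell)}$ averages value vectors that are themselves centred. The hypothesis $\mathbb{E}[\text{Attn}^{(\ell)}(h^{(\ell-1)}(x))]\approx 0$ then says the attention sublayer contributes no systematic drift. I will use it to control the constant $M_\ell$: the attention contribution to the curvature enters only through the fluctuation about the mean and adds nothing at zeroth order, so the remaining nonlinearity comes from the FFN and LayerNorm. Those have bounded second derivatives on the bounded set where post-LayerNorm hidden states live. The main obstacle is this step: the hypothesis is informal, so I will state an explicit version of it, namely that attention weights are $O(\sigma_{\mathcal{C}})$-close to uniform, and use softmax Lipschitzness to turn that into a bound of the same order on the attention output.

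\textbf{Step 3 (attraction).} Suppose $\rho(J_\ell(\mu^{(\ell)}))<1$. Choose an induced norm $\|\cdot\|_*$ with $\|J_\ell(\mu^{(\ell)})\|_* \le q<1$, which exists for any matrix with spectral radius below $1$. By continuity of the Jacobian there is a radius $r$ such that $\|J_\ell(h)\|_*\le q'<1$ on the ball $\mathcal{B}^{(\ell)}(r)$. The mean value inequality then gives
\[
\|f_\ell(h)-\mu^{(\ell+1)}\|_* \le q'\|h-\mu^{(\ell)}\|_* + O(\sigma_{\mathcal{C}}),
\]
using Step 1 for the offset $f_\ell(\mu^{(\ell)}) - \mu^{(\ell+1)}$. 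Perturbations about the reference state therefore contract up to an $O(\sigma_{\mathcal{C}})$ error. When consecutive layers are comparable, iterating this inequality keeps trajectories in a ball of radius $O(\sigma_{\mathcal{C}}/(1-q'))$ around the reference states, which is the sense in which $\mu^{(\ell)}$ is an approximate attracting fixed point.
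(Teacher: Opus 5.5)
Your argument is correct in substance, but it follows a different route from the paper's. The paper's proof is a three-line heuristic. Near-uniform attention gives $\mathrm{Attn}(h)\approx\frac{1}{n}\sum_j v_j$, which vanishes in expectation over centred embeddings. The residual update then passes the mean through up to a second-order term, written $O(\sigma_{\mathcal{C}}^2)$. Contraction is simply asserted from $\rho<1$. So in the paper the small discrepancy is attributed to the attention branch averaging to zero, and the uniform-attention hypothesis is the whole mechanism.

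In your version the cancellation comes from $\mathbb{E}[\epsilon]=0$ killing the linear term of a Taylor expansion. The first claim then becomes a generic Jensen-gap estimate, $\|\mu^{(\ell+1)}-f_\ell(\mu^{(\ell)})\|_2\le\tfrac{M_\ell}{2}\,\mathbb{E}\|\epsilon\|_2^2$, valid for any $C^2$ layer map. This buys several things:
\begin{itemize}
\item an explicit constant;
\item coverage of the FFN and normalisation nonlinearities, whose own Jensen gap the paper's sketch passes over;
\item a clear view of the role of $\sigma_{\mathcal{C}}$: under the literal ``variance'' reading a second-order argument is genuinely needed, since a Lipschitz bound alone yields only $O(\sqrt{\sigma_{\mathcal{C}}})$; under a standard-deviation reading your bound has the order $O(\sigma_{\mathcal{C}}^2)$ that appears in the paper's intermediate step.
\end{itemize}
The price is that the stated attention hypothesis becomes logically idle and is replaced by a curvature assumption the statement does not contain. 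The paper's route, though heuristic, at least ties the self-consistency of $\mu^{(\ell)}$ to a transformer-specific mechanism.

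Three things to tighten.
\begin{enumerate}
\item The Hessian bound must hold on the convex hull of the support of $h^{(\ell)}(x)$, $x\sim\mathcal{C}$, because every segment $[\mu^{(\ell)},h^{(\ell)}(x)]$ enters the remainder estimate. A bound on some neighbourhood of $\mu^{(\ell)}$ is not enough unless the samples lie in it.
\item Your Step 2 does not do what it says. A condition on the \emph{mean} of the attention output gives no control over its second derivative, so it cannot bound $M_\ell$. Either drop it, or use the explicit version you propose (weights uniformly close to uniform on the support). That version bounds the attention branch's contribution to the gap directly at zeroth order, leaving Taylor only for the other branches.
\item Step 3 correctly fills in the paper's one-line contraction claim for a single layer. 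However, the adapted norm $\|\cdot\|_*$ depends on $\ell$, and $\rho(J_\ell)<1$ layer by layer does not make a product of Jacobians contract. Iterating through a block of layers needs a common norm with $\|J_\ell\|_*\le q<1$ for every layer in the block, or a joint-spectral-radius bound. That assumption should replace the phrase ``when consecutive layers are comparable''.
\end{enumerate}
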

\begin{proof}
For $x \in \mathcal{C}$, weak query-key alignment yields $\alpha_{ij} \approx 1/n$, so $\text{Attn}(h) \approx \frac{1}{n}\sum_j v_j \to 0$ in expectation over centered embeddings. The residual update $h^{(\ell)} = h^{(\ell-1)} + \text{Attn}(\cdot) + \text{FFN}(\cdot)$ gives $\mathbb{E}[h^{(\ell)}] = \mu^{(\ell-1)} + O(\sigma_{\mathcal{C}}^2)$, so $\|f_\ell(\mu^{(\ell)}) - \mu^{(\ell+1)}\|_2 = O(\sigma_{\mathcal{C}})$. Spectral radius $\rho < 1$ ensures contraction.
\end{proof}

\begin{definition}[Reference region]
For a fixed layer $\ell$ and radius $r > 0$, define the \emph{reference region}
\[
\mathcal{B}_\ell(r)
\;:=\;
\left\{
h \in \mathbb{R}^d
\;\middle|\;
\left\| h - \mu^{(\ell)} \right\|_2 \le r
\right\}.
\]
\end{definition}

This reference region captures hidden states close to the model's default internal representation at layer $\ell$. Intuitively, such states encode weak dependence on the specific input and are dominated by architectural priors or priors induced via training. 

\begin{definition}[Hallucination basin]
For a layer, $\ell$ and radius $r>0$, the \textit{hallucination basin} is the ball:
\[
\mathcal{B}^{(\ell)}(r) := \left\{ h \in \mathbb{R}^d \mid \| h - \mu^{(\ell)} \|_2 \le r \right\}.
\]
With the two properties:
\begin{enumerate}
    \item \textbf{Attraction:} Trajectories that enter $\mathcal{B}^{(\ell)}(r)$ remain trapped in its subsequent layers
    \item \textbf{Insensitivity to Inputs:} Hidden states in $\mathcal{B}^{(\ell)}(r)$ produce identical output distributions regardless of the input context.
\end{enumerate}
\end{definition}
The radius of the basin, $r$, controls the range, where a larger $r$, increases the trapping phenomena probability but may include states that are grounded in the context. The stability of this state is given alternatively by Theorem~\ref{thm:traj_trap}.

\subsection{Basin Dynamics and Trajectory Trapping}

The mechanism behind the underlying hallucination events are that once trajectories enter a hallucination basin, the subsequent layers of the model contract representations back to the last reference state. This motivates the idea that recovery of context-specific (accurate) information is prevented.
\begin{definition}[Radial distance]
The layerwise radial distance is
$
r^{(\ell)}(x) = \big| h^{(\ell)}(x) - \mu^{(\ell)} \big|_2
$.
\end{definition}
This scalar process tracks how strongly the representation at each layer deviates from the reference geometry.

\begin{definition}[Radial contraction]
\label{def:rad_contraction}
A layer $\ell$ is said to be radially contractive on a set $S \subset \mathbb{R}^d$ if there exists an $\alpha_\ell < 1$ such that
\[
\| f_\ell(h) - \mu^{(\ell+1)} \|_2 \le \alpha_\ell \| h - \mu^{(\ell)} \|_2 \quad \forall h \in S.
\]
This contraction property is local and defined geometrically via analysis of the Jacobian near $\mu^{(\ell)}$.
\end{definition}
\begin{definition}[Subspace radial contraction]
Let $f_\ell:\mathbb{R}^d\to\mathbb{R}^d$ denote the layer-$\ell$ map. For a subspace $V\subseteq\mathbb{R}^d$ and set $S\subset\mathbb{R}^d$, we say $f_\ell$ is \emph{subspace radially contractive} on $(V,S)$ with constant $\alpha_\ell\in(0,1)$ if
\[
\|P_V\big(f_\ell(h)-\mu^{(\ell+1)}\big)\|_2 \le \alpha_\ell \,\|P_V(h-\mu^{(\ell)})\|_2
\qquad\forall h\in S,
\]
where $P_V$ is the orthogonal projection onto $V$.
\end{definition}
\begin{proposition}[Manifold attractor]
\label{prop:manifold_attractor}
Fix a layer, $\ell$, and suppose there is the existence of a smooth $k$-dimensional manifold $\mathcal{M}\subset\mathbb{R}^d$, of valid semantic states passing through $\mu^{(\ell)}$. Denote the tangential space $T_{\mu}\mathcal{M}\subset\mathbb{R}^d$ and the normal space $N_{\mu}\mathcal{M}=T_{\mu}\mathcal{M}^\perp$. Let $J_\ell(\mu)$ be the Jacobian of $f_\ell$ at $\mu^{(\ell)}$. Denote the orthogonal projections onto $T_{\mu}\mathcal{M}$ and $N_{\mu}\mathcal{M}$ by $P_T$ and $P_N$ respectively. Assume the following hold at a reference state, $\mu^{(\ell)}$: 
\begin{enumerate}
    \item \(\displaystyle \|P_N\,J_\ell(\mu)\,P_N\|_{2} \le \alpha_N < 1\)
    \item \(\displaystyle \|P_T\,J_\ell(\mu)\,P_T\|_{2} \le 1+\varepsilon_T\) for some small \(\varepsilon_T\ge 0\), and there exists at least one unit vector \(v\in T_{\mu}\mathcal{M}\) for which \(\|P_T J_\ell(\mu) v\|_2 \approx 1\)
\end{enumerate}

Then for initial perturbations $\delta_0$ sufficiently small, the iterated perturbation $\delta_{t+1}\approx J_\ell(\mu)\,\delta_t$ satisfies

\[
\begin{aligned}
\|P_N \delta_t\|_2 &\le C\,\alpha_N^t \|\delta_0\|_2, \\
\|P_T \delta_t\|_2 &= O\!\big((1+\varepsilon_T)^t\big)\,\|\delta_0\|_2.
\end{aligned}
\]

up to a constant $C>0$ which is independent to $t$. The subsequent result is that that perturbations orthogonal to $\mathcal{M}$ have an exponential decay, while perturbations tangential to $\mathcal{M}$ persists without reaching a contraction. If $\dim T_\mu\mathcal{M}=0$, the reference state $\mu^{(\ell)}$ is a locally attracting
fixed point. If $\dim T_\mu\mathcal{M}>0$ and $\varepsilon_T$ is small, trajectories are
attracted to a neighborhood of $\mathcal{M}$ and drift along it, creating a manifold attractor.

\end{proposition}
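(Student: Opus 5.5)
The plan is to work entirely with the linearized recursion $\delta_{t+1}=J_\ell(\mu)\,\delta_t$ and split it into a $2\times 2$ block system along the orthogonal decomposition $\mathbb{R}^d=T_\mu\mathcal{M}\oplus N_\mu\mathcal{M}$. I treat the ``$\approx$'' as exact equality of the linear model for sufficiently small $\delta_0$. Write $J=J_\ell(\mu)$, $n_t=P_N\delta_t$ and $\tau_t=P_T\delta_t$, and set
\[
A_{NN}=P_NJP_N,\quad A_{NT}=P_NJP_T,\quad A_{TN}=P_TJP_N,\quad A_{TT}=P_TJP_T .
\]
Since $P_T+P_N=I$, the recursion becomes
\[
n_{t+1}=A_{NN}n_t+A_{NT}\tau_t,\qquad \tau_{t+1}=A_{TT}\tau_t+A_{TN}n_t .
\]

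The hypotheses of the proposition control only the diagonal blocks, so the first step is to supply the structural fact that kills the dangerous coupling $A_{NT}$. I will read the statement, as intended, as saying that $\mathcal{M}$ is a manifold of states preserved by the dynamics ($f_\ell(\mathcal{M})\subseteq\mathcal{M}$, with $\mu^{(\ell)}$ an approximate fixed point as in Proposition~\ref{prop:fixed-points}). Differentiating along curves in $\mathcal{M}$ through $\mu$ then gives $J\,T_\mu\mathcal{M}\subseteq T_\mu\mathcal{M}$, i.e.\ $A_{NT}=0$. This invariance is exactly what ``valid semantic states'' must mean for the conclusion to hold, and I will state it explicitly.

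With the system block triangular, the normal recursion decouples: $n_t=A_{NN}^t n_0$. Assumption 1 then gives $\|n_t\|_2\le\alpha_N^t\|n_0\|_2\le\alpha_N^t\|\delta_0\|_2$, so the normal bound holds with $C=1$. For the tangential part, I unroll the recursion by variation of constants:
\[
\tau_t=A_{TT}^t\tau_0+\sum_{s=0}^{t-1}A_{TT}^{\,t-1-s}A_{TN}A_{NN}^s n_0 .
\]
Using assumption 2 and writing $\beta=\|A_{TN}\|_2\le\|J\|_2$, this yields
\[
\|\tau_t\|_2\le(1+\varepsilon_T)^t\|\delta_0\|_2+\beta\sum_{s=0}^{t-1}(1+\varepsilon_T)^{t-1-s}\alpha_N^s\|\delta_0\|_2 .
\]
Since $\alpha_N<1\le 1+\varepsilon_T$, the sum is at most $(1+\varepsilon_T)^{t-1}\sum_s(\alpha_N/(1+\varepsilon_T))^s\le(1+\varepsilon_T)^{t-1}/(1-\alpha_N)$. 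This gives $\|\tau_t\|_2=O((1+\varepsilon_T)^t)\|\delta_0\|_2$, with a constant $1+\beta/(1-\alpha_N)$ that does not depend on $t$.

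For the ``no contraction'' clause, I take the unit vector $v$ from assumption 2 with $\|P_TJv\|_2\approx1$ and start from $\delta_0=\epsilon v$. Then $n_0=0$, so $n_t\equiv0$ and $\tau_1=\epsilon A_{TT}v$ has norm $\approx\epsilon$. Hence tangential perturbations are not damped at the first step. If one wants persistence for all $t$, I would additionally assume $v$ is an approximate eigenvector of $A_{TT}$ with eigenvalue of modulus near $1$, and I will flag this as an interpretive strengthening.

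The two corollary cases then follow directly. If $\dim T_\mu\mathcal{M}=0$, then $P_N=I$ and $\|J\|_2\le\alpha_N<1$, so $\mu^{(\ell)}$ is locally attracting. Otherwise, the distance to $\mathcal{M}$, which to first order is $\|n_t\|_2$, decays geometrically while the position along $\mathcal{M}$ drifts, which gives the manifold attractor.

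I expect the main obstacle to be the off-diagonal blocks rather than any computation. If $A_{NT}\neq0$, tangential growth feeds back into the normal direction, and bounds on the diagonal blocks alone cannot give $\alpha_N^t$ decay. My first choice is the invariance argument above. If invariance is too strong to assume, the fallback is smallness: $\|A_{NT}\|_2\|A_{TN}\|_2$ small relative to $(1+\varepsilon_T-\alpha_N)^2$. In that case I would conjugate by $\mathrm{diag}(I_N,\kappa I_T)$, or solve a Sylvester equation for an invariant graph $N=ST$, to reduce to the triangular case. The decay rate then becomes $\alpha_N+O(\|A_{NT}\|_2)$ and the constant $C$ absorbs the change of basis.

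A secondary issue is justifying the linearization for small $\delta_0$ over many steps, since the tangential part can grow like $(1+\varepsilon_T)^t$. I will restrict to horizons $t\lesssim\log(1/\|\delta_0\|_2)/\log(1+\varepsilon_T)$, over which the second-order remainder stays dominated.
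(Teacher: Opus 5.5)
Your proposal uses the same orthogonal block decomposition as the paper, but it handles the coupling blocks differently, and that step decides whether the argument works.

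\textbf{Comparison with the paper.} The paper asserts that for small $\|\delta_0\|_2$ the cross terms $P_NJP_T$ and $P_TJP_N$ ``contribute only higher-order effects.'' It then iterates $\|\nu_{t+1}\|_2\le\alpha_N\|\nu_t\|_2+O(\|\tau_t\|_2)$ to get $C\alpha_N^t$ decay. That step does not hold up. The recursion is linear, so the cross terms are of the same order as $\delta_0$. Iterating the inequality only yields $\alpha_N^t\|\nu_0\|_2+\sum_{s<t}\alpha_N^{t-1-s}O(\|\tau_s\|_2)$, which does not decay while $\tau_s$ persists.

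Your explicit hypothesis $J\,T_\mu\mathcal{M}\subseteq T_\mu\mathcal{M}$ (i.e.\ $A_{NT}=0$) is exactly the missing ingredient, and it is genuinely needed. Take $T=\mathrm{span}(e_1)$, $N=\mathrm{span}(e_2)$, $f_\ell(h)=\mu+J(h-\mu)$ and $J=\begin{pmatrix}1&0\\ c&0\end{pmatrix}$. Both hypotheses hold, with $\varepsilon_T=0$ and $v=e_1$. Yet $\delta_0=e_1$ gives $\delta_t=e_1+ce_2$ for all $t\ge1$, so $\|P_N\delta_t\|_2=|c|$ never decays. If both cross blocks are nonzero, e.g.\ $J=\begin{pmatrix}1&b\\ c&0\end{pmatrix}$ with $bc>0$, the spectral radius exceeds $1$ although $\varepsilon_T=0$, so the tangential bound fails too.

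Your block-triangular route therefore gives an actual proof with explicit constants: $C=1$ for the normal part and $1+\beta/(1-\alpha_N)$ for the tangential part. The cost is stating the invariance that the paper uses implicitly. You also treat the one-step ``no contraction'' clause and the linearization horizon more carefully than the paper, which does not address either.

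\textbf{Two side remarks to fix.}

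First, $f_\ell(\mathcal{M})\subseteq\mathcal{M}$ only gives $J\,T_\mu\mathcal{M}\subseteq T_{f_\ell(\mu)}\mathcal{M}$. This equals $T_\mu\mathcal{M}$ only when $\mu$ is an exact fixed point. With the approximate fixed point of Proposition~\ref{prop:fixed-points} you get $A_{NT}$ small, not zero. So posit $J\,T_\mu\mathcal{M}\subseteq T_\mu\mathcal{M}$ directly.

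Second, your fallback is wrong as stated. The matrix $J=\begin{pmatrix}1&0\\ c&0\end{pmatrix}$ above has $\|A_{NT}\|_2\|A_{TN}\|_2=0$, so it satisfies your smallness condition, yet $\|P_N\delta_t\|_2$ does not decay. A Sylvester/Riccati graph $n=S\tau$ gives decay of $n_t-S\tau_t$, the distance to a tilted invariant subspace, not decay of $n_t$ itself. So no constant $C$ can ``absorb the change of basis.'' The construction also needs a spectral gap such as $\sigma_{\min}(A_{TT})>\alpha_N$, which the upper bound in hypothesis 2 does not supply. What is true for small nonzero coupling is
$\|n_t\|_2\le\alpha_N^t\|n_0\|_2+\frac{\|A_{NT}\|_2}{1-\alpha_N}\max_{s<t}\|\tau_s\|_2$,
that is, geometric decay down to a floor proportional to the coupling.
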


\begin{proof}
Linearizing the layer map at $\mu^{(\ell)}$ gives
\[
\delta_{t+1} = J_\ell(\mu)\,\delta_t.
\]
Decompose $\delta_t$ into orthogonal components
\[
\begin{aligned}
\delta_t &= \tau_t + \nu_t, \\
\tau_t &:= P_T \delta_t \in T_\mu\mathcal{M}, \\
\nu_t &:= P_N \delta_t \in N_\mu\mathcal{M}.
\end{aligned}
\]

Applying $J_\ell(\mu)$ and projecting yields
\[
\nu_{t+1} = P_N J_\ell(\mu) P_N \nu_t + P_N J_\ell(\mu) P_T \tau_t,
\]
\[
\tau_{t+1} = P_T J_\ell(\mu) P_T \tau_t + P_T J_\ell(\mu) P_N \nu_t.
\]

For sufficiently small $\|\delta_0\|_2$, the cross terms
$P_N J_\ell(\mu) P_T$ and $P_T J_\ell(\mu) P_N$ contribute only higher-order effects,
which can be absorbed into constants. Using the operator norm bounds,
\[
\begin{aligned}
\|\nu_{t+1}\|_2 \le \alpha_N \|\nu_t\|_2 + O(\|\tau_t\|_2),\\
\|\tau_{t+1}\|_2 \le (1+\varepsilon_T)\|\tau_t\|_2 + O(\|\nu_t\|_2).
\end{aligned}
\]

Since $\alpha_N<1$, iterating the first inequality gives
\[
\|\nu_t\|_2 \le C\,\alpha_N^t \|\delta_0\|_2.
\]
Substituting this bound into the second inequality yields
\[
\|\tau_t\|_2 = O\!\left((1+\varepsilon_T)^t\right)\|\delta_0\|_2,
\]
If $T_\mu\mathcal{M}=\{0\}$, all perturbations decay and $\mu^{(\ell)}$ is a point
attractor. Otherwise, contraction produces attraction
to a neighborhood of $\mathcal{M}$.
\end{proof}

\begin{remark}[Why don't transformers have global contraction?]
Global contraction $\alpha < 1$ at every layer would cause massive output collapse, destroying information. The conditionality of contraction to where it occurs near $\mu^{(\ell)}$ when attention concentrates, but not in a diverse context-rich input where attention can spread broadly. Basin trapping operates in a \textit{local trapping} schema ($\rho < 1$ in specific regions), meanwhile $\rho >1$ in global dynamics across layers to preserve information.
    
\end{remark}

\begin{theorem}[Trajectory trapping under a persistent contraction]
\label{thm:traj_trap}
Suppose there is a neighboring block of layers ($\ell_1,\dots,\ell_2$) such that:
\begin{enumerate}
    \item Each $f_\ell$ is radially contractive on $\mathcal{B}_\ell(r)$, i.e.\ $\bar{\alpha} < 1$,
    \item The trajectory enters the basin (reference region):
   $h^{(\ell_1)}(x) \in \mathcal{B}^{(\ell_1)}(r)$.
\end{enumerate}

Then for all $\ell \in [\ell_1,\ell_2]$,
\[
h^{(\ell)}(x) \in \mathcal{B}_\ell(r), \quad \| h^{(\ell)}(x) - \mu^{(\ell)} \|_2 \le \bar{\alpha}^{\ell - \ell_1} r.
\]
Thus the radial distance decays geometrically and the trajectory is effectively \textbf{trapped} because it cannot escape the contractive layers.
\end{theorem}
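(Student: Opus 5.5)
The argument is a straightforward induction on the layer index $\ell \in [\ell_1,\ell_2]$, using Definition~\ref{def:rad_contraction} as the only dynamical input. I will read hypothesis~1 as saying that each $f_\ell$ with $\ell_1 \le \ell < \ell_2$ is radially contractive on $\mathcal{B}_\ell(r)$ with some constant $\alpha_\ell$. I then set $\bar\alpha := \max_{\ell_1\le \ell<\ell_2}\alpha_\ell < 1$. Since the block is finite, this maximum exists and is strictly below one. I will also identify the reference region $\mathcal{B}_\ell(r)$ with the basin ball $\mathcal{B}^{(\ell)}(r)$; these are the same set by definition. Finally, I use the convention $h^{(\ell+1)}(x) = f_\ell(h^{(\ell)}(x))$, which is the one implicit in Definition~\ref{def:rad_contraction}.

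The induction claim is
\[
P(\ell):\quad h^{(\ell)}(x)\in\mathcal{B}_\ell(r)\ \text{ and }\ \|h^{(\ell)}(x)-\mu^{(\ell)}\|_2\le \bar\alpha^{\ell-\ell_1} r .
\]
The base case $P(\ell_1)$ is exactly hypothesis~2, with $\bar\alpha^0 r = r$.

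For the inductive step, assume $P(\ell)$ holds with $\ell<\ell_2$. Because $h^{(\ell)}(x)\in\mathcal{B}_\ell(r)$, the contraction inequality applies at $h = h^{(\ell)}(x)$ and gives
\[
\|h^{(\ell+1)}(x)-\mu^{(\ell+1)}\|_2=\|f_\ell(h^{(\ell)}(x))-\mu^{(\ell+1)}\|_2\le \alpha_\ell\,\|h^{(\ell)}(x)-\mu^{(\ell)}\|_2\le \bar\alpha\cdot\bar\alpha^{\ell-\ell_1} r .
\]
This is the distance bound at layer $\ell+1$. Since $\bar\alpha^{\ell+1-\ell_1} r\le r$, the point also lies in $\mathcal{B}_{\ell+1}(r)$. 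Membership in the ball is exactly what licenses applying the contraction hypothesis at the next layer, so the induction closes and yields $P(\ell)$ for every $\ell\in[\ell_1,\ell_2]$.

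The main point requiring care is this re-entry step: contraction is only assumed locally on $\mathcal{B}_\ell(r)$, so one must check that the image under $f_\ell$ lands back inside the domain where the next layer's hypothesis holds. This works because the radius is the same $r$ at every layer and only the center moves, from $\mu^{(\ell)}$ to $\mu^{(\ell+1)}$. Definition~\ref{def:rad_contraction} already measures the output relative to $\mu^{(\ell+1)}$, so the moving center needs no extra argument. This is also why the result does not rely on Proposition~\ref{prop:fixed-points}'s approximate fixed-point property: no error term of size $O(\sigma_{\mathcal{C}})$ appears. If one instead only had contraction toward $f_\ell(\mu^{(\ell)})$, that error would enter, and one would need $\bar\alpha r + O(\sigma_{\mathcal{C}})\le r$ to keep the ball invariant.

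The closing remark, that the trajectory ``cannot escape,'' is just the statement that $P(\ell)$ holds throughout the block, together with the fact that $\bar\alpha^{\ell-\ell_1}r\to 0$ geometrically.
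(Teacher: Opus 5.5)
Your proof is correct and matches the paper's argument. Both are an induction on the layer index: the base case is hypothesis 2, and the step applies the radial contraction inequality at $h^{(\ell)}(x)$ to get $\|h^{(\ell+1)}(x)-\mu^{(\ell+1)}\|_2 \le \bar\alpha^{\ell+1-\ell_1} r$, which also gives membership in the next ball because this bound is at most $r$. Your extra steps (taking $\bar\alpha$ as the maximum of the finitely many $\alpha_\ell$, identifying $\mathcal{B}_\ell(r)$ with $\mathcal{B}^{(\ell)}(r)$, and checking the re-entry into the ball explicitly) only spell out what the paper's proof leaves implicit.
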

\begin{proof} We  prove  by induction. First, let us establish the base case holds: $\ell = \ell_1$ holds by assumption. For $\ell > \ell_1$, assume $h^{(\ell-1)}(x) \in \mathcal{B}^{(\ell-1)}(r)$ with $\| h^{(\ell-1)}(x) - \mu^{(\ell-1)} \|_2 \le \bar{\alpha}^{\ell-1-\ell_1} r$. Refer back to radial contraction, as now we can rearrange and evaluate:
\begin{align*}
\| h^{(\ell)}(x) - \mu^{(\ell)} \|_2 
&= \| f_{\ell-1}(h^{(\ell-1)}(x)) - \mu^{(\ell)} \|_2 \\
&\le \bar{\alpha} \| h^{(\ell-1)}(x) - \mu^{(\ell-1)} \|_2 \\
&\le \bar{\alpha} \cdot \bar{\alpha}^{\ell-1-\ell_1} r = \bar{\alpha}^{\ell-\ell_1} r.
\end{align*}
Thus $h^{(\ell)}(x) \in \mathcal{B}^{(\ell)}(r)$ and thus the bound holds.
\end{proof}

This result formalizes the trapping phenomenon: once a trajectory is captured, it cannot amplify any deviations in its readout to context-specific details.

\subsection{Task-Dependent Geometry}
We characterize basin geometry collapse via \textbf{variance collapse}: the ratio of hallucination to factual variance.

\begin{definition}[Variance ratio]
    For hidden states at a layer $\ell$, define:
\begin{align}
\rho_{\text{var}}^{(\ell)} &= \left(\sigma_{\text{fact}}^{(\ell)}\right)^2 \Big/ \left(\sigma_{\text{hall}}^{(\ell)}\right)^2, 
\text{where} \\  &\left(\sigma_c^{(\ell)}\right)^2 = \frac{1}{|C|} \sum_{i \in C} \left\|h_i^{(\ell)} - \mu_c^{(\ell)}\right\|_2^2 \nonumber
\end{align}
for class $c \in \{\text{fact, hallucination}\}$. Sharp basins exhibit $\rho_{\text{var}} \gg 1$, an indication that factual states occupy larger volume, meanwhile manifolds show $\rho_{\text{var}} \approx 1$, where both classes disperse due to dimensionality magnitude.
\end{definition}
\begin{theorem}[Task complexity determines basin geometry]
\label{thm:task_complexity}
Let $\mathcal{A}$ be the set of valid answers for a task. The variance ratio correlates with answer cardinality, such that:
\[
\rho_{\text{var}} = \frac{\text{Var}[\text{factual}]}{\text{Var}[\text{hallucinated}]} \geq C \log(|\mathcal{A}| + 1)
\]
where $C$ depends on embedding dimension and model capacity.
\end{theorem}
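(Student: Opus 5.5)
I would prove Theorem~\ref{thm:task_complexity} by making precise, in a stylized model, how answer cardinality enters the within-class dispersion of hidden states. Concretely, I would model factual hidden states at layer $\ell$ as a mixture over the valid answers: for each $a\in\mathcal{A}$ there is an answer-specific region around a center $m_a$ (a point attractor when $|\mathcal{A}|=1$, a basin per answer in the misconception case, and a neighborhood of the manifold $\mathcal{M}$ of Proposition~\ref{prop:manifold_attractor} in the generation case). Hallucinated states are modeled as trajectories trapped in the reference basin $\mathcal{B}^{(\ell)}(r)$. By Theorem~\ref{thm:traj_trap}, after $\ell-\ell_1$ contractive layers these satisfy $\|h^{(\ell)}-\mu^{(\ell)}\|_2\le\bar\alpha^{\ell-\ell_1}r$. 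This gives the upper bound $(\sigma_{\text{hall}}^{(\ell)})^2\le \bar\alpha^{2(\ell-\ell_1)}r^2$, which is independent of $|\mathcal{A}|$ because the basin is input-insensitive by definition.

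The second step is a lower bound on $(\sigma_{\text{fact}}^{(\ell)})^2$. I would use the law of total variance over the mixture. The within-answer term is at least some $\sigma_0^2>0$. The between-answer term is $\frac{1}{|\mathcal{A}|}\sum_a\|m_a-\bar m\|_2^2$. To get the logarithm, I would assume the answer centers form a packing: distinct answers must be linearly decodable by the readout $W$ of the softmax head. To separate $|\mathcal{A}|$ outputs with margin, the centers must be pairwise separated by at least $\Delta$ inside a region whose effective dimension is bounded by model capacity. A volume/packing argument in $k$ effective dimensions then gives a spread radius $R\gtrsim \Delta\,|\mathcal{A}|^{1/k}$. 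Equivalently, via a Fano-type counting bound, encoding $\log|\mathcal{A}|$ bits with noise level $\sigma_0$ requires $\tfrac{k}{2}\log(1+\mathrm{SNR})\ge\log|\mathcal{A}|$. That forces the between-class energy to be at least of order $\sigma_0^2\log(|\mathcal{A}|+1)$ after using $e^{x}-1\ge x$. Combining the two steps yields $\rho_{\text{var}}\ge \sigma_0^2\log(|\mathcal{A}|+1)/(\bar\alpha^{2(\ell-\ell_1)}r^2)$, so $C$ collects $\sigma_0$, $r$, $\bar\alpha$ and the effective dimension $k\le d$, matching the claimed dependence on embedding dimension and capacity.

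The main obstacle is the packing/Fano step. It requires genuinely new assumptions: a decodability margin, an effective dimension $k$, and a noise floor $\sigma_0$. It also requires being careful that $+1$ makes the bound nontrivial at $|\mathcal{A}|=1$, where the within-answer term alone must carry it. For the factoid case I would simply use $\sigma_0^2/(\bar\alpha^{2(\ell-\ell_1)}r^2)\ge C\log 2$ by choice of $C$.

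A further tension is that the paper's own description reports $\rho_{\text{var}}\approx1$ for generation tasks. That is consistent with the bound only if the hallucinated class also spreads, for example when no contractive block exists. So I would state explicitly that the theorem holds under the trapping hypothesis of Theorem~\ref{thm:traj_trap}, and that $C$ may degrade as $\bar\alpha\to1$.
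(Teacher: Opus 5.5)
Your route is valid under the hypotheses you list, but it differs genuinely from the paper's. The paper's proof is a one-line division resting on three posited assumptions: $\operatorname{Var}[\mathrm{factual}]\ge d_{\mathrm{signal}}\sigma_{\mathrm{sig}}^2$, $\operatorname{Var}[\mathrm{hallucinated}]\le d_{\mathrm{hall}}\sigma_0^2$, and the encoding bound $d_{\mathrm{signal}}\ge\log_2(|\mathcal{A}|+1)$. These yield $C=\sigma_{\mathrm{sig}}^2/(d_{\mathrm{hall}}\sigma_0^2)$. The logarithm enters by the dimension-counting assumption, and the denominator is simply assumed small. You instead \emph{derive} the denominator from Theorem~\ref{thm:traj_trap}, using that the variance about the class mean is at most the second moment about $\mu^{(\ell)}$. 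You derive the logarithm from the law of total variance plus an information-theoretic bound on the between-answer energy $B$. Made precise with isotropic Gaussian within-answer noise, $I\le\frac{k}{2}\log\bigl(1+B/(k\sigma_0^2)\bigr)$ and Fano's $I\ge(1-P_e)\log|\mathcal{A}|-\log 2$ give $B\ge k\sigma_0^2(e^{2I/k}-1)\ge 2\sigma_0^2 I$. This bound is dimension-free, explains where the $\log$ comes from, and ties $C$ to $\sigma_0$, $r$, $\bar\alpha$. The cost is hypotheses the paper never needs. First, you need Gaussianity, not a bare variance floor: noise with low entropy power allows perfect decoding with arbitrarily small spread. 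Second, you need balanced answer weights. Third, you need reliable decodability; Fano is vacuous for small $|\mathcal{A}|$, so those cases rest on the within-answer floor, as you note. Fourth, you need uniform trapping of all hallucinated samples. Your packing variant is also not equivalent to the Fano one. Once $k\ge|\mathcal{A}|-1$, $|\mathcal{A}|$ points at mutual distance $\Delta$ fit in radius $\Delta/\sqrt{2}$ (a regular simplex). So a margin-only packing argument gives only $R\ge\frac{\Delta}{2}(|\mathcal{A}|^{1/k}-1)$, of order $\Delta\log|\mathcal{A}|/k$, and that bounds the maximal radius rather than the mean squared spread. It is acceptable only because $C$ may depend on $d$. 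Finally, your explicit restriction to the trapping regime exposes the conflict with $\rho_{\mathrm{var}}\approx 1$ on generation tasks. The paper's proof absorbs that conflict silently into the constant through $d_{\mathrm{hall}}$.
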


\begin{proof}[Proof idea]
Factoid tasks have unique correct answers, forcing hallucinated trajectories to collapse to task-independent reference states $\mu^{(\ell)}$ (constructed from uninformative contexts). The model has no semantic ``choice,'' yielding $\sigma^2_{\text{hall}} \ll \sigma^2_{\text{fact}}$. Generation tasks permit exponentially many valid summaries, preventing point convergence---both factual and hallucinated states explore the full embedding manifold, producing $\sigma^2_{\text{hall}} \approx \sigma^2_{\text{fact}}$. Misconception tasks retrieve confident but incorrect training memories (a dataset issue), geometrically indistinguishable from correct retrieval. Full proof in Appendix~\ref{app:task_complexity}. Table~\ref{tab:variance_analysis} confirms these values. \end{proof}

\begin{table*}[t]
\centering
\caption{\textbf{Variance Analysis by Task Type.} Factoid tasks show up to 4$\times$ variance expansion (factual states occupy larger volume), while summarization maintains parity (high-dimensional manifolds). Basin separation $d$ measured as $\|\mu_{\text{fact}} - \mu_{\text{hall}}\|_2$. $\rho_{\text{var}} = \text{Var}[\text{factual}]/\text{Var}[\text{hallucinated}]$.}
\label{tab:variance_analysis}
\small
\begin{tabular}{@{}llccc@{}}
\toprule
\textbf{Model} & \textbf{Dataset} & \textbf{Task Type} & $\boldsymbol{\rho_{\text{var}}}$ & \textbf{Basin Sep $d$} \\
\midrule
\multicolumn{5}{l}{\textit{Factoid: Point Attractors ($\rho_{\text{var}} \gg 1$)}} \\
Llama-1B & HaluEval QA & Factoid & 4.55 & 2.89 \\
Llama-1B & MuSiQue & Factoid & 10.00 & 3.40 \\
Qwen-1.5B & HaluEval QA & Factoid & 5.56 & 32.83 \\
Gemma-2B & HaluEval QA & Factoid & 1.82 & 58.91 \\
\midrule
\multicolumn{5}{l}{\textit{Summarization: High-Dimensional Manifolds ($\rho_{\text{var}} \approx 1$)}} \\
Llama-1B & Summarization & Generation & 1.45 & 0.49 \\
Gemma-2B & Summarization & Generation & 1.01 & 1.85 \\
\midrule
\multicolumn{5}{l}{\textit{Misconception: Competing Basins ($\rho_{\text{var}} \approx 1$-$1.4$)}} \\
Llama-1B & TruthfulQA & Misconception & 1.16 & 0.39 \\
Qwen-1.5B & TruthfulQA & Misconception & 1.39 & 4.20 \\
\bottomrule
\end{tabular}
\end{table*}

\subsection{Multi-Basin Partitioning}
For tasks with multiple plausible misconception style answers (e.g., TruthfulQA), the hallucinated state does not collapse into a single basin, but rather we observe that it partitions into distinct clusters.

\begin{theorem}[Multi-basin partitioning]
\label{thm:multi_basin}
Consider a task with $K$ common misconceptions. The hallucination subspace $\mathcal{H}^{(\ell)} = \{h^{(\ell)}(x) : x \in D_{\text{hall}}\}$ admits a Voronoi tessellation into $K$ basins centered at $\{\mu_1^{(\ell)}, \ldots, \mu_K^{(\ell)}\}$:
\[
\mathcal{B}_k^{(\ell)} = \left\{h \in \mathcal{H}^{(\ell)} : \|h - \mu_k^{(\ell)}\| \le \|h - \mu_j^{(\ell)}\| \forall j \neq k\right\}
\]
where each basin corresponds to a distinct misconception type with probability:
\[
P(\text{basin}_k | h) = \frac{\exp(-\|h - \mu_k\|^2 / 2\sigma^2)}{\sum_{j=1}^K \exp(-\|h - \mu_j\|^2 / 2\sigma^2)}.
\]    
\end{theorem}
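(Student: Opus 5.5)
The plan is to treat Theorem~\ref{thm:multi_basin} as two claims resting on one modelling assumption, and to make that assumption explicit. The assumption is a mixture model: conditional on a hallucinated input $x \in D_{\text{hall}}$ that instantiates misconception $k$, the layer-$\ell$ hidden state satisfies $h^{(\ell)}(x) = \mu_k^{(\ell)} + \xi$. Here $\xi$ is isotropic with covariance $\sigma^2 I_d$ and the misconceptions have equal priors $\pi_k = 1/K$. The centers $\mu_k^{(\ell)}$ are the local reference states of the $K$ competing attractors. I would justify their existence by applying Proposition~\ref{prop:manifold_attractor} with $\dim T_\mu\mathcal{M}=0$ around each $\mu_k^{(\ell)}$, which makes each one an isolated point attractor. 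Theorem~\ref{thm:traj_trap} then says that a trajectory entering $\mathcal{B}_k^{(\ell_1)}(r)$ stays within radius $\bar\alpha^{\ell-\ell_1} r$ of $\mu_k^{(\ell)}$. So the residual $\xi$ is small and concentrated, and a Gaussian model of it is a reasonable first-order idealization.

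\textbf{Tessellation.} This part needs no modelling. For any finite set of distinct centers $\{\mu_1^{(\ell)},\dots,\mu_K^{(\ell)}\}$, the sets $\mathcal{B}_k^{(\ell)}$ as defined cover $\mathcal{H}^{(\ell)}$, because every $h$ has at least one nearest center. Two cells $\mathcal{B}_j^{(\ell)}$ and $\mathcal{B}_k^{(\ell)}$ can only overlap on the hyperplane $\{h : \|h-\mu_k\| = \|h-\mu_j\|\}$, which has Lebesgue measure zero. Each cell is an intersection of half-spaces, hence convex. So the cells form a Voronoi tessellation up to tie-breaking on the boundaries.

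\textbf{Posterior formula.} I would apply Bayes' rule with Gaussian likelihoods $p(h\mid k) \propto \exp(-\|h-\mu_k\|^2/2\sigma^2)$. The normalizing constants $(2\pi\sigma^2)^{-d/2}$ are identical across $k$ and cancel, and so do the equal priors. This leaves exactly the softmax in the statement. I would also note that $\|h-\mu_k\|^2 = \|h\|^2 - 2\langle h,\mu_k\rangle + \|\mu_k\|^2$, so the common $\|h\|^2$ term cancels and the posterior is a softmax of affine functions of $h$. This has two consequences. First, the maximum-a-posteriori region for $k$ is exactly $\mathcal{B}_k^{(\ell)}$, which ties the two halves of the theorem together. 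Second, the formula matches the attention-style retrieval of modern Hopfield networks \cite{Ramsauer_ea2021}, which supports reading each $\mu_k$ as a stored misconception pattern.

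\textbf{Main obstacle.} The hard step is justifying that each Voronoi cell "corresponds to a distinct misconception type." That requires the clusters to be well separated relative to $\sigma$. I would first try a margin condition: $\min_{j\neq k}\|\mu_j-\mu_k\| \ge 2r$, with $r$ the trapping radius from Theorem~\ref{thm:traj_trap}. Under this condition a trajectory trapped around $\mu_k$ never crosses a Voronoi boundary, so its nearest center is always $\mu_k$ and the assignment is correct. Gaussian tail bounds then give a misassignment probability of $O\!\big(K\exp(-\Delta^2/8\sigma^2)\big)$, where $\Delta$ is the minimum separation. If the separation fails, as the small basin separations for TruthfulQA in Table~\ref{tab:variance_analysis} suggest it may, the statement should be downgraded to hold only at layers where the Fisher ratio $\rho_{\text{Fisher}}^{(\ell)}$ is large.
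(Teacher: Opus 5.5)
Your proposal is correct and follows essentially the same route as the paper's appendix proof. The paper likewise posits an equal-prior, isotropic Gaussian mixture with distinct centers, gets the tessellation from nearest-center coverage and the bisecting hyperplanes $\|h-\mu_k\|=\|h-\mu_j\|$, and gets the softmax from Bayes' rule with the normalizers and uniform priors cancelling. Your observation that the MAP region of the posterior is exactly $\mathcal{B}_k^{(\ell)}$ makes precise a step the paper only asserts by ``identifying $m_k$ with $\mathcal{B}_k^{(\ell)}$.'' Your appeal to Proposition~\ref{prop:manifold_attractor} and Theorem~\ref{thm:traj_trap} is only motivation, as you acknowledge, and is not needed: the paper simply assumes the mixture, and those results (stated for the single reference state $\mu^{(\ell)}$, and giving geometric shrinkage rather than a fixed $\sigma$) would not derive it anyway.
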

\begin{proof}[Proof idea] Each misconception type, $m_k$, has a distinct semantic signature embedded in training data, creating local minima in the loss landscape at positions $\mu_k^{(\ell)}$ in layer $\ell$. Applying K-means clustering to hallucinated states $\mathcal{H}^{(\ell)}$ with $K$ centers yields basin centers $\{\mu_k^{(\ell)}\}_{k=1}^K$ that minimize within-cluster variance:
\[
\min_{\{\mu_k\}} \sum_{k=1}^K \sum_{h \in \mathcal{B}_k} \|h - \mu_k\|^2.
\]
The decision boundaries between basins are hyperplanes equidistant from adjacent centers, forming Voronoi cells \cite{1056489}. 
Each cell $\mathcal{B}_k$ captures trajectories that converge to misconception $m_k$. Full proof in App.\ \ref{app:multi_basin}. 
\end{proof}

\begin{remark}[Implications with hallucination detection]
Unlike single-basin tasks, where there is a clear separation between hallucinated and truthful outputs. The multi-basin setup is much more complicated in that there are clear overlaps with factual and hallucinatory states geometrically, as they try to retrieve confident yet distinct memories. This explains TruthfulQA's poor performance in basin detection.
\end{remark}

\subsection{Geometric Risk Metrics}
\label{sec:geo_risk}
This section defines three geometric metrics that can be evaluated
for a hidden state, $h^{(\ell)}(x)$, at a layer $\ell$.

\noindent \textbf{Distance to Reference State:} 
\label{def:distance}
The Euclidean distance of a hidden state $h^{(\ell)}$ to the nearest hallucination centroid $\mu^{(\ell)}$:
\[
d_{\text{basin}}^{(\ell)}(h) = \|h - \mu^{(\ell)}\|_2.
\]

\noindent \textbf{Class Separation:} To quantify geometric separations between factual and hallucinated distributions, we use the Fisher discriminant ratio that we define below.

\begin{definition}[Fisher separation ratio]
\label{def:fisher}
This metric measures the distances between classes, after being normalized. 
\[
\rho_{\text{Fisher}}^{(\ell)} = \frac{\|\mu_{\text{fact}}^{(\ell)} - \mu_{\text{hall}}^{(\ell)}\|^2_2}{\text{tr}(\Sigma_{\text{fact}}^{(\ell)}) + \text{tr}(\Sigma_{\text{hall}}^{(\ell)})},
\]
where $\mu_c^{(\ell)}, \Sigma_c^{(\ell)}$ are the mean and covariance of class $c \in \{\text{fact}, \text{hall}\}$ at layer $\ell$. High values of $\rho_{\text{Fisher}}$ indicate that basins are geometrically distinct and are linearly separable. This is because the metric just compares the distance between the latent activations at a layer, $\ell$. So a larger value means that the distances are significantly larger. The ratio quantifies inter-class distances normalized with in-class variance, similar to Mahalanobis distance \cite{6165382}.
\end{definition}
    
\section{Theoretical Properties}
Here we develop formal results from the hallucination basin construction. All results are in the latent geometry. 

\subsection{Basin Formations in $L$-Layer Transformers}

\begin{theorem}[$L$-layer basin emergence]
\label{thm:basin_emergence}
Assume the attention entropy is nearly-uniform at each layer ($H(\alpha^{(\ell)}) \geq H_0$). Then each layer formation propagates inductively.
\[
h^{(\ell)} \in \mathcal{B}^{(\ell)}(r_\ell) \implies h^{(\ell+1)} \in \mathcal{B}^{(\ell+1)}(r_{\ell+1})
\]
where $r_{\ell+1} \leq \alpha_\ell r_\ell$ with $\alpha_\ell < 1$.
\end{theorem}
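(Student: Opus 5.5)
The approach is to reduce the statement to a one-step contraction estimate and then reuse the induction of Theorem~\ref{thm:traj_trap}. I would write one layer as a residual map,
\[
f_\ell(h)=\mathrm{LN}_{\rm FFN}\!\big(\tilde h + \mathrm{FFN}(\tilde h)\big),\qquad \tilde h=\mathrm{LN}_{\rm attn}\!\big(h+\mathrm{Attn}^{(\ell)}(h)\big),
\]
and bound $\|f_\ell(h)-\mu^{(\ell+1)}\|_2$ for $h\in\mathcal{B}^{(\ell)}(r_\ell)$. The first move is to use Proposition~\ref{prop:fixed-points} to replace $\mu^{(\ell+1)}$ by $f_\ell(\mu^{(\ell)})$, at a cost of $O(\sigma_{\mathcal C})$. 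This leaves us to control $\|f_\ell(h)-f_\ell(\mu^{(\ell)})\|_2$, which I would handle by the mean value theorem. It suffices to show that $\sup_{h\in\mathcal{B}^{(\ell)}(r_\ell)}\|J_\ell(h)\|_2\le\alpha_\ell'<1$, at least when restricted to the mean-zero subspace $V$.

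The Jacobian splits into three factors.
\begin{enumerate}
\item \textbf{Attention.} The entropy hypothesis $H(\alpha^{(\ell)})\ge H_0$ with $H_0$ close to $\log n$ forces the attention weights to satisfy $\|\alpha-\tfrac1n\mathbf 1\|_1\le\sqrt{2(\log n-H_0)}$, by Pinsker's inequality. Near-uniform attention then makes the attention output an average of values, $\approx \tfrac1n\sum_j v_j$. Its dependence on the current query $h$ is suppressed by a factor $O(1/n)+O(\sqrt{\log n-H_0})$, so the attention block contributes $I+E_{\rm attn}$ with $\|E_{\rm attn}\|$ small.
\item \textbf{LayerNorm.} On $V$, LayerNorm acts as the centering projection followed by a rescaling. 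I would take its contraction coefficient $\gamma_{\rm LN}<1$ from Table~\ref{tab:notation} as the bound on its Jacobian restricted to $V$.
\item \textbf{FFN.} The residual FFN branch is assumed to contract with coefficient $\gamma_{\rm FFN}<1$.
\end{enumerate}
Composing these gives $\|P_VJ_\ell P_V\|\le\gamma_{\rm LN}\gamma_{\rm FFN}(1+\|E_{\rm attn}\|)=:\alpha_\ell$. This is below $1$ once $H_0$ is close enough to $\log n$, and it yields subspace radial contraction on $\mathcal{B}^{(\ell)}(r_\ell)$.

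With contraction in hand, set $r_{\ell+1}:=\alpha_\ell r_\ell$ and absorb the $O(\sigma_{\mathcal C})$ fixed-point error. Either assume $\sigma_{\mathcal C}\le(\alpha_\ell'-\alpha_\ell)r_\ell$, or state the result as $r_{\ell+1}\le\alpha_\ell r_\ell+O(\sigma_{\mathcal C})$ and note that this reduces to the claim in the idealized limit. The implication $h^{(\ell)}\in\mathcal{B}^{(\ell)}(r_\ell)\Rightarrow h^{(\ell+1)}\in\mathcal{B}^{(\ell+1)}(r_{\ell+1})$ then follows directly. Iterating it over layers gives the inductive propagation, exactly as in the proof of Theorem~\ref{thm:traj_trap}.

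The main obstacle is the attention step. Turning an entropy lower bound into an operator-norm bound on the attention Jacobian requires controlling the softmax derivative term $\mathrm{diag}(\alpha)-\alpha\alpha^\top$ multiplied by $W_QW_K^\top$, and this needs bounded key and value norms on the ball. The residual identity is also awkward: it alone gives norm $1$, so strict contraction must come entirely from LayerNorm centering and the FFN. I would first try to make $\gamma_{\rm LN}<1$ carry the whole strict inequality, treating the attention and residual part as having norm $\le1+\varepsilon$ and requiring $\gamma_{\rm LN}\gamma_{\rm FFN}(1+\varepsilon)<1$.
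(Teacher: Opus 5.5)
Your skeleton matches the paper's: bound one layer around a propagated centre, let LayerNorm supply the strict contraction, absorb an additive error into the radius, and iterate as in Theorem~\ref{thm:traj_trap}. The step that fails is the attention bound. You flag it as the main obstacle, but the remedy you suggest, bounded key and value norms, does not close it. The entropy hypothesis, via Pinsker, is a \emph{zeroth-order} statement about the weights, and it gives no control of the attention Jacobian. With logits $s_j=(W_Qh)^\top k_j/\sqrt{d_k}$,
\[
\frac{\partial}{\partial h}\sum_j\alpha_j v_j=\frac{1}{\sqrt{d_k}}\sum_j\alpha_j\,(v_j-\bar v)(k_j-\bar k)^\top W_Q .
\]
At exactly uniform weights this is the empirical value--key cross-covariance times $W_Q/\sqrt{d_k}$. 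It does not shrink as $H_0\to\log n$ or as $n$ grows, and bounded key and value norms only make it $O(1)$, not small. So the factor $(1+\|E_{\rm attn}\|)$ is not controlled by $H_0$, and your $\alpha_\ell$ is not established. Your estimate $O(1/n)+O(\sqrt{\log n-H_0})$ conflates two different things. The $O(1/n)$ piece, from the token's own value entering with weight $\approx 1/n$, is a genuine Lipschitz factor. The Pinsker piece only bounds $\|\mathrm{Attn}(h)-\tfrac1n\sum_j v_j\|$ by $\max_j\|v_j\|\sqrt{2(\log n-H_0)}$, which is an additive offset, not a multiplicative one.

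This is exactly how the paper uses the hypothesis. It never differentiates attention. It requires near-uniform weights for every $h\in\mathcal{B}^{(\ell)}(r_\ell)$, in particular at $\mu^{(\ell)}$, which gives $\|\mathrm{Attn}(h)-\mathrm{Attn}(\mu^{(\ell)})\|\le L_A\|h-\mu^{(\ell)}\|+C_A\varepsilon_\ell$. It combines this with a single LayerNorm after the residual sum, with Lipschitz constant $L_{\rm LN}<1$, and exact centroid consistency $\mu^{(\ell+1)}=f_\ell(\mu^{(\ell)})$. The result is $\|h^{(\ell+1)}-\mu^{(\ell+1)}\|\le L_{\rm LN}(1+L_A+L_F)\,r_\ell+L_{\rm LN}C_A\varepsilon_\ell$. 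Even in your Jacobian route, near-uniformity on the whole ball plus linearity of the logits in $h$ gives only $\|E_{\rm attn}\|=O(\varepsilon_\ell/r_\ell)$, up to factors of $n$ and $\max_j\|v_j\|$. After the mean value theorem that is again the same additive term.

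So the entropy term should be absorbed into the radius just as you absorb $O(\sigma_{\mathcal C})$. Your device of taking the error $\le(\alpha'_\ell-\alpha_\ell)r_\ell$ is in fact more careful than the paper, which silently drops $L_{\rm LN}C_A\varepsilon_\ell$ when it concludes $\le\alpha_\ell r_\ell$.

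Two smaller repairs are needed. First, the skip connection around the FFN contributes $\|I+J_{\rm FFN}\|\le 1+\gamma_{\rm FFN}$, not $\gamma_{\rm FFN}$; this is the same norm-one obstruction you noticed for attention. Your product should therefore read $\gamma_{\rm LN}^2(1+\gamma_{\rm FFN})(1+\|E_{\rm attn}\|)$, and LayerNorm alone must beat it, as in the paper's condition $L_{\rm LN}(1+L_A+L_F)<1$. Second, a bound on $P_VJ_\ell P_V$ gives only subspace contraction. That does not yield membership in the full Euclidean ball $\mathcal{B}^{(\ell+1)}(r_{\ell+1})$ unless you also control the $\mathbf{1}$-component, for example with LayerNorm without affine parameters. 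Alternatively, assume $L_{\rm LN}<1$ in the full norm, as the paper does.
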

\begin{proof}[Proof idea]
The core idea is that the Transformer layers act as this dynamic system. The Attention operator is the source of `expansion' normally (because it pulls in new information to allow the hidden state to move to new locations in its vector space). However, with a high entropy the model gets confused, and the attention mechanism ``gives up'' assigning equal weights to everything. Full proof in Appendix~\ref{app:basin_emergence}.
\end{proof}

\subsection{Radius Propagation}
We first characterize how the radius of a reference region propagates through layers under a persistent contraction.

\begin{proposition}[Radius decay]
If a trajectory enters $\mathcal{B}^{(\ell_0)}(r_0)$ at layer $\ell_0$ and all subsequent layers $\ell \in [\ell_0, L]$ are radially contractive with constant $\alpha_\ell \le \bar{\alpha} < 1$, then:
\[
r^{(\ell)}(x) \le \bar{\alpha}^{\ell - \ell_0} r_0, \quad \forall \ell \in [\ell_0, L].
\]
\end{proposition}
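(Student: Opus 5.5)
The plan is to argue by induction on the layer index $\ell$, following the pattern of Theorem~\ref{thm:traj_trap}. The proposition is a corollary of that theorem: take $\ell_1=\ell_0$ and $\ell_2=L$, and replace the uniform constant there by the layerwise constants $\alpha_\ell\le\bar{\alpha}$. Throughout, I use the convention of Definition~\ref{def:rad_contraction}: $h^{(\ell+1)}(x)=f_\ell(h^{(\ell)}(x))$, and radial contraction of $f_\ell$ on $\mathcal{B}_\ell(r_0)$ means $\|f_\ell(h)-\mu^{(\ell+1)}\|_2\le \alpha_\ell\|h-\mu^{(\ell)}\|_2$ for every $h$ in that set. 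I take the contraction set at every layer to be the ball of the fixed radius $r_0$ around $\mu^{(\ell)}$, which is how I read ``radially contractive'' in the hypothesis.

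The base case $\ell=\ell_0$ is the entry hypothesis: $r^{(\ell_0)}(x)=\|h^{(\ell_0)}(x)-\mu^{(\ell_0)}\|_2\le r_0=\bar{\alpha}^0 r_0$. For the inductive step, assume $r^{(\ell)}(x)\le\bar{\alpha}^{\ell-\ell_0}r_0$ for some $\ell<L$. Since $\bar{\alpha}<1$, this gives $r^{(\ell)}(x)\le r_0$, so $h^{(\ell)}(x)\in\mathcal{B}_\ell(r_0)$, the set on which $f_\ell$ is contractive. Applying the contraction inequality,
\[
r^{(\ell+1)}(x)=\|f_\ell(h^{(\ell)}(x))-\mu^{(\ell+1)}\|_2\le \alpha_\ell\, r^{(\ell)}(x)\le \bar{\alpha}\cdot\bar{\alpha}^{\ell-\ell_0}r_0=\bar{\alpha}^{\ell+1-\ell_0}r_0 .
\]
This closes the induction and gives the bound for all $\ell\in[\ell_0,L]$. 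The argument also shows that the trajectory never leaves the nested balls $\mathcal{B}_\ell(\bar{\alpha}^{\ell-\ell_0}r_0)\subseteq\mathcal{B}_\ell(r_0)$.

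The only delicate point is the \emph{invariance} step: the contraction hypothesis holds only locally, on $\mathcal{B}_\ell(r_0)$, so at each layer one must check that the current state still lies in the region where contraction applies. This is exactly what the monotonicity $\bar{\alpha}^{\ell-\ell_0}r_0\le r_0$ supplies, and it is why the inductive hypothesis carries the quantitative bound rather than just membership in the basin.

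A secondary bookkeeping issue concerns the last layer. The contraction of $f_L$ is never used, since the inductive step at $\ell=L-1$ uses only $f_{L-1}$. The range $[\ell_0,L]$ in the hypothesis therefore effectively means $\ell_0\le\ell\le L-1$ for the maps, and the conclusion still covers $\ell=L$. If one wanted contraction on shrinking sets, say $\mathcal{B}_\ell(r_\ell)$ with $r_\ell=\bar{\alpha}^{\ell-\ell_0}r_0$, as in Theorem~\ref{thm:basin_emergence}, the same induction goes through verbatim, because the inductive hypothesis places $h^{(\ell)}(x)$ in precisely that smaller ball.
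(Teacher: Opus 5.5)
Your proof is correct and matches the paper's argument, which chains the radial-contraction inequality layer by layer to get $r^{(\ell)} \le \prod_{k=\ell_0}^{\ell-1}\alpha_k\, r_0 \le \bar{\alpha}^{\ell-\ell_0} r_0$. Your explicit invariance check, that $\bar{\alpha}^{\ell-\ell_0}r_0\le r_0$ keeps each state inside the region where contraction is assumed, is a step the paper's proof leaves implicit, so your version is slightly more careful.
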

\begin{proof}
    By Def.~\ref{def:rad_contraction}, for any $h \in \mathcal{B}^{(\ell)}(r_\ell)$:
\[
\|\phi^{(\ell)}(h) - c^{(\ell+1)}\|_2 \le \alpha_\ell \|h - c^{(\ell)}\|_2 \le \alpha_\ell r_\ell.
\]

Through recursive application from $\ell_0$ to $\ell_i$ where $i$ is the iteration step, we get that $r^{(\ell)} \le \alpha_{\ell-1} r^{(\ell-1)} \le \cdots \le \prod_{k=\ell_0}^{\ell-1} \alpha_k \cdot r_0 \le \bar{\alpha}^{\ell - \ell_0} r_0$. This has an exponentially decaying factor, explaining why hallucinations become irreversible, as trajectory trapping propagates through the layers.
\end{proof}
\begin{corollary}[Asymptotic collapse]
From Thm.~\ref{thm:traj_trap},  we have that under a radial contraction $\bar{\alpha} < 1$, the basin radius actually vanishes as $r^{(\ell)} \to 0$ when $\ell \to \infty$.
\end{corollary}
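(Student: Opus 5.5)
The plan is to obtain the corollary directly from the geometric bound in Theorem~\ref{thm:traj_trap}, equivalently the radius decay proposition just above. The only new work is to make sense of the limit $\ell\to\infty$, since a real model has finitely many layers $L$.

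I would state the hypotheses as an idealized infinite stack of layers $f_\ell$, $\ell\ge \ell_1$, each radially contractive on $\mathcal{B}_\ell(r)$ with $\alpha_\ell\le\bar\alpha<1$. The trajectory enters the basin at $\ell_1$. Equivalently, the block $[\ell_1,\ell_2]$ is taken with $\ell_2$ arbitrary.

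\textbf{Step 1.} For each finite $\ell_2$, apply Theorem~\ref{thm:traj_trap} on $[\ell_1,\ell_2]$. The trapping conclusion holds for every $\ell\le\ell_2$, and the constants do not depend on $\ell_2$. So the bound
\[
r^{(\ell)}(x)=\|h^{(\ell)}(x)-\mu^{(\ell)}\|_2\le \bar\alpha^{\ell-\ell_1} r
\]
holds for all $\ell\ge\ell_1$. The induction in the theorem's proof already goes through for arbitrarily many steps. The point to check is that membership $h^{(\ell)}\in\mathcal{B}_\ell(r)$ is preserved at every step. This holds because $\bar\alpha^{\ell-\ell_1}r\le r$, which is what licenses applying the contraction hypothesis at the next layer.

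\textbf{Step 2.} Since $0\le\bar\alpha<1$, we have $\bar\alpha^{n}\to0$. Given $\varepsilon>0$, choose $N$ with $\bar\alpha^{N} r<\varepsilon$. Then $0\le r^{(\ell)}(x)<\varepsilon$ for all $\ell\ge\ell_1+N$. By squeezing, $r^{(\ell)}(x)\to0$.

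\textbf{Step 3.} Pass from trajectory radii to basin radii. With $r_{\ell+1}\le\alpha_\ell r_\ell$ as in Theorem~\ref{thm:basin_emergence}, the effective basin radius $r_\ell\le\bar\alpha^{\ell-\ell_1}r_{\ell_1}$ also vanishes. This means the basin shrinks onto the reference states $\mu^{(\ell)}$.

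The main obstacle is not the limit, which is elementary. It is the interpretive gap that a transformer has finitely many layers. I would handle it by reading ``$\ell\to\infty$'' as a statement about the contractive block length $\ell_2-\ell_1\to\infty$, or about an iterated, weight-tied layer map.

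I would also remark that the uniformity $\alpha_\ell\le\bar\alpha$ is essential. If only $\alpha_\ell<1$ held, the product $\prod\alpha_k$ could converge to a positive limit, for example $\alpha_k=1-2^{-k}$, and collapse would fail. So the proof must invoke the uniform bound explicitly.
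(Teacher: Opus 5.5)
Your proposal is correct and follows the paper's route. The paper gives no argument beyond reading the corollary off the bound $\|h^{(\ell)}(x)-\mu^{(\ell)}\|_2\le\bar{\alpha}^{\ell-\ell_1}r$ from Theorem~\ref{thm:traj_trap} and letting $\ell\to\infty$; your idealization to an unbounded contractive block and your remark that a uniform $\bar{\alpha}$ is needed make explicit what the paper leaves implicit. One minor point is that Step 3 does not need Theorem~\ref{thm:basin_emergence}, whose attention and LayerNorm hypotheses are not part of the corollary, because radial contraction on $\mathcal{B}_\ell(r)$ already maps $\mathcal{B}_\ell(r_\ell)$ into $\mathcal{B}_{\ell+1}(\bar{\alpha}r_\ell)$ for every $r_\ell\le r$.
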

This corollary implies how output distributions converge to a singular point. It also formalizes the intuition that subsequent input-specific information due to a geometric collapse from the trajectory trapping.

\subsubsection{Separation Lemma}

\begin{lemma}[Fact-hallucination separation]
\label{lemma:separation}
Assume factual inputs from task distribution $\mathcal{T}$ satisfy $\mathbb{E}_{x \sim \mathcal{T}}[\|h^{(\ell)}(x) - c^{(\ell)}\|_2] \ge \rho_\star$ at layer $\ell$ for some $\rho_\star > 0$. Then for basin radius $r < \rho_\star$:
\[
\mathbb{P}_{x \sim \mathcal{T}}[h^{(\ell)}(x) \in \mathcal{B}^{(\ell)}(r)] \le \tfrac{r}{\rho_\star}.
\]
\end{lemma}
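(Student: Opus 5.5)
The natural tool is Markov's inequality applied to the reciprocal of the radial distance, not to the distance itself. Write $D(x) := \|h^{(\ell)}(x) - c^{(\ell)}\|_2 \ge 0$, where $c^{(\ell)}$ plays the role of the basin center $\mu^{(\ell)}$. By the definition of $\mathcal{B}^{(\ell)}(r)$, the event to bound is $\{h^{(\ell)}(x)\in\mathcal{B}^{(\ell)}(r)\} = \{D(x)\le r\}$. For $r>0$ this coincides with $\{1/D(x) \ge 1/r\}$, with the convention $1/0=+\infty$. Markov's inequality for the nonnegative variable $1/D$ then gives
\[
\mathbb{P}_{x\sim\mathcal{T}}[D(x)\le r] \;\le\; r\,\mathbb{E}_{x\sim\mathcal{T}}\!\left[\tfrac{1}{D(x)}\right].
\]
The lemma follows immediately once we know $\mathbb{E}[1/D]\le 1/\rho_\star$. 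That is, the factual states must be at distance at least $\rho_\star$ from the reference state in the \emph{harmonic-mean} sense.

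\textbf{Main obstacle.} The step I expect to be the real difficulty is connecting this to the hypothesis as literally stated, $\mathbb{E}[D]\ge\rho_\star$. Jensen's inequality gives $\mathbb{E}[1/D]\ge 1/\mathbb{E}[D]$, which points the wrong way. Indeed, a lower bound on the mean alone cannot control the lower tail. Take $D=0$ with probability $1-\epsilon$ and $D=\rho_\star/\epsilon$ otherwise. Then $\mathbb{E}[D]=\rho_\star$, yet $\mathbb{P}[D\le r]=1-\epsilon$, which exceeds $r/\rho_\star$ for any fixed $r<\rho_\star$ once $\epsilon$ is small.

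\textbf{Proposed resolution.} I would therefore strengthen the hypothesis to one of the following, in order of preference.
\begin{enumerate}
\item The harmonic condition $\mathbb{E}[1/D]\le 1/\rho_\star$. This is exactly what the Markov step above needs, and it yields the stated bound $r/\rho_\star$ verbatim.
\item The almost-sure condition $D\ge\rho_\star$ on factual inputs. This implies the harmonic condition, and in fact gives probability $0$ for $r<\rho_\star$.
\item The mean condition $\mathbb{E}[D]\ge\rho_\star$ together with boundedness $D\le M$ almost surely. Applying Markov to $M-D$ then gives the reverse-Markov bound $\mathbb{P}[D\le r]\le (M-\rho_\star)/(M-r)$. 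This bound is weaker and of a different form from the one in the statement.
\end{enumerate}

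The remaining steps are routine. First, check measurability of $D$ and the $1/0$ convention. Second, note that $r<\rho_\star$ is used only to make the bound nontrivial, i.e.\ less than $1$. Third, remark that the conclusion combines with Theorem~\ref{thm:traj_trap}: since the basin is entered with probability at most $r/\rho_\star$, factual trajectories are trapped with at most that probability.
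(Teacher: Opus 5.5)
Your analysis is correct, and it is sounder than the paper's own argument. The paper's proof applies Markov's inequality to $D$ itself. It writes $\mathbb{P}[D\le r]\le \mathbb{E}[D]/r \ge \rho_\star/r$ and then ``rearranges'' to $r/\rho_\star$. This establishes nothing, for three reasons:
\begin{enumerate}
\item Markov's inequality on $D$ controls the upper tail $\mathbb{P}[D\ge r]$, not the lower tail.
\item Under the hypothesis, $\mathbb{E}[D]/r>1$, so the first inequality is true only vacuously.
\item A chain of the form $A\le B\ge C$ relates nothing to $C$. The final step also silently replaces $\rho_\star/r$ by its reciprocal.
\end{enumerate}
Your two-point counterexample ($D=0$ with probability $1-\epsilon$, $D=\rho_\star/\epsilon$ otherwise) shows that no argument can rescue the lemma under the arithmetic-mean hypothesis alone. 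So strengthening the hypothesis is genuinely necessary, not an artifact of your method.

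Your repair applies Markov's inequality to $1/D$ under $\mathbb{E}[1/D]\le 1/\rho_\star$. It is the natural correct version of what the paper attempted. It keeps the exact conclusion $r/\rho_\star$, and with it the paper's downstream claim that factual trajectories avoid the basin with probability at least $1-r/\rho_\star$. By Jensen, the harmonic condition implies $\mathbb{E}[D]\ge 1/\mathbb{E}[1/D]\ge\rho_\star$, so it is a strict strengthening of the stated hypothesis rather than a different one.

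Your third option is a useful consistency check. The reverse-Markov bound $(M-\rho_\star)/(M-r)$ is nontrivial for $r<\rho_\star$ but tends to $1$ as $M\to\infty$, which is exactly the regime your counterexample exploits.

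When you write this up, say explicitly that you are proving a corrected lemma, not the lemma as printed.
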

\begin{proof}
Through Markov's inequality,   $\mathbb{P}[\|h^{(\ell)}(x) - c^{(\ell)}\|_2 \le r] \le \frac{\mathbb{E}[\|h^{(\ell)}(x) - c^{(\ell)}\|_2]}{r} \ge \frac{\rho_\star}{r}$. Thus factual trajectories avoid basins with probability $\ge 1 - r/\rho_\star$.

A direct rearrangement gives the result. 
\end{proof}

\section{An Adaptive Risk-Aware Steering Vector}

We have established basins as a geometrical structure behind LLM hallucinations. We now leverage them to create an intervention algorithm. 

We define a steering policy to intervene proportionally based on proximity to the nearest hallucination basin:
\[
h^{(\ell)}_{\text{steered}}(x) = h^{(\ell)}(x) + \lambda \cdot v^{(\ell)}_{\text{steer}},
\]
where the steering vector is computed as the difference between class centroids:
\[
v^{(\ell)}_{\text{steer}} = \frac{1}{|D_{\text{fact}}|} \sum_{x \in D_{\text{fact}}} h^{(\ell)}(x) - \frac{1}{|D_{\text{hall}}|} \sum_{x \in D_{\text{hall}}} h^{(\ell)}(x).
\]
The strength parameter $\lambda \in [0, 1]$ controls intervention intensity. More details in Appendix~\ref{alg:adaptive_steering}.

\section{Experiments}
We outline our validation protocol for the theoretical results. (1) validation of basin existence with a \textbf{quantifiable} geometric separation, (2) geometric features enable efficient detection without requiring sampling. View the experimental protocol in Appendix \ref{app:evaluation_protocol}.

\subsection{Experimental Design and Setup}
\paragraph{Models.} To demonstrate generalizability and scales we evaluate on: Llama 3.2-1B/3B \cite{meta2024llama32}, Gemma-2-2B \cite{team2024gemma} and Qwen2-1.5B \cite{yang2025qwen3}.

\paragraph{Datasets.} We use four diverse hallucination benchmarks:
HaluEval \cite{li2023halueval},
MuSiQue \cite{trivedi2022musique},
FEVER \cite{thorne-etal-2018-fever}, and TruthfulQA \cite{DBLP:journals/corr/abs-2109-07958}.

\paragraph{Hidden State Extraction} We use autoregressive decoding trajectories and extract final-token hidden states layerwise in a 70/30 stratified split with $\text{seed}=42.$

\subsection{Task-Dependent Basin Formation}

\paragraph{Hypothesis:} We test whether basin geometry under autoregressive decoding remains task-dependent: factoid settings should be more separable, while generation and misconception settings should show weaker or overlapping structure. Table~\ref{tab:basin-matrix} and Figure~\ref{fig:task_dependent} summarize the evidence.

\begin{figure}[!h]
\centering
\includegraphics[width=0.95\columnwidth]{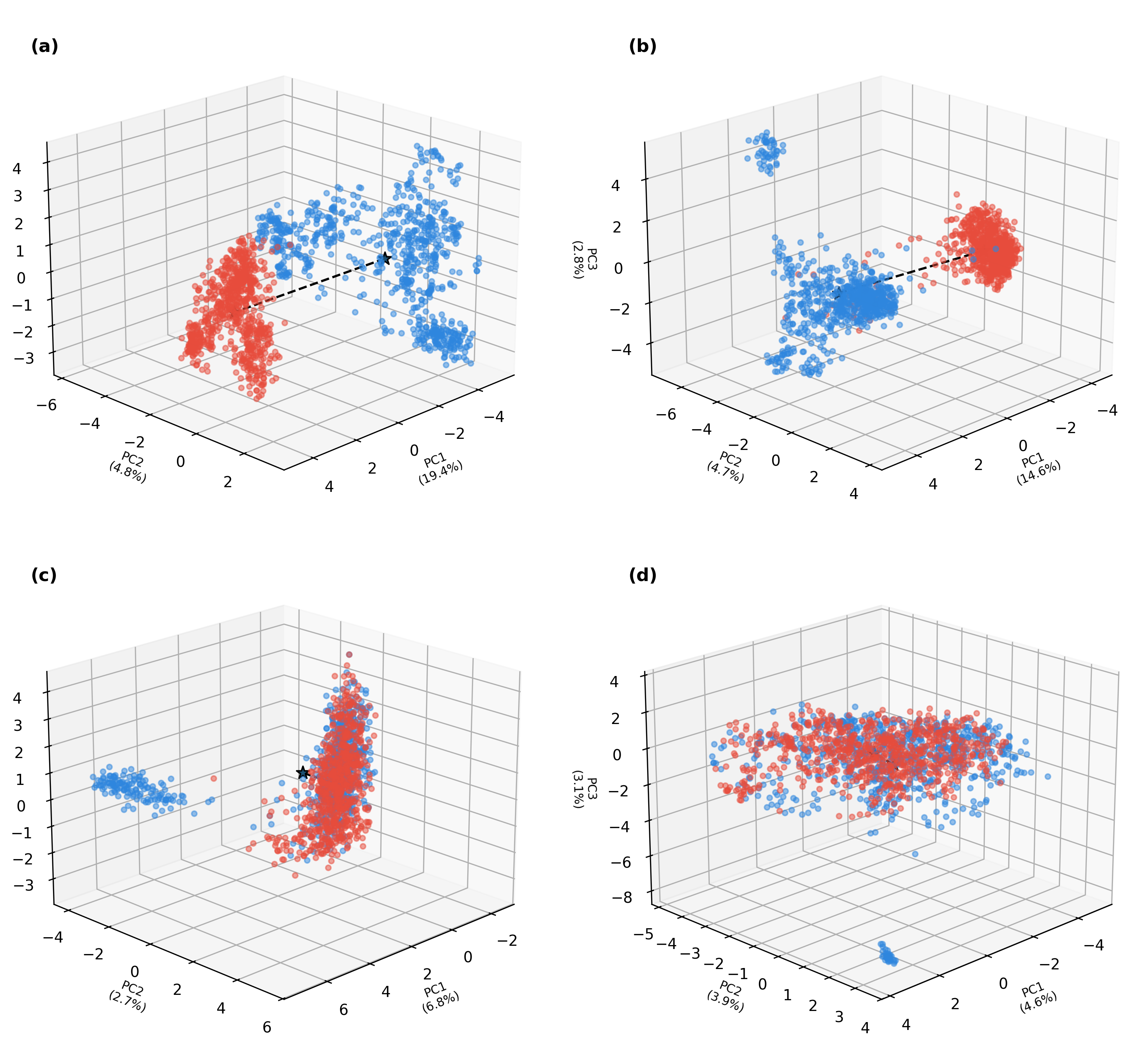}
\caption{\textbf{Task-Dependent Basin Geometry.} Llama-3.2-3b's performance on various tasks and 3D PCA projected outputs. (a) shows performance on MuSiQue, (b) shows performance on HaluEvalQA, (c) shows performance on HaluEvalSummarization, (d) shows performance on TruthfulQA.  }
\vspace{-0.8 cm}
\label{fig:task_dependent}
\end{figure}

\begin{table*}[t]
\centering
\small
\caption{Centroid and Mahalanobis (Maha) AUROC reported with 95\% CI. Note: Lay indicates Layer with highest AUROC, $N$ indicates number of data samples, and $B?$ indicates whether a basin exists or not. For larger models, due to computational constraints, this was limited to $N=1000$.}
\label{tab:basin-matrix}
\begin{tabular}{@{}lllccc c c@{}}
\toprule
Model & Data & Lay & Centroid (95\% CI) & Maha (95\% CI) & $N$ & B? \\
\midrule
gemma-2-2b & FEVER & $L_{14}$ & $0.515\ (0.489,0.548)$ & $0.514\ (0.487,0.535)$ & $9999$ & \texttimes \\
gemma-2-2b & HaluEval\_qa & $L_{14}$ & $0.727\ (0.703,0.744)$ & $0.725\ (0.710,0.745)$ & $20000$ & \checkmark \\
gemma-2-2b & HaluEval\_summ & $L_{20}$ & $0.508\ (0.491,0.519)$ & $0.479\ (0.457,0.502)$ & $20000$ & \texttimes \\
gemma-2-2b & MuSiQue & $L_{26}$ & $0.912\ (0.894,0.932)$ & $0.926\ (0.909,0.944)$ & $4834$ & \checkmark \\
gemma-2-2b & TruthfulQA & $L_{14}$ & $0.607\ (0.547,0.662)$ & $0.597\ (0.535,0.651)$ & $1580$ & \texttimes \\
llama-3.2-1b & FEVER & $L_{8}$ & $0.670\ (0.641,0.700)$ & $0.680\ (0.659,0.702)$ & $9999$ & \texttimes \\
llama-3.2-1b & HaluEval\_qa & $L_{3}$ & $0.983\ (0.976,0.988)$ & $0.984\ (0.980,0.988)$ & $20000$ & \checkmark \\
llama-3.2-1b & HaluEval\_summ & $L_{10}$ & $0.681\ (0.666,0.697)$ & $0.674\ (0.659,0.690)$ & $20000$ & \texttimes \\
llama-3.2-1b & MuSiQue & $L_{1}$ & $1.000\ (1.000,1.000)$ & $1.000\ (1.000,1.000)$ & $4834$ & \checkmark \\
llama-3.2-1b & TruthfulQA & $L_{12}$ & $0.741\ (0.662,0.800)$ & $0.724\ (0.685,0.777)$ & $1580$ & \checkmark \\
llama-3.2-3b & FEVER & $L_{12}$ & $0.702\ (0.671,0.725)$ & $0.711\ (0.686,0.731)$ & $9999$ & \checkmark \\
llama-3.2-3b & HaluEval\_qa & $L_{3}$ & $0.986\ (0.982,0.990)$ & $0.985\ (0.981,0.990)$ & $20000$ & \checkmark \\
llama-3.2-3b & HaluEval\_summ & $L_{21}$ & $0.669\ (0.654,0.687)$ & $0.665\ (0.648,0.683)$ & $20000$ & \texttimes \\
llama-3.2-3b & MuSiQue & $L_{3}$ & $1.000\ (1.000,1.000)$ & $1.000\ (1.000,1.000)$ & $4834$ & \checkmark \\
llama-3.2-3b & TruthfulQA & $L_{12}$ & $0.771\ (0.716,0.833)$ & $0.794\ (0.751,0.839)$ & $1580$ & \checkmark \\
qwen-2.5-1.5b & FEVER & $L_{18}$ & $0.728\ (0.704,0.748)$ & $0.735\ (0.719,0.757)$ & $9999$ & \checkmark \\
qwen-2.5-1.5b & HaluEval\_qa & $L_{24}$ & $0.984\ (0.979,0.989)$ & $0.983\ (0.980,0.988)$ & $20000$ & \checkmark \\
qwen-2.5-1.5b & HaluEval\_summ & $L_{18}$ & $0.663\ (0.650,0.683)$ & $0.664\ (0.648,0.682)$ & $20000$ & \texttimes \\
qwen-2.5-1.5b & MuSiQue & $L_{3}$ & $1.000\ (1.000,1.000)$ & $1.000\ (1.000,1.000)$ & $4834$ & \checkmark \\
qwen-2.5-1.5b & TruthfulQA & $L_{21}$ & $0.738\ (0.671,0.803)$ & $0.751\ (0.705,0.803)$ & $1580$ & \checkmark \\
llama-3.1-8b & HaluEval\_qa & $L_{0}$ & $0.571\ (0.503,0.731)$ & $0.549\ (0.503,0.705)$ & $1000$ & \texttimes \\
llama-3.1-8b & TruthfulQA & $L_{25}$ & $0.944\ (0.899,0.975)$ & $0.958\ (0.921,0.987)$ & $1000$ & \checkmark \\
mistral-7b-v0.3 & HaluEval\_qa & $L_{24}$ & $0.704\ (0.578,0.823)$ & $0.545\ (0.503,0.701)$ & $1000$ & \texttimes \\
mistral-7b-v0.3 & TruthfulQA & $L_{17}$ & $0.939\ (0.893,0.975)$ & $0.958\ (0.923,0.985)$ & $1000$ & \checkmark \\
\bottomrule
\end{tabular}
\end{table*}

\subsection{Causality: Pushing Factual $\to$ Basins}

\paragraph{Method}Linearly interpolate factual hidden states toward basin centroid: $h_\alpha = (1-\alpha) h_{\text{fact}} + \alpha \mu_{\text{hall}}$ for $\alpha \in [0, 1]$. Train logistic classifier on factual/hall, measure P(hall$|h_\alpha$).

\begin{figure}[!h]
\centering
\includegraphics[width=0.95\columnwidth]{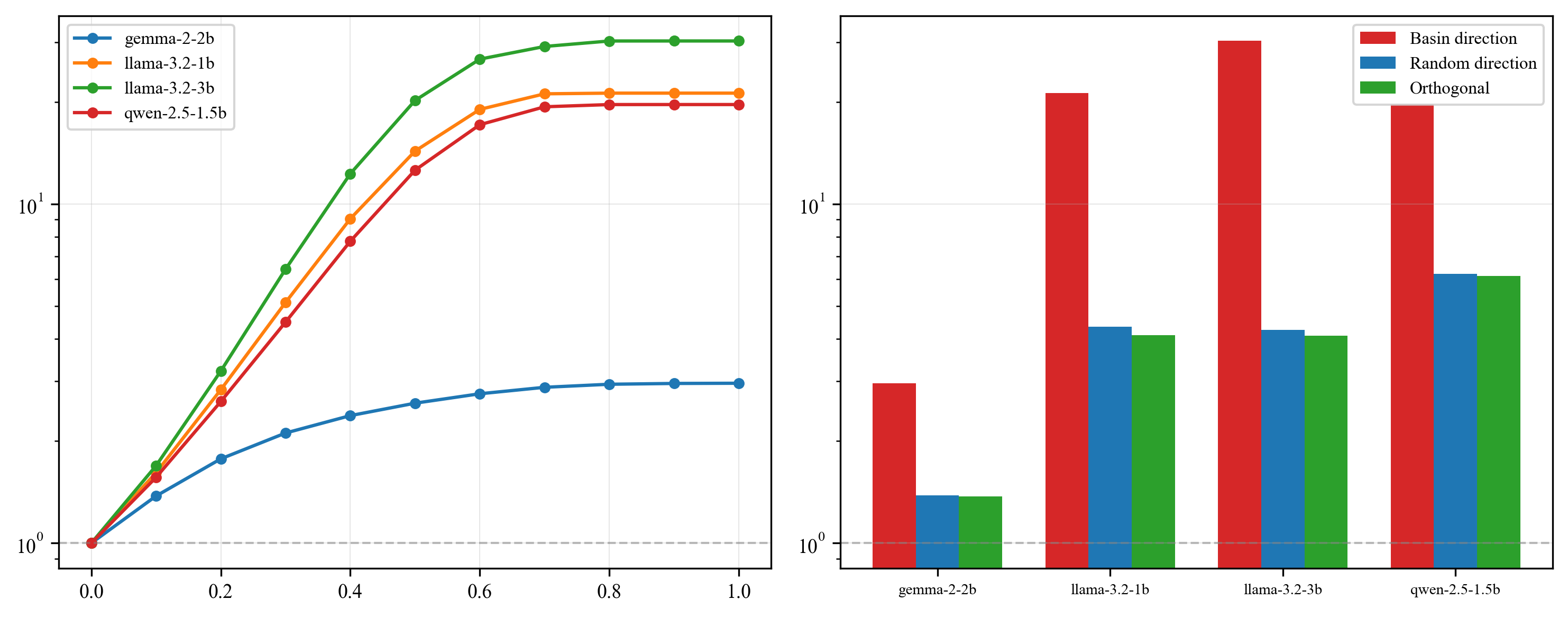}
\caption{\textbf{Causal Intervention: Factual $\to$ Basin.} (Left) Dose-response curve fold increase in hallucination probability as factual hidden states are in-model steered toward the hallucination centroid (interpolation strength $\alpha$ on the horizontal axis). Right: bar plot comparing the maximum fold increase produced by steering along the basin direction versus two controls (random direction and an orthogonal direction). See Appendix \ref{app:intervention} and \ref{app:discussion}.}
\label{fig:causality}
\end{figure}

\section{Discussion and Remarks}
\paragraph{When Basins Don't Form} Table~\ref{tab:basin-matrix} reveals a systematic trend of failures in basin formations. Notice that in TruthfulQA and summarization the AUROC value lingers between 0.5 across all models, indicating a near random performance. 

\paragraph{Misconception Tasks} TruthfulQA contains common misconceptions (e.g., ``What happens if you crack your knuckles?") where models confidently retrieve incorrect training data. These create \textit{multiple indistinguishable basins}. Both factual and hallucinated states converge to confident retrieval modes, preventing geometric separation. Our theory assumes hallucinations collapse to task-independent reference states, which fails when confident, incorrect memories exist.

\paragraph{Architectural Variations} Gemma-2B uses GroupedQueryAttention and different LayerNorm placement compared to Llama/Qwen architectures. Architectural deviations may alter spectral properties, requiring model-specific analysis.

\paragraph{Limitations}
We further discuss the limitations in Appendix~\ref{limitations}.

\bibliography{refs}

\begin{thebibliography}{34}
\providecommand{\natexlab}[1]{#1}
\providecommand{\url}[1]{\texttt{#1}}
\expandafter\ifx\csname urlstyle\endcsname\relax
  \providecommand{\doi}[1]{doi: #1}\else
  \providecommand{\doi}{doi: \begingroup \urlstyle{rm}\Url}\fi

\bibitem[Alansari \& Luqman(2025)Alansari and Luqman]{alansari2025large}
Alansari, A. and Luqman, H.
\newblock Large language models hallucination: A comprehensive survey.
\newblock arXiv:2510.06265, 2025.

\bibitem[Chen \& Zhang(2025)Chen and Zhang]{CHEN2025128893}
Chen, B. and Zhang, H.
\newblock High-order rotor {H}opfield neural networks for associative memory.
\newblock \emph{Neurocomputing}, 616:\penalty0 128893, 2025.
\newblock \doi{10.1016/j.neucom.2024.128893}.

\bibitem[Chen et~al.(2024{\natexlab{a}})Chen, Liu, Chen, Gu, Wu, Tao, Fu, and Ye]{chen2024inside}
Chen, C., Liu, K., Chen, Z., Gu, Y., Wu, Y., Tao, M., Fu, Z., and Ye, J.
\newblock {INSIDE}: {LLMs'} internal states retain the power of hallucination detection.
\newblock arXiv:2402.03744, 2024{\natexlab{a}}.

\bibitem[Chen et~al.(2024{\natexlab{b}})Chen, Xiong, Liu, Wu, Xiao, Gao, and He]{pmlr-v235-chen24av}
Chen, S., Xiong, M., Liu, J., Wu, Z., Xiao, T., Gao, S., and He, J.
\newblock In-context sharpness as alerts: An inner representation perspective for hallucination mitigation.
\newblock In \emph{Proceedings of the 41st International Conference on Machine Learning}, pp.\  7553--7567, 2024{\natexlab{b}}.

\bibitem[Essex et~al.(2025)Essex, Janson, Norris, and Balanov]{essex2025memorisation}
Essex, A.~E., Janson, N.~B., Norris, R.~A., and Balanov, A.~G.
\newblock Memorisation and forgetting in a learning {H}opfield neural network: bifurcation mechanisms, attractors and basins.
\newblock arXiv:2508.10765, 2025.

\bibitem[Farquhar et~al.(2024)Farquhar, Kossen, Kuhn, and Gal]{Farquhar2024SemanticEntropy}
Farquhar, S., Kossen, J., Kuhn, L., and Gal, Y.
\newblock Detecting hallucinations in large language models using semantic entropy.
\newblock \emph{Nature}, 630:\penalty0 625--630, 2024.
\newblock \doi{10.1038/s41586-024-07421-0}.

\bibitem[Hopfield(1982)]{doi:10.1073/pnas.79.8.2554}
Hopfield, J.~J.
\newblock Neural networks and physical systems with emergent collective computational abilities.
\newblock \emph{Proceedings of the National Academy of Sciences}, 79\penalty0 (8):\penalty0 2554--2558, 1982.
\newblock \doi{10.1073/pnas.79.8.2554}.

\bibitem[Huang et~al.(2025)Huang, Yu, Ma, Zhong, Feng, Wang, Chen, Peng, Feng, Qin, and Liu]{huang2025survey}
Huang, L., Yu, W., Ma, W., Zhong, W., Feng, Z., Wang, H., Chen, Q., Peng, W., Feng, X., Qin, B., and Liu, T.
\newblock A survey on hallucination in large language models: Principles, taxonomy, challenges, and open questions.
\newblock \emph{ACM Transactions on Information Systems}, 43\penalty0 (2):\penalty0 42, 2025.
\newblock \doi{10.1145/3703155}.

\bibitem[Inazawa(2025)]{inazawa2025associative}
Inazawa, H.
\newblock Associative memory model with neural networks: Memorizing multiple images with one neuron.
\newblock arXiv:2510.06542, 2025.

\bibitem[Kafraj et~al.(2025)Kafraj, Krotov, Bicknell, and Latham]{kafraj2025a}
Kafraj, M.~S., Krotov, D., Bicknell, B.~A., and Latham, P.~E.
\newblock A biologically plausible associative memory network.
\newblock In \emph{ICLR 2025 Workshop on New Frontiers in Associative Memories}, 2025.
\newblock URL \url{https://openreview.net/forum?id=u4YzOzEMfR}.

\bibitem[Karbasi et~al.(2025)Karbasi, Montasser, Sous, and Velegkas]{karbasi2025impossibility}
Karbasi, A., Montasser, O., Sous, J., and Velegkas, G.
\newblock {(Im)}possibility of automated hallucination detection in large language models.
\newblock In \emph{ICML 2025 Workshop on Reliable and Responsible Foundation Models}, 2025.
\newblock URL \url{https://openreview.net/forum?id=B4SFmNvBNz}.

\bibitem[Li et~al.(2023)Li, Cheng, Zhao, Nie, and Wen]{li2023halueval}
Li, J., Cheng, X., Zhao, W.~X., Nie, J.-Y., and Wen, J.-R.
\newblock {HaluEval}: A large-scale hallucination evaluation benchmark for large language models.
\newblock arXiv:2305.11747, 2023.

\bibitem[Li et~al.(2025)Li, Luo, Zhang, and Liu]{LI2025116657}
Li, X., Luo, M., Zhang, B., and Liu, S.
\newblock Dynamic analysis and implementation of a multi-stable {H}opfield neural network.
\newblock \emph{Chaos, Solitons \& Fractals}, 199:\penalty0 116657, 2025.
\newblock \doi{10.1016/j.chaos.2025.116657}.

\bibitem[Lin et~al.(2022)Lin, Hilton, and Evans]{DBLP:journals/corr/abs-2109-07958}
Lin, S., Hilton, J., and Evans, O.
\newblock {TruthfulQA}: Measuring how models mimic human falsehoods.
\newblock In \emph{Proceedings of the 60th Annual Meeting of the Association for Computational Linguistics}, pp.\  3214--3252, 2022.

\bibitem[Lloyd(1982)]{1056489}
Lloyd, S.
\newblock Least squares quantization in {PCM}.
\newblock \emph{IEEE Transactions on Information Theory}, 28\penalty0 (2):\penalty0 129--137, 1982.
\newblock \doi{10.1109/TIT.1982.1056489}.

\bibitem[Manakul et~al.(2023)Manakul, Liusie, and Gales]{manakul2023selfcheckgpt}
Manakul, P., Liusie, A., and Gales, M.
\newblock {SelfCheckGPT}: Zero-resource black-box hallucination detection for generative large language models.
\newblock In \emph{Proceedings of the 2023 Conference on Empirical Methods in Natural Language Processing}, pp.\  9004--9017, 2023.

\bibitem[{Meta AI}(2024)]{meta2024llama32}
{Meta AI}.
\newblock The {Meta} {Llama} 3.2 collection of multilingual language models.
\newblock \url{https://github.com/meta-llama/llama-models/blob/main/models/llama3_2/MODEL_CARD.md}, 2024.
\newblock Accessed: 2026-01-15.

\bibitem[Park et~al.(2025)Park, Du, Yeh, Wang, and Li]{park2025steer}
Park, S., Du, X., Yeh, M.-H., Wang, H., and Li, Y.
\newblock Steer {LLM} latents for hallucination detection.
\newblock In \emph{Proceedings of the Forty-second International Conference on Machine Learning}, pp.\  47971--47990, 2025.

\bibitem[Pezzulo et~al.(2021)Pezzulo, LaPalme, Durant, and Levin]{10.1098/rstb.2019.0765}
Pezzulo, G., LaPalme, J., Durant, F., and Levin, M.
\newblock Bistability of somatic pattern memories: stochastic outcomes in bioelectric circuits underlying regeneration.
\newblock \emph{Philosophical Transactions of the Royal Society B: Biological Sciences}, 376\penalty0 (1821):\penalty0 20190765, 2021.
\newblock \doi{10.1098/rstb.2019.0765}.

\bibitem[Ramsauer et~al.(2021)Ramsauer, Sch{\"{a}}fl, Lehner, Seidl, Widrich, Gruber, Holzleitner, Adler, Kreil, Kopp, Klambauer, Brandstetter, and Hochreiter]{Ramsauer_ea2021}
Ramsauer, H., Sch{\"{a}}fl, B., Lehner, J., Seidl, P., Widrich, M., Gruber, L., Holzleitner, M., Adler, T., Kreil, D., Kopp, M.~K., Klambauer, G., Brandstetter, J., and Hochreiter, S.
\newblock Hopfield networks is all you need.
\newblock In \emph{Proceedings of the 9th International Conference on Learning Representations}, 2021.

\bibitem[Riviere et~al.(2024)Riviere, Pathak, Sessa, Hardin, Bhupatiraju, Hussenot, Mesnard, Shahriari, Ram{\'e}, et~al.]{team2024gemma}
Riviere, M., Pathak, S., Sessa, P.~G., Hardin, C., Bhupatiraju, S., Hussenot, L., Mesnard, T., Shahriari, B., Ram{\'e}, A., et~al.
\newblock Gemma 2: Improving open language models at a practical size.
\newblock arXiv:2408.00118, 2024.

\bibitem[Sahoo et~al.(2024)Sahoo, Saxena, Maharaj, Ahmad, Mishra, and Bhattacharyya]{sahoo2024addressing}
Sahoo, N.~R., Saxena, A., Maharaj, K., Ahmad, A.~A., Mishra, A., and Bhattacharyya, P.
\newblock Addressing bias and hallucination in large language models.
\newblock In \emph{Proceedings of the 2024 Joint International Conference on Computational Linguistics, Language Resources and Evaluation}, pp.\  73--79, 2024.

\bibitem[Sriramanan et~al.(2024)Sriramanan, Bharti, Sadasivan, Saha, Kattakinda, and Feizi]{sriramanan2024llm}
Sriramanan, G., Bharti, S., Sadasivan, V.~S., Saha, S., Kattakinda, P., and Feizi, S.
\newblock {LLM-Check}: Investigating detection of hallucinations in large language models.
\newblock In \emph{Advances in Neural Information Processing Systems}, volume~37, pp.\  34188--34216. 2024.

\bibitem[Sun et~al.(2025{\natexlab{a}})Sun, Gai, Chen, Ravichander, Choi, Dziri, and Song]{sun2025why}
Sun, Y., Gai, Y., Chen, L., Ravichander, A., Choi, Y., Dziri, N., and Song, D.
\newblock Why and how {LLM}s hallucinate: Connecting the dots with subsequence associations.
\newblock In \emph{Advances in Neural Information Processing Systems}, volume~37, pp.\  34188--34216. 2025{\natexlab{a}}.

\bibitem[Sun et~al.(2025{\natexlab{b}})Sun, Ochiai, Wu, Lin, and Kanai]{sun2025associative}
Sun, Y., Ochiai, H., Wu, Z., Lin, S., and Kanai, R.
\newblock Associative transformer.
\newblock In \emph{Proceedings of the IEEE/CVF Computer Vision and Pattern Recognition Conference}, pp.\  4518--4527, 2025{\natexlab{b}}.

\bibitem[Sun et~al.(2024)Sun, Zang, Zheng, Song, Xu, Zhang, Yu, and Li]{sun2024redeep}
Sun, Z., Zang, X., Zheng, K., Song, Y., Xu, J., Zhang, X., Yu, W., and Li, H.
\newblock {ReDeEP}: Detecting hallucination in retrieval-augmented generation via mechanistic interpretability.
\newblock arXiv:2410.11414, 2024.

\bibitem[Tan et~al.(2025)Tan, Huang, Shi, and Wu]{tan2025interpdetect}
Tan, L., Huang, K.-W., Shi, J., and Wu, K.
\newblock {InterpDetect}: Interpretable signals for detecting hallucinations in retrieval-augmented generation.
\newblock arXiv:2510.21538, 2025.

\bibitem[Thorne et~al.(2018)Thorne, Vlachos, Christodoulopoulos, and Mittal]{thorne-etal-2018-fever}
Thorne, J., Vlachos, A., Christodoulopoulos, C., and Mittal, A.
\newblock {FEVER}: a large-scale dataset for {F}act {E}xtraction and {VER}ification.
\newblock In Walker, M., Ji, H., and Stent, A. (eds.), \emph{Proceedings of the 2018 Conference of the North {A}merican Chapter of the Association for Computational Linguistics}, pp.\  809--819, June 2018.
\newblock \doi{10.18653/v1/N18-1074}.

\bibitem[Trivedi et~al.(2022)Trivedi, Balasubramanian, Khot, and Sabharwal]{trivedi2022musique}
Trivedi, H., Balasubramanian, N., Khot, T., and Sabharwal, A.
\newblock {MuSiQue}: Multihop questions via single-hop question composition.
\newblock \emph{Transactions of the Association for Computational Linguistics}, 10:\penalty0 539--554, 2022.
\newblock \doi{10.1162/tacl_a_00475}.

\bibitem[Varshney(2012)]{6165382}
Varshney, K.~R.
\newblock Generalization error of linear discriminant analysis in spatially-correlated sensor networks.
\newblock \emph{IEEE Transactions on Signal Processing}, 60\penalty0 (6):\penalty0 3295--3301, 2012.
\newblock \doi{10.1109/TSP.2012.2190063}.

\bibitem[Wang et~al.(2025)Wang, Jiao, Zhu, Chen, He, Chu, Gao, Wang, and Ma]{wang2025adaptive}
Wang, T., Jiao, X., Zhu, Y., Chen, Z., He, Y., Chu, X., Gao, J., Wang, Y., and Ma, L.
\newblock Adaptive activation steering: A tuning-free {LLM} truthfulness improvement method for diverse hallucinations categories.
\newblock In \emph{Proceedings of the ACM on Web Conference 2025}, pp.\  2562--2578, 2025.

\bibitem[Yang et~al.(2025)Yang, Li, Yang, Zhang, Hui, Zheng, Yu, Gao, Huang, Lv, et~al.]{yang2025qwen3}
Yang, A., Li, A., Yang, B., Zhang, B., Hui, B., Zheng, B., Yu, B., Gao, C., Huang, C., Lv, C., et~al.
\newblock Qwen3 technical report.
\newblock arXiv:2505.09388, 2025.

\bibitem[Zhu et~al.(2025)Zhu, Liu, Zhang, Wang, Chen, Wang, Luo, Zhang, et~al.]{zhualleviating}
Zhu, C., Liu, Y., Zhang, H., Wang, A., Chen, G., Wang, L., Luo, W., Zhang, K., et~al.
\newblock Alleviating hallucinations in large language models through multi-model contrastive decoding and dynamic hallucination detection.
\newblock In \emph{Advances in Neural Information Processing Systems}, volume~39. 2025.

\bibitem[Zou et~al.(2025)Zou, Wang, Yan, Lyu, Zheng, Huang, Chen, Jiang, Liu, Tang, and Hu]{zou2025look}
Zou, X., Wang, Y., Yan, Y., Lyu, Y., Zheng, K., Huang, S., Chen, J., Jiang, P., Liu, J., Tang, C., and Hu, X.
\newblock Look twice before you answer: Memory-space visual retracing for hallucination mitigation in multimodal large language models.
\newblock In \emph{Proceedings of the 42nd International Conference on Machine Learning}, pp.\  80873--80899, 2025.

\end{thebibliography}
\bibliographystyle{icml2026}

\newpage
\appendix
\onecolumn
\section*{Appendix for \textit{Hallucination Basins: A Dynamic Framework for Understanding and Controlling LLM Hallucinations}}

\section{Full Proofs}

\FloatBarrier
\subsection{Proof of Theorem \ref{thm:task_complexity}}
\label{app:task_complexity}
\begin{theorem}[Task complexity determines basin geometry]
Let $\mathcal{A}$ be the set of valid answers for a task. The variance ratio correlates with answer cardinality, such that:
\[
\rho_{\text{var}} = \frac{\text{Var}[\text{factual}]}{\text{Var}[\text{hallucinated}]} \geq C \log(|\mathcal{A}| + 1)
\]
where $C$ depends on embedding dimension and model capacity.
\end{theorem}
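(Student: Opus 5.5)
The statement as written cannot be proved from the definitions alone: $\rho_{\text{var}}$ is an empirical quantity of a trained model, and nothing so far ties it to $|\mathcal{A}|$. My plan is therefore to make the modelling assumptions explicit and derive the bound from a stylized generative model of the hidden states at a fixed layer $\ell$. I will argue the numerator and the denominator separately and let $C$ absorb everything that depends on the model.

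\textbf{Factual states.} I assume the factual hidden states form a mixture over the valid answers:
\[
h^{(\ell)}_{\text{fact}} = m_a + \xi, \qquad a \in \mathcal{A},
\]
where the $m_a$ are answer-specific centroids and $\xi$ is context noise with covariance trace at least $\sigma_0^2 > 0$. By the law of total variance, $\text{Var}[\text{factual}] = \sigma_0^2 + \text{Var}_a[m_a]$. To bring in $\log(|\mathcal{A}|+1)$, I add a distinguishability assumption: the readout $\mathrm{Softmax}(Wh+b)$ must separate the answers with error at most $\epsilon$.

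Fano's inequality then forces the mixture to carry about $\log|\mathcal{A}|$ nats of information about $a$. For Gaussian-like components, the information transmitted by a mixture with between-component spread $s^2$ and within-component noise $\sigma_0^2$ is at most $\tfrac{d}{2}\log(1+s^2/(d\sigma_0^2))$. Inverting this gives a lower bound on $\text{Var}_a[m_a]$ that grows with $|\mathcal{A}|$. The term $\sigma_0^2$ takes care of the case $|\mathcal{A}|=1$. Combining the two yields $\text{Var}[\text{factual}] \ge c_1(d)\log(|\mathcal{A}|+1)$.

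\textbf{Hallucinated states.} I will use the basin picture. I assume hallucinated trajectories enter $\mathcal{B}^{(\ell_1)}(r)$ and that layers $\ell_1,\dots,\ell$ are radially contractive. Theorem~\ref{thm:traj_trap} (equivalently, the radius-decay proposition) then gives $\|h^{(\ell)} - \mu^{(\ell)}\|_2 \le \bar\alpha^{\ell-\ell_1} r$. Because the variance is at most the mean squared distance to any fixed point, in particular $\mu^{(\ell)}$, this yields
\[
\text{Var}[\text{hallucinated}] \le \bar\alpha^{2(\ell-\ell_1)} r^2 .
\]
Dividing, the claim follows with $C = c_1(d)\,/\,(\bar\alpha^{2(\ell-\ell_1)} r^2)$. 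This $C$ depends on the embedding dimension through $c_1(d)$ and on model capacity through the contraction constants, which matches the phrase in the statement.

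\textbf{Main obstacle.} The hard part is the hallucinated side when $|\mathcal{A}|$ is large. Proposition~\ref{prop:manifold_attractor} says that generation tasks give a manifold attractor, not a point attractor. Along the tangent directions there is no contraction, so the trapping bound fails there and $\text{Var}[\text{hallucinated}]$ can scale with $|\mathcal{A}|$ just as the factual variance does. This is exactly what Table~\ref{tab:variance_analysis} shows: $\rho_{\text{var}} \approx 1$ for summarization.

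To handle this honestly, I would first try restricting the statement to the normal subspace using the subspace radial contraction with $P_V = P_N$. This gives $\|P_N\delta_t\|_2 \le C\alpha_N^t\|\delta_0\|_2$, so the bound holds for the projected variances. The price is that $C$ then degrades with $\dim T_\mu\mathcal{M}$, which grows with $|\mathcal{A}|$. The result should therefore be stated as: the log-cardinality lower bound holds wherever hallucinations are point-trapped (factoid and low-cardinality regimes), and the constant degenerates in the manifold regime.
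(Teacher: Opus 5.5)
Your argument works under its modelling assumptions and has the same skeleton as the paper's proof: bound $\operatorname{Var}[\mathrm{factual}]$ below by a multiple of $\log(|\mathcal{A}|+1)$, bound $\operatorname{Var}[\mathrm{hallucinated}]$ above by a constant, and take $C$ as the ratio. The difference is that you derive both bounds, whereas the paper postulates them. The paper simply assumes $\operatorname{Var}[\mathrm{factual}]\ge d_{\mathrm{signal}}\sigma_{\mathrm{sig}}^2$, $\operatorname{Var}[\mathrm{hallucinated}]\le d_{\mathrm{hall}}\sigma_0^2$ and $d_{\mathrm{signal}}\ge\log_2(|\mathcal{A}|+1)$ (``each extra dimension doubles the number of separable states''), then divides to get $C=\sigma_{\mathrm{sig}}^2/(d_{\mathrm{hall}}\sigma_0^2)$.

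Your Fano-plus-Gaussian-capacity step replaces that dimension-counting heuristic with an operational requirement: the answer must be decodable from the state. It also correctly makes signal-to-noise ratio, not raw dimension, the quantity that forces the between-answer spread to grow like $\log|\mathcal{A}|$. Invoking Theorem~\ref{thm:traj_trap} ties the denominator to the paper's own dynamics and makes $C$ interpretable through $\bar\alpha$, $r$ and depth.

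Your closing discussion also exposes what the paper hides in $d_{\mathrm{hall}}$. The paper's $C$ shrinks as $d_{\mathrm{hall}}$ grows, just as yours degrades with $\dim T_\mu\mathcal{M}$. So in both versions the bound has content only in the point-attractor regime, and it is consistent with $\rho_{\text{var}}\approx 1$ for summarization only because $C$ may depend on the task.

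The cost of your route is extra distributional assumptions, which you should state exactly. The bound $\tfrac{d}{2}\log\bigl(1+s^2/(d\sigma_0^2)\bigr)$ requires three things:
\begin{itemize}
\item $\operatorname{Cov}(\xi)\succeq\sigma_0^2 I$, not just a trace bound. With only a trace bound, noise concentrated in a few directions lets an arbitrarily small spread in the remaining directions carry $\log|\mathcal{A}|$ nats, and the numerator no longer grows.
\item Gaussian $\xi$, or else an additive correction $D(\xi\,\|\,\mathcal{N}(0,\operatorname{Cov}\xi))$ to the capacity.
\item An answer label with entropy near $\log|\mathcal{A}|$ (e.g.\ uniform on $\mathcal{A}$), so that Fano actually yields about $\log|\mathcal{A}|$ nats.
\end{itemize}
With these in place, using $e^x-1\ge x$ in the inversion gives $s^2\ge 2\sigma_0^2\bigl((1-\varepsilon)\log|\mathcal{A}|-\log 2\bigr)$. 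Adding the $d\sigma_0^2$ noise floor then yields $\operatorname{Var}[\mathrm{factual}]\ge c_1\log(|\mathcal{A}|+1)$ for a suitable $c_1$, as you claim.
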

\begin{proof}
Representing $|\mathcal{A}|$ distinct answers requires intrinsic dimensionality. we introduce three assumptions for this proof.
\begin{assumption}[Signal dimension] The factual hidden states lie in an intrinsic signal subspace of dimension \(d_{\mathrm{signal}}\) and per-coordinate signal variance at least \(\sigma_{\mathrm{sig}}^2>0\); hence
     \[
    \operatorname{Var}[\mathrm{factual}] \;\ge\; d_{\mathrm{signal}}\sigma_{\mathrm{sig}}^2.
  \]
\end{assumption}
\begin{assumption}[Hallucination noise.] Hallucinated states concentrate around a reference \(\mu_{\mathrm{ref}}\) with residual isotropic noise variance \(\sigma_0^2\), and the effective noise dimension is bounded by \(d_{\mathrm{hall}}\) (often extremely small for point-attractor collapse), so
  \[
    \operatorname{Var}[\mathrm{hallucinated}] \;\le\; d_{\mathrm{hall}}\sigma_0^2.
  \]
\end{assumption}
\begin{assumption}[Encoding lower bound] Representing \(|\mathcal{A}|\) distinct answers requires intrinsic dimension at least
  \[
    d_{\mathrm{signal}} \;\ge\; \log_2(|\mathcal{A}|+1).
  \]    
\end{assumption}

Note that for the last assumption, each extra bit of dimensionality in the representation doubles the number of reliably separate states in the ideal model's quantization.
From Assumptions A.1 and A.3, 
\[
  \operatorname{Var}[\mathrm{factual}] \;\ge\; d_{\mathrm{signal}}\sigma_{\mathrm{sig}}^2
  \;\ge\; \sigma_{\mathrm{sig}}^2\log_2(|\mathcal{A}|+1).
\]
From Assumption A.2,
\[
  \operatorname{Var}[\mathrm{hallucinated}] \;\le\; d_{\mathrm{hall}}\sigma_0^2.
\]
Hence
\[
  \rho_{\mathrm{var}}
  \;=\; \frac{\operatorname{Var}[\mathrm{factual}]}{\operatorname{Var}[\mathrm{hallucinated}]}
  \;\ge\; \frac{\sigma_{\mathrm{sig}}^2\log_2(|\mathcal{A}|+1)}{d_{\mathrm{hall}}\sigma_0^2}.
\]
Define the model-dependent constant
\[
  C := \frac{\sigma_{\mathrm{sig}}^2}{d_{\mathrm{hall}}\sigma_0^2}.
\]
Then the bound becomes
\[
  \rho_{\mathrm{var}} \ge C \log_2(|\mathcal{A}|+1),
\]
which is equivalent to the theorem's stated form.
\end{proof}

\subsection{Proof of Theorem \ref{thm:basin_emergence}}
\label{app:basin_emergence}
\begin{theorem}[$L$-layer basin emergence]
Assume the attention entropy is nearly-uniform at each layer ($H(\alpha^{(\ell)}) \geq H_0$). Then each layer formation propagates inductively.
\[
h^{(\ell)} \in \mathcal{B}^{(\ell)}(r_\ell) \implies h^{(\ell+1)} \in \mathcal{B}^{(\ell+1)}(r_{\ell+1})
\]
where $r_{\ell+1} \leq \alpha_\ell r_\ell$ with $\alpha_\ell < 1$.
\end{theorem}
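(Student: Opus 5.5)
We prove the implication one layer at a time, by decomposing the residual block into its attention, FFN and LayerNorm parts and bounding each part's contribution to the radial distance. Write the layer map as
\[
f_\ell(h) = \mathrm{LN}\big(h + \mathrm{Attn}^{(\ell)}(h) + \mathrm{FFN}^{(\ell)}(h)\big),
\]
and set $\delta^{(\ell)} = h^{(\ell)} - \mu^{(\ell)}$. By Proposition~\ref{prop:fixed-points}, $f_\ell(\mu^{(\ell)}) = \mu^{(\ell+1)} + O(\sigma_{\mathcal{C}})$. So it suffices to bound
\[
\|f_\ell(\mu^{(\ell)}+\delta^{(\ell)}) - f_\ell(\mu^{(\ell)})\|_2
\]
by a constant strictly smaller than $1$ times $\|\delta^{(\ell)}\|_2$. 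The $O(\sigma_{\mathcal{C}})$ term is absorbed either by assuming $\sigma_{\mathcal{C}}$ is small relative to $r_\ell$ or by slightly enlarging $r_{\ell+1}$.

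\textbf{Attention term.} The entropy hypothesis $H(\alpha^{(\ell)}) \ge H_0$ with $H_0$ close to $\log n$ implies, via Pinsker's inequality, that
\[
\|\alpha^{(\ell)} - \tfrac{1}{n}\mathbf{1}\|_1 \le \sqrt{2(\log n - H_0)} =: \eta.
\]
The attention output is then $\tfrac1n\sum_j W_V h_j$ up to an error of size $\eta \max_j\|W_V h_j\|$. The mean part depends on the query token $h$ only through its own $1/n$ share. Hence the Jacobian of the attention sublayer with respect to $h$ has norm at most $\|W_V\|(1/n + \eta) + O(\eta)$, which is small. Any remaining mean component is killed by the projection $P_V$ onto the mean-zero subspace.

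\textbf{FFN and LayerNorm terms.} We use the coefficients already introduced in Table~\ref{tab:notation}. LayerNorm's centering is the map $P_V$, followed by rescaling. On $V$, restricted to the ball $\mathcal{B}^{(\ell)}(r_\ell)$, its Lipschitz constant is at most $\gamma_{\text{LN}}<1$, since the norm there is bounded below by roughly $\|\mu^{(\ell)}\| - r_\ell$. We assume the FFN is Lipschitz with constant $\gamma_{\text{FFN}}$ on the ball. Composing these bounds gives
\[
\|\delta^{(\ell+1)}\|_2 \le \gamma_{\text{LN}}\big(1 + \|W_V\|(1/n+\eta) + \gamma_{\text{FFN}}\big)\|\delta^{(\ell)}\|_2 + O(\sigma_{\mathcal{C}}).
\]
We then define $\alpha_\ell$ as the prefactor and $r_{\ell+1} := \alpha_\ell r_\ell$. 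The induction over layers then proceeds exactly as in Theorem~\ref{thm:traj_trap} and the radius-decay proposition.

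\textbf{Main obstacle.} The hard step is showing $\alpha_\ell<1$. The residual identity contributes a full factor of $1$, so contraction must come entirely from LayerNorm. It has to shrink that factor enough to beat the attention leakage and the FFN contribution, which requires
\[
\gamma_{\text{LN}}\big(1+\gamma_{\text{FFN}}+\|W_V\|\eta\big)<1.
\]
My first attempt would work in the projected norm $\|P_V\cdot\|$, as in subspace radial contraction. There I would argue that near $\mu^{(\ell)}$ the FFN is approximately anti-aligned with the residual, so that the residual and FFN terms together contribute a factor $\le 1$ rather than $1+\gamma_{\text{FFN}}$. Failing that, I would simply state this inequality as an explicit hypothesis, in the spirit of Proposition~\ref{prop:manifold_attractor}'s spectral assumption, and derive the theorem under it.
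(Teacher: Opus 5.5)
Your route is the paper's route. Both arguments use the post-LN residual block, apply the triangle inequality over the identity, attention and FFN branches, take a LayerNorm Lipschitz constant below $1$, and set $\alpha_\ell := L_{\mathrm{LN}}(1+L_A+L_F)$ (your $\gamma_{\mathrm{LN}}(1+\cdots+\gamma_{\mathrm{FFN}})$). The differences are minor:
\begin{enumerate}
\item You take $f_\ell(\mu^{(\ell)})\approx\mu^{(\ell+1)}$ from Proposition~\ref{prop:fixed-points}, where the paper assumes exact centroid consistency.
\item You try to derive the attention bound from the entropy hypothesis, where the paper simply assumes $\|\mathrm{Attn}^{(\ell)}(h)-\mathrm{Attn}^{(\ell)}(\mu^{(\ell)})\|\le L_A\|h-\mu^{(\ell)}\|+C_A\varepsilon_\ell$. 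Your use of Pinsker to turn $H\ge H_0$ into closeness to uniform is correct, and it supplies a step the paper only asserts.
\end{enumerate}
Your ``main obstacle'' is genuine, and the paper does not overcome it either. It claims $r_\ell$ and $H_0$ can be chosen so that $\alpha_\ell<1$. But shrinking $r_\ell$ only drives the local constants toward the Jacobian norms at $\mu^{(\ell)}$, and raising $H_0$ only shrinks the additive leakage. So $L_{\mathrm{LN}}<1$ with finite $L_A,L_F$ never forces $L_{\mathrm{LN}}(1+L_A+L_F)<1$. Imposing that inequality, or your sharper joint bound on the residual-plus-branch map, as an explicit hypothesis is the honest form of the paper's own argument.

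Two of your steps, however, fail as written.

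\textbf{The attention Jacobian bound does not follow.} Near-uniformity constrains the values of the weights, not their sensitivity to the query. A sup-norm bound on the deviation from the uniform average says nothing about its derivative. Differentiating the softmax through the query gives the term $\sum_j\alpha_j W_Vh_j\,(k_j-\bar k)^\top W_Q/\sqrt{d_k}$, with $\bar k=\sum_i\alpha_ik_i$. If, say, $W_Q\mu^{(\ell)}=0$, the weights at $\mu^{(\ell)}$ are exactly uniform ($\eta=0$). Yet this term is then a value--key cross-covariance whose size does not shrink with $n$.

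What you can prove is a two-point bound, assuming $H(\alpha^{(\ell)})\ge H_0$ holds at both $h$ and $\mu^{(\ell)}$ and the other tokens are held fixed. With $M$ a bound on the value vectors, it reads $\|\mathrm{Attn}^{(\ell)}(h)-\mathrm{Attn}^{(\ell)}(\mu^{(\ell)})\|\le\frac{\|W_V\|}{n}\|h-\mu^{(\ell)}\|+2\eta M$. This has exactly the form of the paper's residual-Lipschitz assumption. Note that the $\eta$-term is now additive, not multiplicative.

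\textbf{The additive errors are not absorbed correctly.} These are the $2\gamma_{\mathrm{LN}}\eta M$ above together with your $O(\sigma_{\mathcal C})$. They cannot be handled by ``slightly enlarging $r_{\ell+1}$'', since the conclusion requires $r_{\ell+1}\le\alpha_\ell r_\ell$. With $r_{\ell+1}:=\alpha_\ell r_\ell$, your displayed inequality does not put $h^{(\ell+1)}$ in the ball. You need the total additive error $E_\ell$ to satisfy $E_\ell<(1-\alpha_\ell)r_\ell$, and you must then replace $\alpha_\ell$ by $\alpha_\ell+E_\ell/r_\ell<1$. The paper silently drops its $C_\ell$ at the same point.

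Two smaller slips:
\begin{enumerate}
\item $P_V$ removes the component along $\mathbf 1\in\mathbb R^d$, not the token average of the values, so it does not kill the attention mean.
\item A positive lower bound on the norm does not give $\gamma_{\mathrm{LN}}<1$. LayerNorm's local constant is about $\sqrt d\,\|g\|_\infty/\|P_Vz\|$ for the pre-LN sum $z$ and gain $g$. So $\gamma_{\mathrm{LN}}<1$ is an assumption (the paper's $L_{\mathrm{LN}}<1$), not a consequence.
\end{enumerate}
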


\begin{proof}
We introduce these assumptions for the proof.
\begin{assumption}[Layer structure]
\label{assump:layer_structure}
Each Transformer layer $\ell$ computes
\[
h^{(\ell+1)} \;=\; \mathrm{LN}\!\left(
h^{(\ell)} + \mathrm{Attn}^{(\ell)}(h^{(\ell)}) + \mathrm{FFN}^{(\ell)}(h^{(\ell)})
\right),
\]
where $\mathrm{LN}$ denotes Layer Normalization applied after the residual sum.
\end{assumption}

\begin{assumption}[Near-uniform attention]
\label{assump:uniform_attention}
There exists $\varepsilon_\ell > 0$ such that for all
$h \in \mathcal{B}^{(\ell)}(r_\ell)$,
the attention weights satisfy
\[
\left\|\alpha^{(\ell)}(h) - \tfrac{1}{n}\mathbf{1}\right\|_\infty \le \varepsilon_\ell,
\]
which is implied by the entropy condition
$H(\alpha^{(\ell)}) \ge H_0$.
\end{assumption}

\begin{assumption}[Residual branch Lipschitzness]
\label{assump:residual_lipschitz}
There exist constants $L_A, L_F \ge 0$ such that for all
$h \in \mathcal{B}^{(\ell)}(r_\ell)$,
\begin{align*}
\|\mathrm{Attn}^{(\ell)}(h) - \mathrm{Attn}^{(\ell)}(\mu^{(\ell)})\|
&\le L_A \|h - \mu^{(\ell)}\| + C_A \varepsilon_\ell, \\
\|\mathrm{FFN}^{(\ell)}(h) - \mathrm{FFN}^{(\ell)}(\mu^{(\ell)})\|
&\le L_F \|h - \mu^{(\ell)}\|.
\end{align*}
\end{assumption}

\begin{assumption}[LayerNorm contraction]
\label{assump:layernorm}
Layer Normalization is locally Lipschitz with constant
$L_{\mathrm{LN}} < 1$ on the image of $\mathcal{B}^{(\ell)}(r_\ell)$, i.e.
\[
\|\mathrm{LN}(x) - \mathrm{LN}(y)\| \le L_{\mathrm{LN}} \|x - y\|
\quad \text{for all relevant } x,y.
\]
\end{assumption}

\begin{assumption}[Centroid consistency]
\label{assump:centroid}
The basin centers propagate according to the layer map:
\[
\mu^{(\ell+1)} = \mathrm{LN}\!\left(
\mu^{(\ell)} + \mathrm{Attn}^{(\ell)}(\mu^{(\ell)}) + \mathrm{FFN}^{(\ell)}(\mu^{(\ell)})
\right).
\]
\end{assumption}

Let $h^{(\ell)} \in \mathcal{B}^{(\ell)}(r_\ell)$.
By Assumption~\ref{assump:layer_structure} and
Assumption~\ref{assump:centroid},
\[
h^{(\ell+1)} - \mu^{(\ell+1)}
=
\mathrm{LN}(z(h^{(\ell)})) - \mathrm{LN}(z(\mu^{(\ell)})),
\]
where
\[
z(h) := h + \mathrm{Attn}^{(\ell)}(h) + \mathrm{FFN}^{(\ell)}(h).
\]

Applying the Lipschitz property of LayerNorm
(Assumption~\ref{assump:layernorm}),
\[
\|h^{(\ell+1)} - \mu^{(\ell+1)}\|
\le
L_{\mathrm{LN}} \|z(h^{(\ell)}) - z(\mu^{(\ell)})\|.
\]

We expand the residual difference:
\begin{align*}
\|z(h^{(\ell)}) - z(\mu^{(\ell)})\|
&\le
\|h^{(\ell)} - \mu^{(\ell)}\| \\
&\quad
+ \|\mathrm{Attn}^{(\ell)}(h^{(\ell)}) - \mathrm{Attn}^{(\ell)}(\mu^{(\ell)})\| \\
&\quad
+ \|\mathrm{FFN}^{(\ell)}(h^{(\ell)}) - \mathrm{FFN}^{(\ell)}(\mu^{(\ell)})\|.
\end{align*}

Using Assumption~\ref{assump:residual_lipschitz},
\[
\|z(h^{(\ell)}) - z(\mu^{(\ell)})\|
\le
(1 + L_A + L_F)\|h^{(\ell)} - \mu^{(\ell)}\|
+ C_A \varepsilon_\ell.
\]

Substituting into the LayerNorm bound yields
\[
\|h^{(\ell+1)} - \mu^{(\ell+1)}\|
\le
L_{\mathrm{LN}}(1 + L_A + L_F)\|h^{(\ell)} - \mu^{(\ell)}\|
+ L_{\mathrm{LN}} C_A \varepsilon_\ell.
\]

Define
\[
\alpha_\ell := L_{\mathrm{LN}}(1 + L_A + L_F),
\qquad
C_\ell := L_{\mathrm{LN}} C_A \varepsilon_\ell.
\]

Since $L_{\mathrm{LN}} < 1$ and the residual Lipschitz constants are finite,
we may choose the basin radius $r_\ell$ and entropy threshold $H_0$
so that $\alpha_\ell < 1$ and $C_\ell \ll r_\ell$.

Therefore, for all $h^{(\ell)} \in \mathcal{B}^{(\ell)}(r_\ell)$,
\[
\|h^{(\ell+1)} - \mu^{(\ell+1)}\|
\le \alpha_\ell r_\ell,
\]
which implies
\[
h^{(\ell+1)} \in \mathcal{B}^{(\ell+1)}(r_{\ell+1}),
\qquad
r_{\ell+1} \le \alpha_\ell r_\ell,
\]
with $\alpha_\ell < 1$. This completes the inductive step and proves the theorem.

\end{proof}
\subsection{Proof of Theorem \ref{thm:multi_basin}}
\label{app:multi_basin}
\begin{theorem}[Multi-basin partitioning]

Consider a task with $K$ common misconceptions. The hallucination subspace $\mathcal{H}^{(\ell)} = \{h^{(\ell)}(x) : x \in D_{\text{hall}}\}$ admits a Voronoi tessellation into $K$ basins centered at $\{\mu_1^{(\ell)}, \ldots, \mu_K^{(\ell)}\}$:

\[
\mathcal{B}_k^{(\ell)} = \left\{h \in \mathcal{H}^{(\ell)} : \|h - \mu_k^{(\ell)}\| \le \|h - \mu_j^{(\ell)}\| \forall j \neq k\right\}
\]

where each basin corresponds to a distinct misconception type with probability:

\[
P(\text{basin}_k | h) = \frac{\exp(-\|h - \mu_k\|^2 / 2\sigma^2)}{\sum_{j=1}^K \exp(-\|h - \mu_j\|^2 / 2\sigma^2)}.
\]    

\end{theorem}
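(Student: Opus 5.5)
The statement has two parts. The first is a purely geometric claim that a finite set of centers induces a Voronoi partition. The second is a probabilistic claim about the soft assignment, which cannot be derived from geometry alone and has to come from a generative assumption. I would therefore separate the argument into these two parts and make the modelling assumptions explicit, in the same style as the assumptions used in the proofs of Theorem~\ref{thm:task_complexity} and Theorem~\ref{thm:basin_emergence}.

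\textbf{Step 1: constructing the centers.} I would take the centers $\{\mu_k^{(\ell)}\}_{k=1}^K$ to be a minimizer of the $K$-means objective $\sum_{k}\sum_{h\in\mathcal{B}_k}\|h-\mu_k\|^2$ over the finite set $\mathcal{H}^{(\ell)}$. A minimizer exists because there are only finitely many partitions of $\mathcal{H}^{(\ell)}$. For a fixed partition, the optimal centers are the cell means. Conversely, for fixed centers, the optimal partition assigns each $h$ to a nearest center. This is the Lloyd optimality condition \cite{1056489}.

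\textbf{Step 2: the tessellation.} Given any distinct centers, define $\mathcal{B}_k^{(\ell)}$ as in the statement. Every $h$ has some nearest center, so $\bigcup_k\mathcal{B}_k^{(\ell)}=\mathcal{H}^{(\ell)}$. Distinct cells can overlap only on the bisecting hyperplanes $\{h:\|h-\mu_k\|=\|h-\mu_j\|\}$. These are affine, because $\|h-\mu_k\|^2-\|h-\mu_j\|^2=2h^\top(\mu_j-\mu_k)+\|\mu_k\|^2-\|\mu_j\|^2$. Breaking ties by smallest index gives a genuine partition, and each cell is the intersection of $\mathcal{H}^{(\ell)}$ with a convex polyhedron. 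This establishes the Voronoi tessellation.

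\textbf{Step 3: the posterior formula.} Here I would introduce the assumption that the $k$-th misconception type $m_k$ is selected with equal prior probability $1/K$. Conditional on $m_k$, the hidden state is assumed to be distributed as $h\sim\mathcal{N}(\mu_k^{(\ell)},\sigma^2 I)$, an isotropic Gaussian around the basin center. Bayes' rule then gives
\[
P(\mathrm{basin}_k\mid h)=\frac{\pi_k\,\mathcal{N}(h;\mu_k,\sigma^2I)}{\sum_j\pi_j\,\mathcal{N}(h;\mu_j,\sigma^2I)}.
\]
The normalizing constants $(2\pi\sigma^2)^{-d/2}$ and the equal priors cancel, which leaves exactly the softmax formula in the statement. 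The same computation ties the two parts together. The arg max of this posterior is the nearest center, so the hard Voronoi cells are the MAP decision regions. In the limit $\sigma\to 0$ the soft assignment converges to the indicator of $\mathcal{B}_k^{(\ell)}$.

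\textbf{Main obstacle.} The hard part is justifying the interpretive claim that each cell ``corresponds to a distinct misconception type.'' This does not follow from $K$-means, which will partition any data set into $K$ cells whether or not distinct misconceptions exist. I would handle it with a separation assumption: suppose the mixture components satisfy $\min_{j\ne k}\|\mu_j-\mu_k\|\gg\sigma\sqrt{d}$. Under this condition, Gaussian concentration implies that with high probability each sample lies in the Voronoi cell of its generating component, so the empirical cells recover the misconception types. Without such an assumption, the correspondence can only be stated as a modelling hypothesis. I would state it that way rather than claim to derive it.
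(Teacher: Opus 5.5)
Your proposal is correct and follows essentially the same route as the paper's proof: the paper likewise assumes an equal-prior isotropic Gaussian mixture $h^{(\ell)}\mid m_k\sim\mathcal{N}(\mu_k^{(\ell)},\sigma^2 I)$ with distinct centers, shows that the Voronoi cells cover $\mathcal{H}^{(\ell)}$ with hyperplane boundaries, and obtains the softmax from Bayes' rule once $(2\pi\sigma^2)^{-d/2}$ and $1/K$ cancel. There are two differences. First, the paper takes the centers to be the mixture means directly, so your $K$-means Step 1 is unnecessary, and the Gaussian assumption in Step 3 must concern the same $\mu_k^{(\ell)}$ that define the cells. Second, the paper simply identifies $m_k$ with $\mathcal{B}_k^{(\ell)}$ by fiat, whereas you rightly flag that correspondence as a modelling hypothesis that would need a separation condition to be derived.
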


\begin{proof}
    We introduce these assumptions for the proof:
    \begin{assumption}[Mixture structure of hallucination states]
\label{assump:mixture}
The hallucination subspace $\mathcal{H}^{(\ell)}$ is generated by a finite mixture
of $K$ latent misconception types $\{m_1,\dots,m_K\}$. Conditional on misconception
$m_k$, the hidden states are distributed as a Gaussian:
\[
h^{(\ell)} \mid m_k \;\sim\; \mathcal{N}\!\left(\mu_k^{(\ell)},\,\sigma^2 I\right),
\]
with equal prior probabilities $P(m_k)=1/K$.
\end{assumption}

\begin{assumption}[Distinct misconception centers]
\label{assump:distinct_centers}
The centers $\{\mu_k^{(\ell)}\}_{k=1}^K$ are distinct:
$\mu_k^{(\ell)} \neq \mu_j^{(\ell)}$ for $k \neq j$.
\end{assumption}

The theorem has two main steps to proof: (1) the geometric Voronoi partitioning, and (2) the probabilistic basin assignments.

To begin part one, let's take a look at the set of centers $\{\mu_1^{(\ell)},\dots,\mu_K^{(\ell)}\}$, define for each
$k$ the region
\[
\mathcal{B}_k^{(\ell)}
=
\left\{h \in \mathcal{H}^{(\ell)} :
\|h-\mu_k^{(\ell)}\| \le \|h-\mu_j^{(\ell)}\| \;\forall j \neq k
\right\}.
\]

By Assumption~\ref{assump:distinct_centers}, for any
$h \in \mathcal{H}^{(\ell)}$ the minimum of
$\{\|h-\mu_j^{(\ell)}\|\}_{j=1}^K$ is achieved by at least one index $k$.
Thus the collection $\{\mathcal{B}_k^{(\ell)}\}_{k=1}^K$ covers
$\mathcal{H}^{(\ell)}$.

Moreover, for $k \neq j$, the boundary between $\mathcal{B}_k^{(\ell)}$
and $\mathcal{B}_j^{(\ell)}$ is given by
\[
\|h-\mu_k^{(\ell)}\| = \|h-\mu_j^{(\ell)}\|,
\]
which defines a hyperplane orthogonal to $\mu_k^{(\ell)}-\mu_j^{(\ell)}$.
Hence the sets $\mathcal{B}_k^{(\ell)}$ form a Voronoi tessellation of
$\mathcal{H}^{(\ell)}$ induced by the centers
$\{\mu_k^{(\ell)}\}_{k=1}^K$.

That concludes the first part. The second part is the basin assignment task, where by Assumption~\ref{assump:mixture}, the likelihood of a hidden state
$h \in \mathcal{H}^{(\ell)}$ under misconception $m_k$ is
\[
p(h \mid m_k)
=
(2\pi\sigma^2)^{-d/2}
\exp\!\left(-\frac{\|h-\mu_k^{(\ell)}\|^2}{2\sigma^2}\right),
\]
where $d$ is the embedding dimension.

Using Bayes' rule and the uniform prior $P(m_k)=1/K$,
\begin{align*}
P(m_k \mid h)
&=
\frac{P(m_k)\,p(h \mid m_k)}
{\sum_{j=1}^K P(m_j)\,p(h \mid m_j)} \\[4pt]
&=
\frac{(1/K)\exp\!\left(-\|h-\mu_k^{(\ell)}\|^2/(2\sigma^2)\right)}
{\sum_{j=1}^K (1/K)\exp\!\left(-\|h-\mu_j^{(\ell)}\|^2/(2\sigma^2)\right)}.
\end{align*}

Canceling the common factor $(1/K)$ yields
\[
P(m_k \mid h)
=
\frac{\exp\!\left(-\|h-\mu_k^{(\ell)}\|^2/(2\sigma^2)\right)}
{\sum_{j=1}^K \exp\!\left(-\|h-\mu_j^{(\ell)}\|^2/(2\sigma^2)\right)}.
\]

Identifying misconception $m_k$ with basin $\mathcal{B}_k^{(\ell)}$
completes the proof.

\end{proof}

\subsubsection{Corresponding Multi-Basin Algorithm}

\begin{algorithm}[H]
\caption{Multi-Basin Detection for Misconceptions}
\label{alg:multi_basin}
\begin{algorithmic}[1]
\REQUIRE Hidden states $\{h_i\}$, labels $\{y_i\}$ (0=factual, 1=hall), candidate basins $K$
\STATE $H_{\text{fact}} \gets \{h_i : y_i = 0\}$, $H_{\text{hall}} \gets \{h_i : y_i = 1\}$
\STATE $\mu_{\text{ref}} \gets \frac{1}{|H_{\text{hall}}|}\sum_{h \in H_{\text{hall}}} h$

\STATE Compute total hallucination variance:
\[
\sigma_{\text{hall}}^2 \gets \frac{1}{|H_{\text{hall}}|}\sum_{h \in H_{\text{hall}}}\|h - \mu_{\text{ref}}\|^2
\]

\STATE $\{\mu_1, \ldots, \mu_K\} \gets \text{KMeans}(H_{\text{hall}}, K)$

\STATE Compute within-cluster variance:
\[
\sigma_{\text{within}}^2 \gets
\frac{1}{|H_{\text{hall}}|}
\sum_{k=1}^K \sum_{h \in \mathcal{B}_k} \|h - \mu_k\|^2
\]

\IF{$\sigma_{\text{within}}^2 / \sigma_{\text{hall}}^2 \ge \tau$}
    \STATE \textbf{return} single-basin collapse $(K=1)$
\ENDIF

\STATE Assign Voronoi labels:
\[
\hat{k}(h) = \arg\min_k \|h - \mu_k\|
\]

\STATE Train multi-class classifier on $(h_i, \hat{k}(h_i))$

\STATE \textbf{return} Basin centers $\{\mu_k\}$, classifier
\end{algorithmic}
\end{algorithm}
\section{Multi-Basin Partitioning}
In the next figures, we cluster hallucinated hidden states at the final output layer using a Gaussian mixture model to identify multiple hallucination basins. The results clusters are as visibly compact, and well-separeted, while corresponding to distinct misconception types. 
\newpage
\begin{figure}[t]
    \centering
    \begin{subfigure}{0.48\linewidth}
        \centering
      \includegraphics[width=\linewidth]{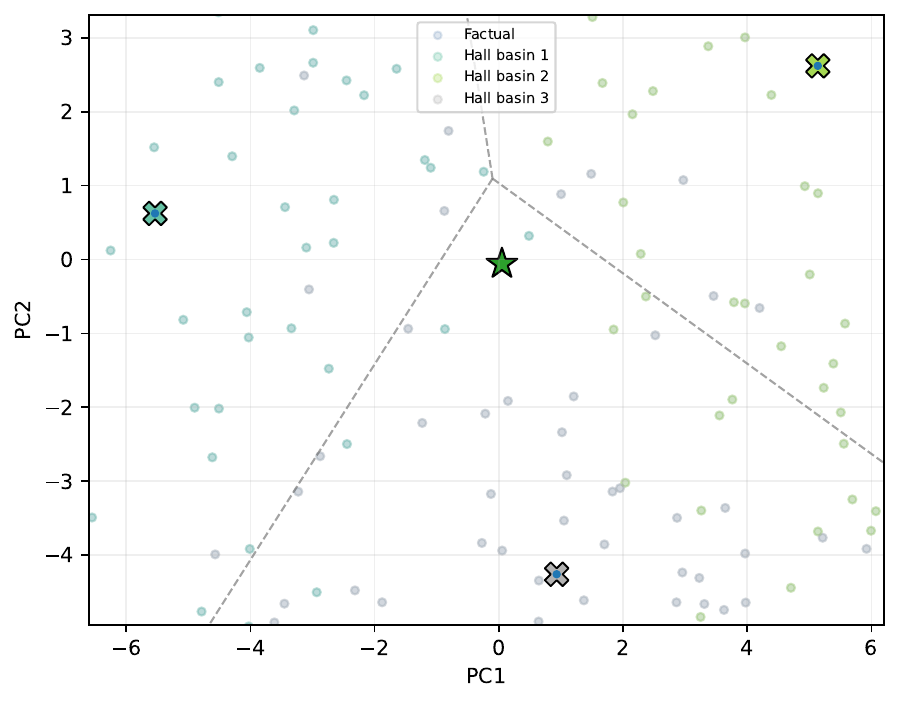}
        \caption{LLaMA-3.2 3B}
        \label{fig:multi_llama3b}
    \end{subfigure}
    \hfill
    \begin{subfigure}{0.48\linewidth}
        \centering
      \includegraphics[width=\linewidth]{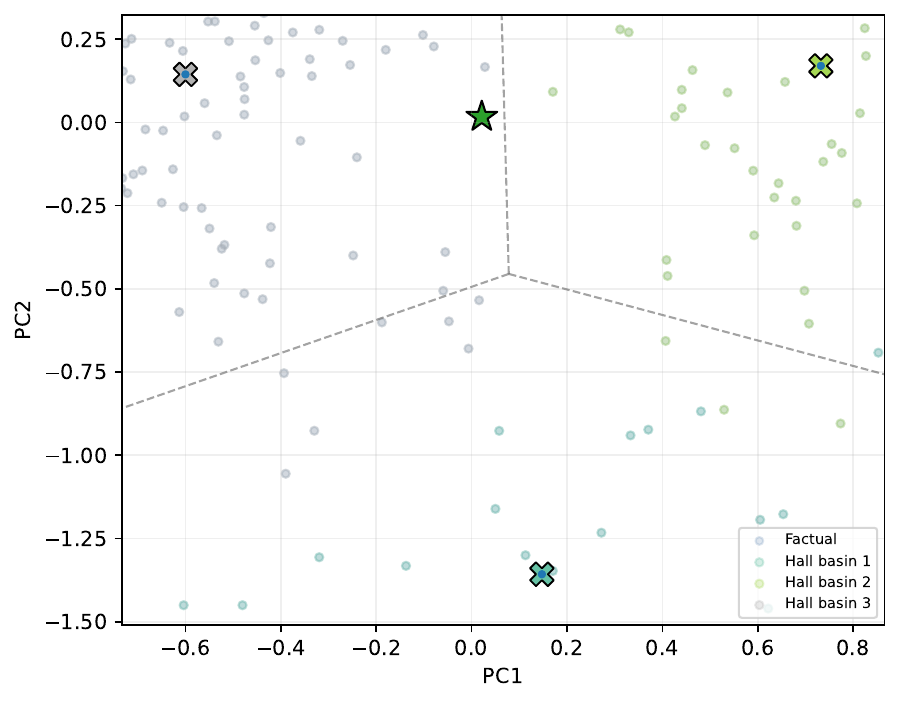}
        \caption{LLaMA-3.2 1B}
        \label{fig:multi_llama1b}
    \end{subfigure}

    \vspace{0.5em}

    \begin{subfigure}{0.48\linewidth}
        \centering
      \includegraphics[width=\linewidth]{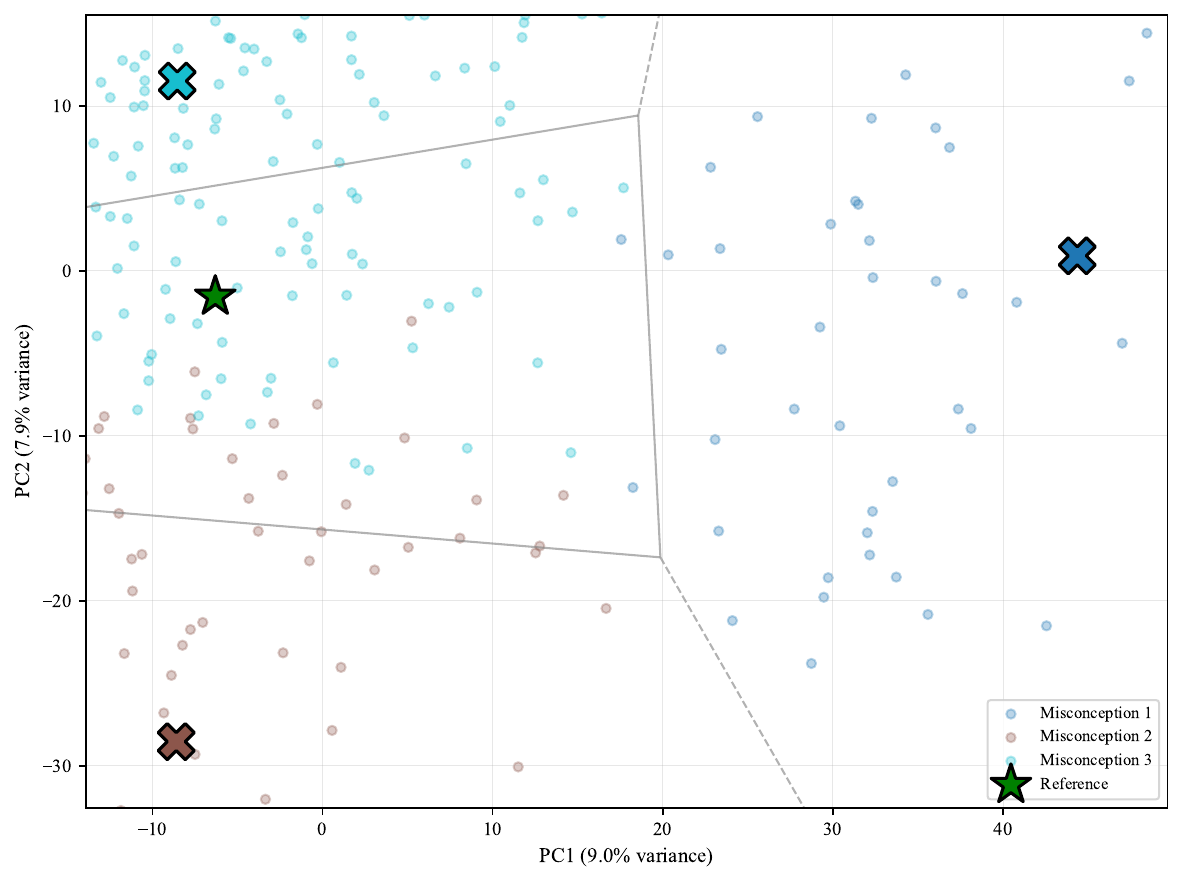}
        \caption{Qwen-2.5 1.5B}
        \label{fig:multi_qwen1p5b}
    \end{subfigure}
    \hfill
    \begin{subfigure}{0.48\linewidth}
        \centering
      \includegraphics[width=\linewidth]{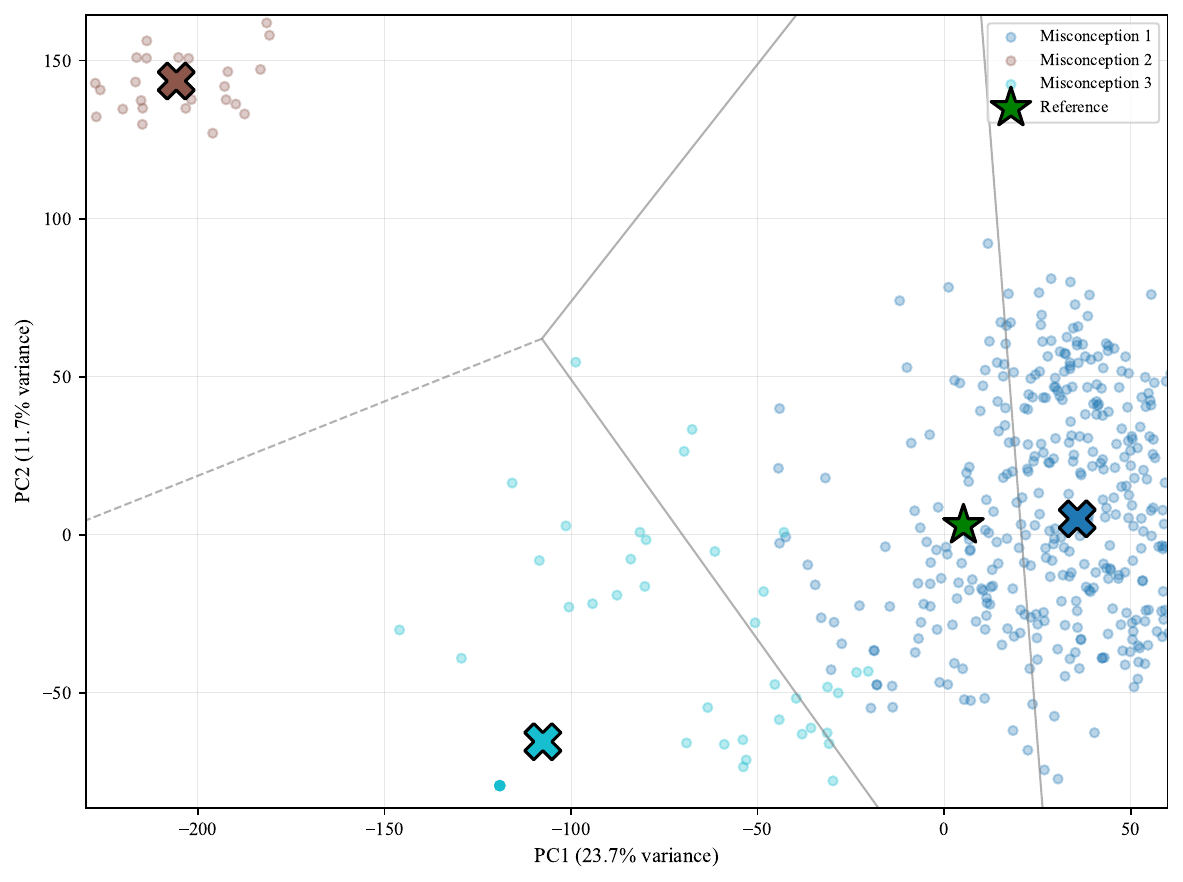}
        \caption{Gemma-2 2B}
        \label{fig:multi_gemma2b}
    \end{subfigure}

    \vspace{0.5em}

    \begin{subfigure}{0.48\linewidth}
        \centering
      \includegraphics[width=\linewidth]{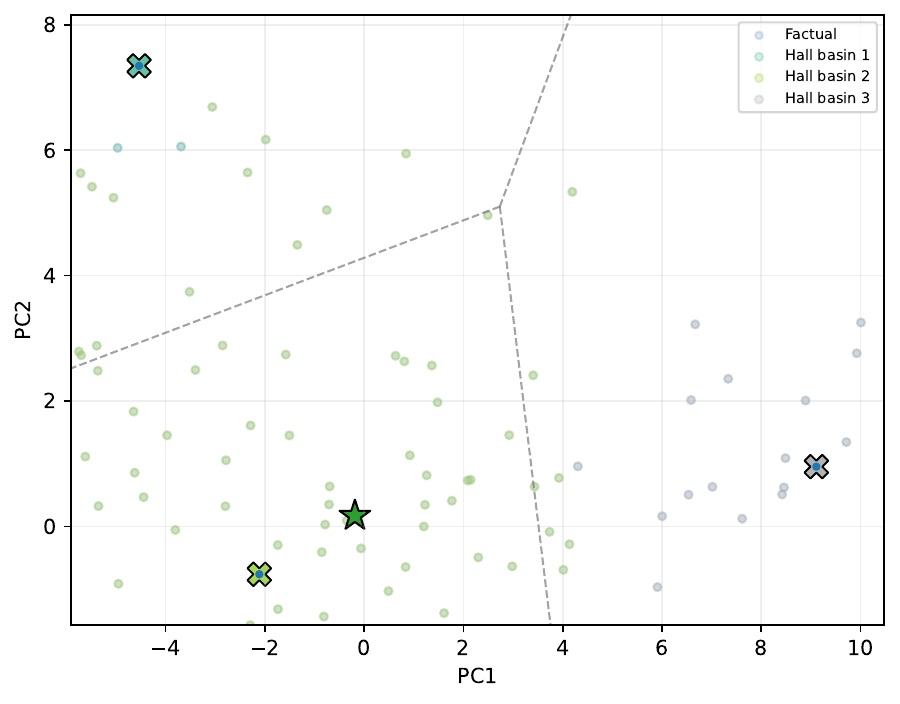}
        \caption{Llama-3.1 8B}
        \label{fig:multi_llama8b}
    \end{subfigure}
    \hfill
    \begin{subfigure}{0.48\linewidth}
        \centering
      \includegraphics[width=\linewidth]{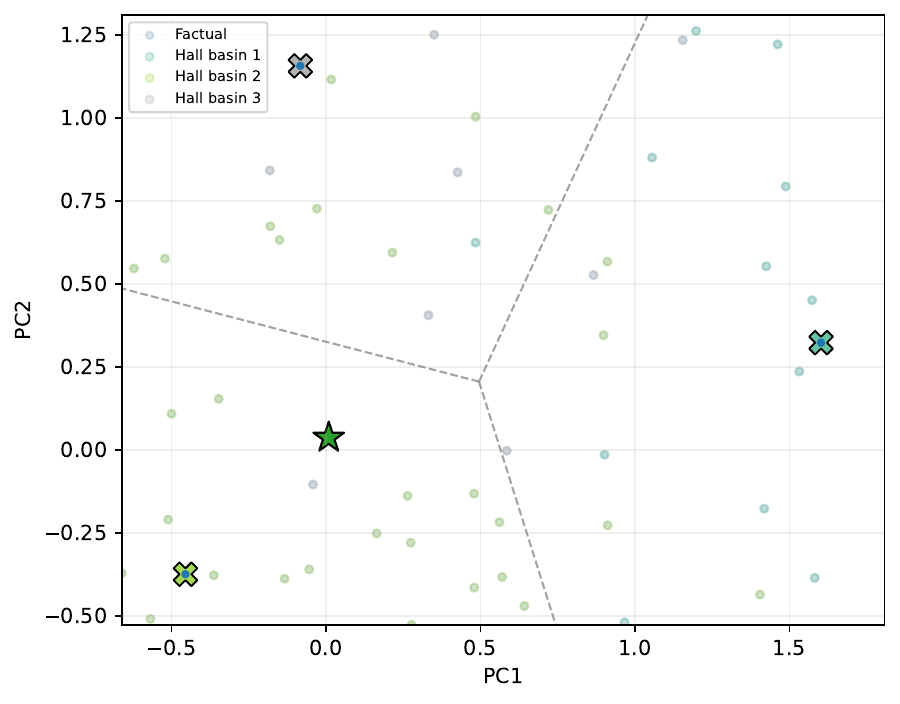}
        \caption{Mistalv0.3 7B}
        \label{fig:multi_mistral7b}
    \end{subfigure}
    \caption{Multi-basin Voronoi structure across models on TruthfulQA. Each panel shows distinct hallucination basins corresponding to different misconception modes.}
    \label{fig:multi_basin_all}
\end{figure}

\FloatBarrier
\subsection{Trajectory Stability}

We now formalize the link between the trajectory trapping phenomenon and loss of the dependence on the context.

\begin{definition}[Context Sensitivity]
    Let the sensitivity of the output distribution to latent perturbations be:
    \[
    \mathcal{S}(h) = \sup_{\|\delta||_2\le\epsilon} ||g(h+\delta) - g(h)||_1.
    \]
\end{definition}
\begin{theorem}[Stability Implies Context Insensitivity]
    Suppose: (1) the trajectory satisfies $h^{(\ell)}(x) \in \mathcal{B}_{\ell}(r)$, and (2) the readout function $g$ is $\kappa$-Lipschitz on $\mathcal{B}_{\ell}(r)$.
Then
$
\mathcal{S}(h^{(\ell)}(x)) \le \kappa\epsilon
$,
and in particular the output distribution is insensitive to latent perturbations that preserve membership in $\mathcal{B}_\ell (r)$.
\end{theorem}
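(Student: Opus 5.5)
The argument is a direct application of the Lipschitz hypothesis (2) to the definition of $\mathcal{S}$, with one geometric subtlety: the perturbed point $h+\delta$ must lie in the region where the Lipschitz bound is assumed to hold. I fix $h := h^{(\ell)}(x)$, which lies in $\mathcal{B}_\ell(r)$ by hypothesis (1). I interpret ``$\kappa$-Lipschitz'' as Lipschitz from the Euclidean norm on $\mathbb{R}^d$ to the $\ell_1$ norm on output distributions, i.e.
\[
\|g(u)-g(v)\|_1 \le \kappa \|u-v\|_2 \quad \text{for all } u,v\in\mathcal{B}_\ell(r).
\]
This matches the norms appearing in the definition of $\mathcal{S}$.

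\textbf{Step 1.} I take an arbitrary $\delta$ with $\|\delta\|_2 \le \epsilon$ and split into two cases according to whether $h+\delta\in\mathcal{B}_\ell(r)$.

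In the first case, both $h$ and $h+\delta$ lie in $\mathcal{B}_\ell(r)$. Applying the Lipschitz bound gives
\[
\|g(h+\delta)-g(h)\|_1 \le \kappa\|\delta\|_2 \le \kappa\epsilon.
\]

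In the second case, $h+\delta$ leaves the basin, and hypothesis (2) says nothing there. This is the main obstacle.

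\textbf{Step 2: handling perturbations that leave the basin.} I see two options and will use whichever the intended reading supports.

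The first option interprets the supremum in $\mathcal{S}$ as ranging only over membership-preserving perturbations, $\{\delta : \|\delta\|_2\le\epsilon,\ h+\delta\in\mathcal{B}_\ell(r)\}$. This is exactly the ``in particular'' clause of the statement. Under this reading, Step 1 already covers every admissible $\delta$, and taking the supremum gives $\mathcal{S}(h)\le\kappa\epsilon$.

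The second option keeps the unrestricted supremum and adds a margin condition. If $\|h-\mu^{(\ell)}\|_2 \le r-\epsilon$, then by the triangle inequality
\[
\|h+\delta-\mu^{(\ell)}\|_2 \le \|h-\mu^{(\ell)}\|_2 + \|\delta\|_2 \le r,
\]
so every admissible $\delta$ preserves membership and the second case never arises. This margin can be obtained from trajectory trapping: by Theorem~\ref{thm:traj_trap}, at deeper layers $\|h^{(\ell)}-\mu^{(\ell)}\|_2\le\bar\alpha^{\ell-\ell_1}r$. The margin condition then holds whenever $\epsilon\le(1-\bar\alpha^{\ell-\ell_1})r$.

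I would state the result under the first reading and note the second as a sufficient condition for the unrestricted version.

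\textbf{Step 3.} Taking the supremum over admissible $\delta$ yields $\mathcal{S}(h^{(\ell)}(x))\le\kappa\epsilon$. Context insensitivity then follows by the same argument: any two contexts whose layer-$\ell$ states both lie in $\mathcal{B}_\ell(r)$ within distance $\epsilon$ of each other produce output distributions within $\kappa\epsilon$ in $\ell_1$. Combined with the radius decay of Theorem~\ref{thm:traj_trap}, any two states in the basin are within $2\bar\alpha^{\ell-\ell_1}r$ of each other. Their outputs therefore differ by at most $2\kappa\bar\alpha^{\ell-\ell_1}r$ in $\ell_1$, which shrinks as $\ell$ grows.
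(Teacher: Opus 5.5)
Your proof matches the paper's, which is just the one-line estimate $\|g(h+\delta)-g(h)\|_1\le\kappa\|\delta\|_2\le\kappa\epsilon$. Your case split improves on it: the paper silently assumes $h+\delta\in\mathcal{B}_\ell(r)$, which the hypotheses do not guarantee when the supremum is unrestricted, so either your membership-preserving reading of $\mathcal{S}$ or your margin condition $\|h-\mu^{(\ell)}\|_2\le r-\epsilon$ is genuinely needed for the stated bound.
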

\begin{proof}
    Follows from the Lipschitz assumption:
    $||g(h+\delta) - g(h)||_1 \le \kappa||\delta||_2 \le \kappa \epsilon$.
\end{proof}
Once a trajectory is trapped, variations within the hidden state no longer meaningfully affect the output. This is the mechanism by which hallucination-like behavior arises, which is in the fluent but context-insensitive generation.

\section{An Adaptive Steering Vector}
\label{sec:adaptive_steering}
\FloatBarrier

\subsection{Risk-Aware Steering}

Standard steering vectors apply a constant penalty $\lambda$ across all inputs, which often degrades performance on factual queries where no intervention is needed. To mitigate this, we propose a geometry-aware controller that dynamically scales $\lambda$ based on the hidden state's proximity to a hallucination basin.

We first define the static steering direction $v^{(\ell)}_{\text{steer}}$ as the difference between the factual and hallucinated centroids at layer $\ell$:
\[
v^{(\ell)}_{\text{steer}} = \mu^{(\ell)}_{\text{fact}} - \mu^{(\ell)}_{\text{hall}} = \frac{1}{|D_{\text{fact}}|} \sum_{x \in D_{\text{fact}}} h^{(\ell)}(x) - \frac{1}{|D_{\text{hall}}|} \sum_{x \in D_{\text{hall}}} h^{(\ell)}(x).
\]

To determine the intervention magnitude, we introduce two geometric features:

\begin{definition}[Local Contraction Ratio]
\label{def:local_kappa}
The rate at which the hidden state trajectory converges toward the basin center between layers $\ell$ and $\ell+1$:
\[
\kappa_{\text{local}}^{(\ell)}(h) = \frac{\|h^{(\ell+1)} - \mu^{(\ell+1)}\|_2}{\|h^{(\ell)} - \mu^{(\ell)}\|_2 + \epsilon},
\]
where $\epsilon$ is a small constant for numerical stability. A ratio $\kappa < 1$ indicates active collapse into the basin.
\end{definition}

Definition~\ref{def:distance} is also accompanied as the second geometric feature into the method. We aggregate these features into a risk signature vector $\Phi(x) \in \mathbb{R}^2$. The steering intensity is then determined by a learned scalar map $\lambda: \mathbb{R}^2 \to \mathbb{R}_+$ (e.g., a logistic regression trained to distinguish factual/hallucinated trajectories based on geometry):
\[
h^{(\ell)}_{\text{steered}}(x) = h^{(\ell)}(x) + \lambda(\Phi(x)) \cdot v^{(\ell)}_{\text{steer}}.
\]

\begin{algorithm}[H]
\caption{Geometry-Aware Adaptive Steering}
\label{alg:adaptive_steering}
\begin{algorithmic}[1]
\REQUIRE Model $f_\theta$, input $X$, centroids $\{\mu^{(\ell)}\}$, steering vectors $\{v^{(\ell)}_{\text{steer}}\}$, controller $\lambda(\cdot)$, layers $\mathcal{L}_{\text{steer}}$
\ENSURE Steered hidden states $\{h^{(\ell)}_{\text{steered}}\}$

\STATE $\{h^{(\ell)}(X)\}_{\ell=1}^{L} \gets f_\theta(X, \text{output\_hidden\_states}=\text{True})$

\FOR{$\ell \in \mathcal{L}_{\text{steer}}$}
    \STATE $d^{(\ell)} \gets \|h^{(\ell)}(X) - \mu^{(\ell)}\|_2$ 
    
    \IF{$\ell < \max(\mathcal{L}_{\text{steer}})$}
        \STATE $\kappa^{(\ell)} \gets \frac{\|h^{(\ell+1)} - \mu^{(\ell+1)}\|}{\|h^{(\ell)} - \mu^{(\ell)}\| + \epsilon}$ 
    \ELSE
        \STATE $\kappa^{(\ell)} \gets 1.0$ 
    \ENDIF
\ENDFOR

\STATE $\Phi(X) \gets [\min_{\ell}(d^{(\ell)}), \text{mean}_{\ell}(\kappa^{(\ell)})]$ 

\STATE $\lambda_X \gets \lambda(\Phi(X))$ 

\FOR{$\ell \in \mathcal{L}_{\text{steer}}$}
    \STATE $h^{(\ell)}_{\text{steered}} \gets h^{(\ell)}(X) + \lambda_X \cdot v^{(\ell)}_{\text{steer}}$
\ENDFOR

\STATE $y \gets \text{SteeredGeneration}(f_\theta, X, \{h^{(\ell)}_{\text{steered}}\})$
\end{algorithmic}
\end{algorithm}

\subsection{Empirical Validation of Algorithm~\ref{alg:adaptive_steering}}
We validate this empirically on Llama-3.2-1b, Llama-3.2-3b, Qwen-2.5-1.5b on HaluEval QA and Llama-3.2-1b on MuSiQue.

\begin{figure}[H]
\centering
\includegraphics[width=0.85\columnwidth]{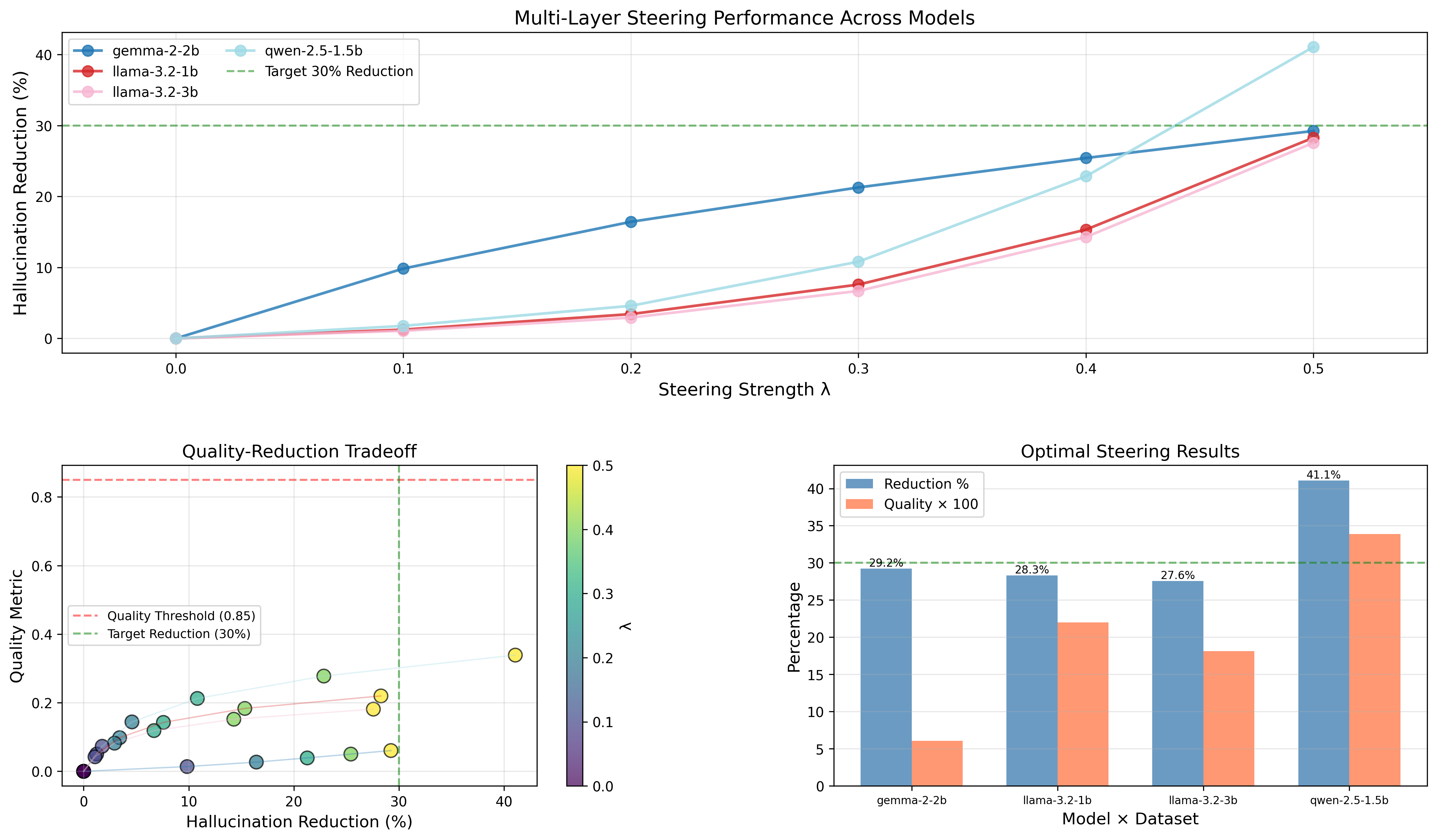}
\caption{Efficacy of Algorithm~\ref{alg:adaptive_steering} in hallucination reduction as a function of the steering strength $\lambda$.}
\label{fig:steering_comp}
\end{figure}

\section{Extended Empirical Validations}

\subsection{Autoregressive Irreversibility}

\begin{figure}[H]
\centering
\includegraphics[width=0.8\linewidth]{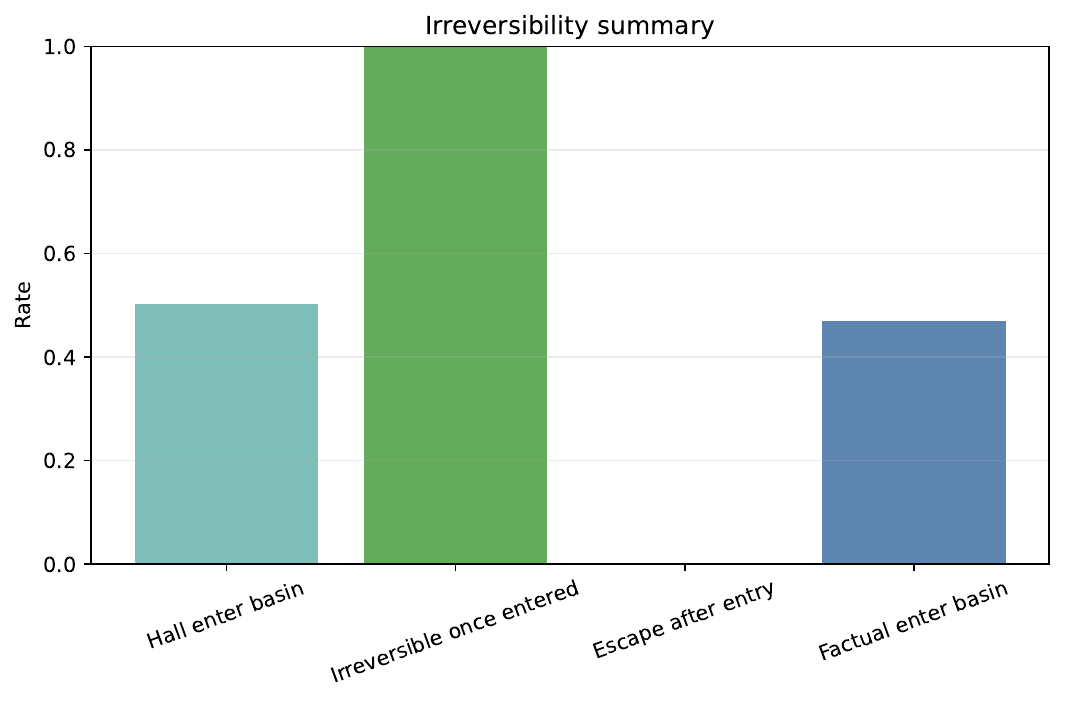}
\caption{Irreversibility summary under autoregressive decoding (HaluEval QA, Llama-3.2-1B, best layer). We report basin-entry, conditional irreversibility, escape-after-entry, and factual entry rates. This verifies Theorem~\ref{thm:traj_trap}.}
\label{fig:autoregressive_irreversibility_summary}
\end{figure}

\subsection{Layer-Wise Attention Entropy}

\begin{figure}[H]
\centering
\includegraphics[width=0.90\linewidth]{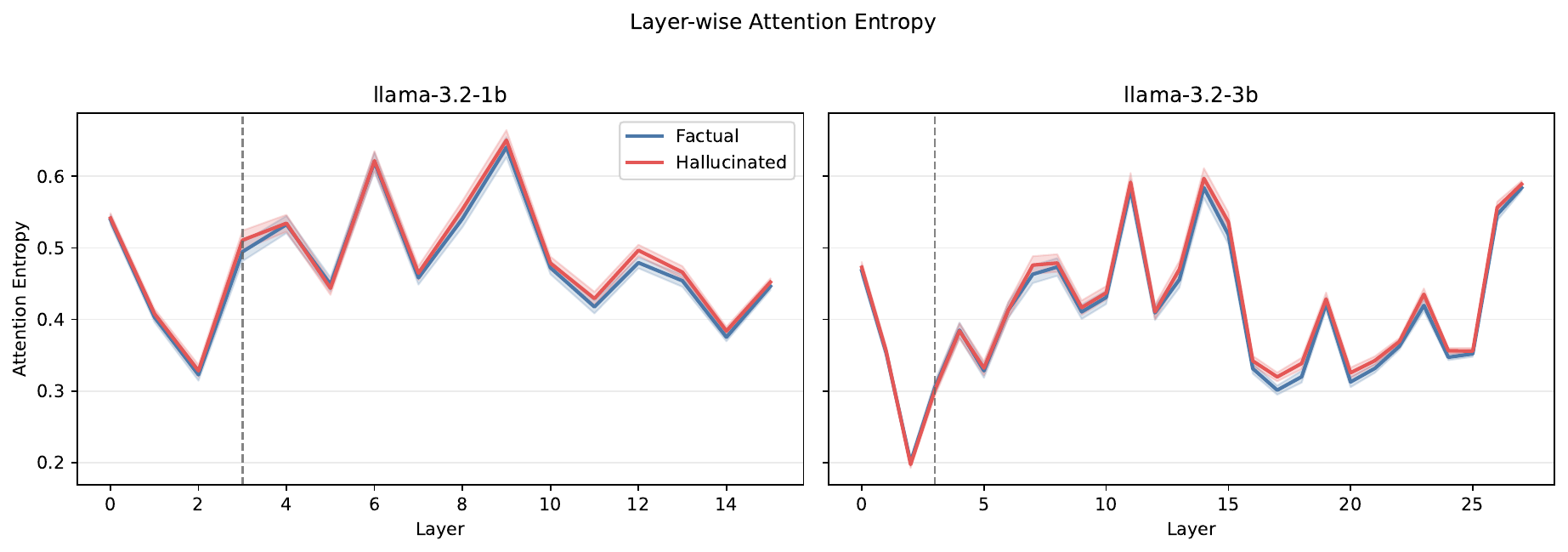}
\caption{Layer-wise attention entropy for factual versus hallucinated generations under uninformative contexts (autoregressive extraction). Entropy trends provide a complementary signal to basin-separation metrics. Supports the uniform attention assumption.}
\label{fig:attention_entropy_comparison}
\end{figure}

\newpage
\FloatBarrier
\subsection{Causality Intervention Paths}
\label{app:intervention}

This section presents figures of 3D PCA projections of middle-layer hidden activations with factual and hallucination samples plotted, together with the interpolation trajectory (Intervention Path) between their centroids. For each strength of steering $\alpha$ we overlay the in-model mean hidden states produced via injection of the learned basin direction during the generation forward passes. Together, the geometry and the in-model interventions result in direct causal evidence that a basin direction in hidden-state space both organizes the hallucination examples and, when injected during generation, it drives the model into higher hallucination probabilities.

\OneWideFigure{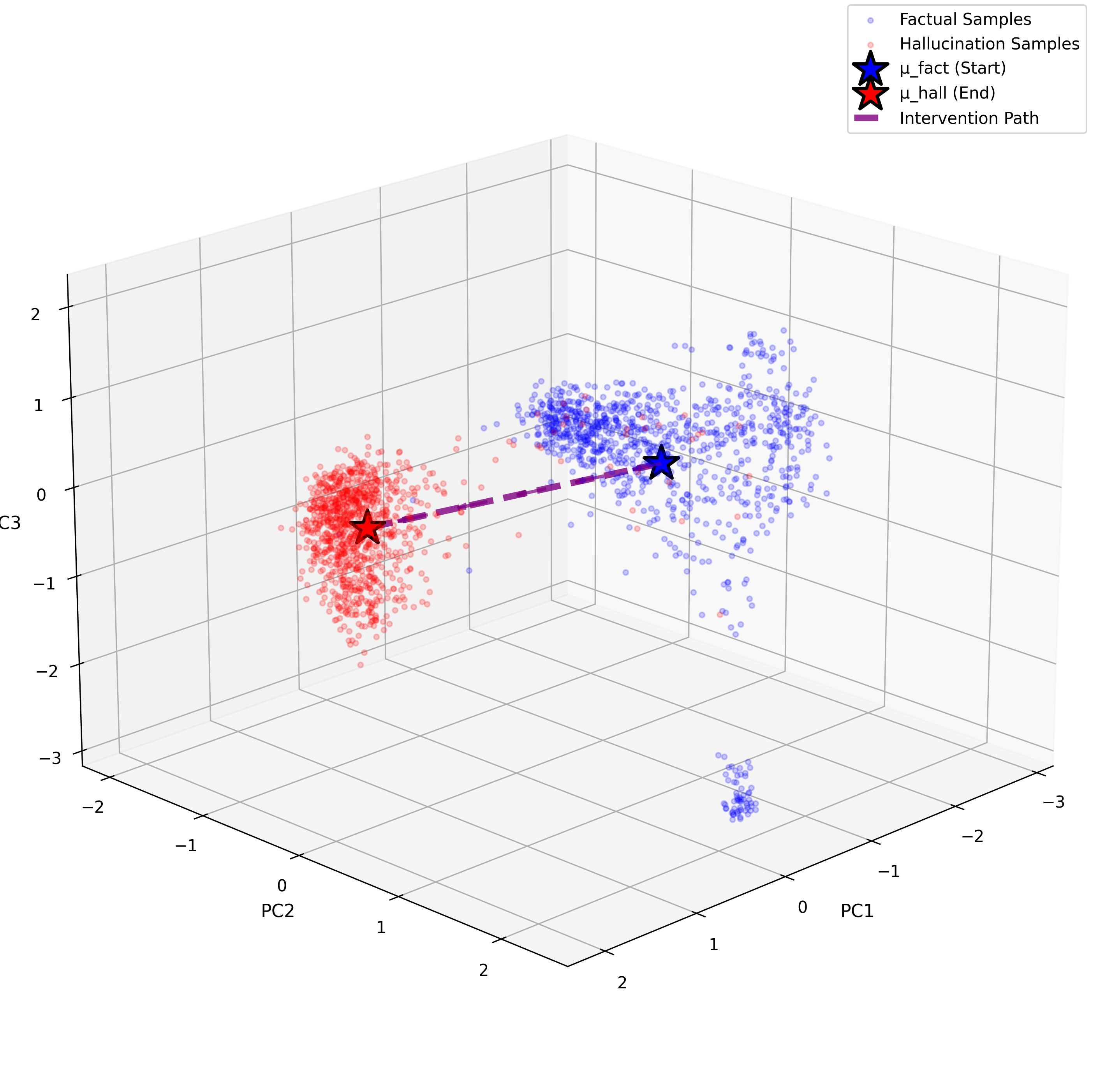}{ Causal Intervention Paths: Llama-3.2-1B (HaluEval QA)}{fig:causality_traj_llama1b}
\OneWideFigure{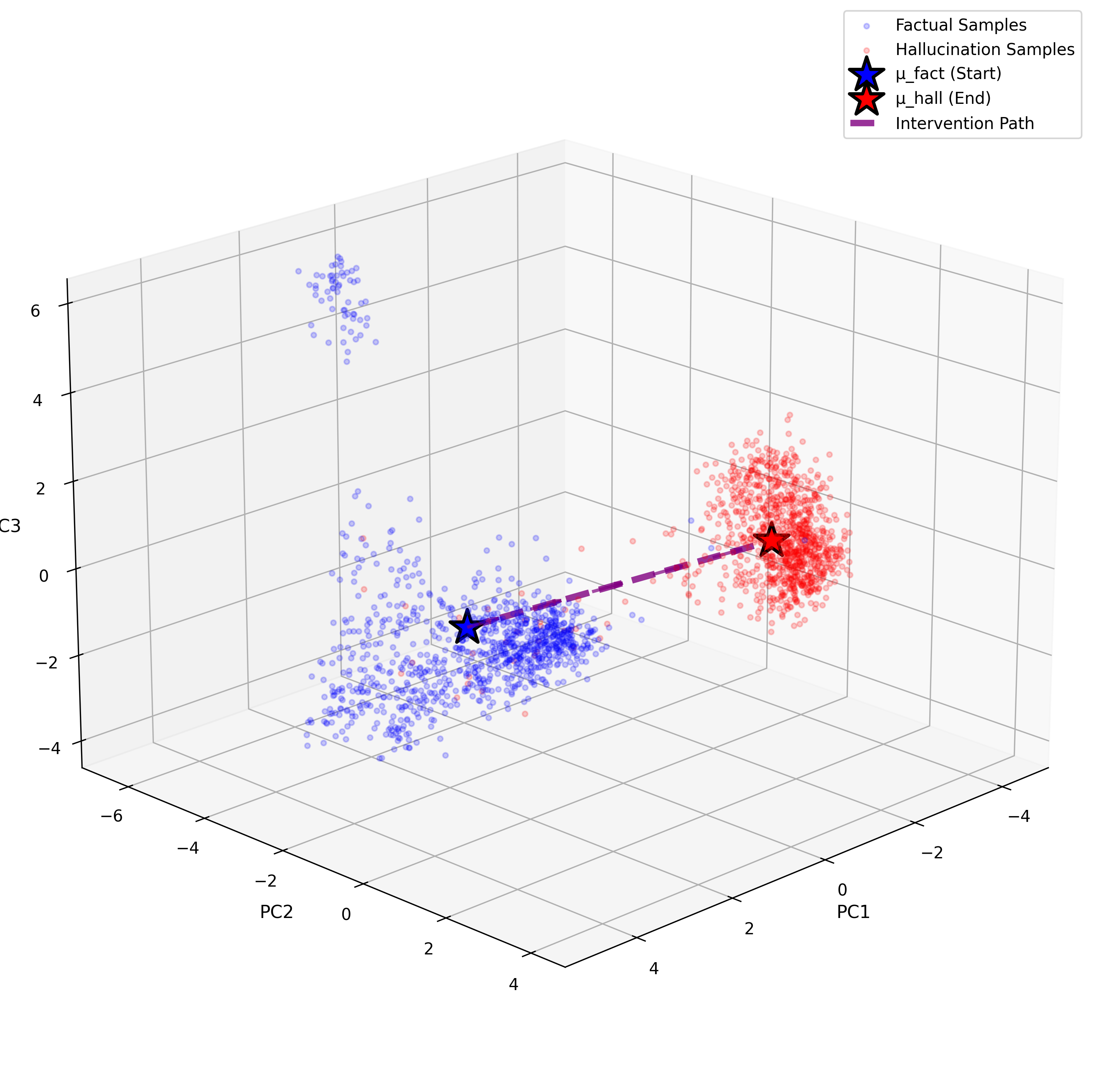}{ Causal Intervention Paths: Llama-3.2-3B (HaluEval QA)}{fig:causality_traj_llama3b}
\OneWideFigure{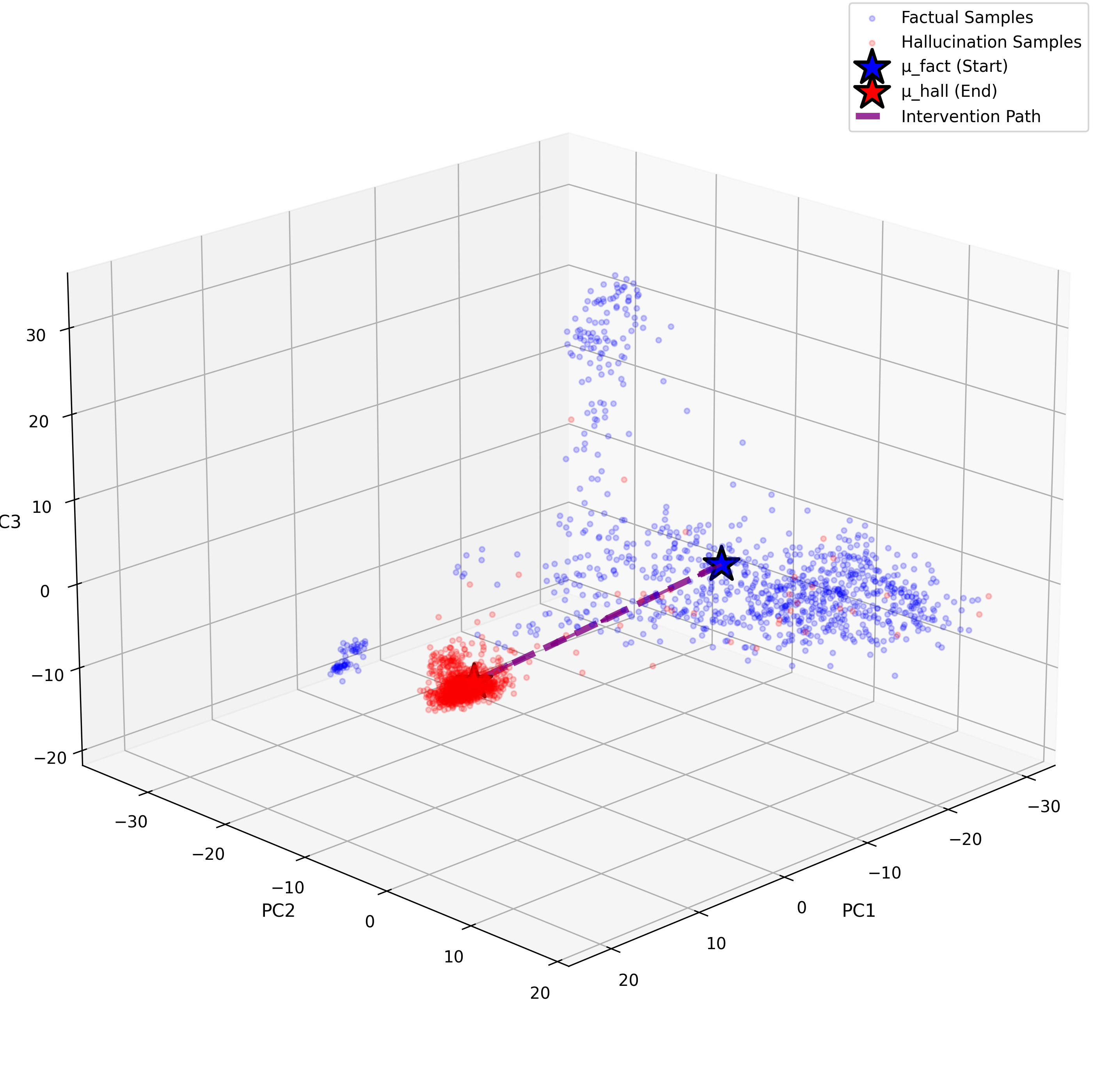}{ Causal Intervention Paths: Qwen-2.5-1.5B (HaluEval QA)}{fig:causality_traj_qwen1p5b}

\newpage
\FloatBarrier
\subsection{2D Layer Evolutions}
This result provides samples at every 3rd layer of the model. And it evaluates and projects the subsequent 2D PCA plot. 
\OneWideFigure{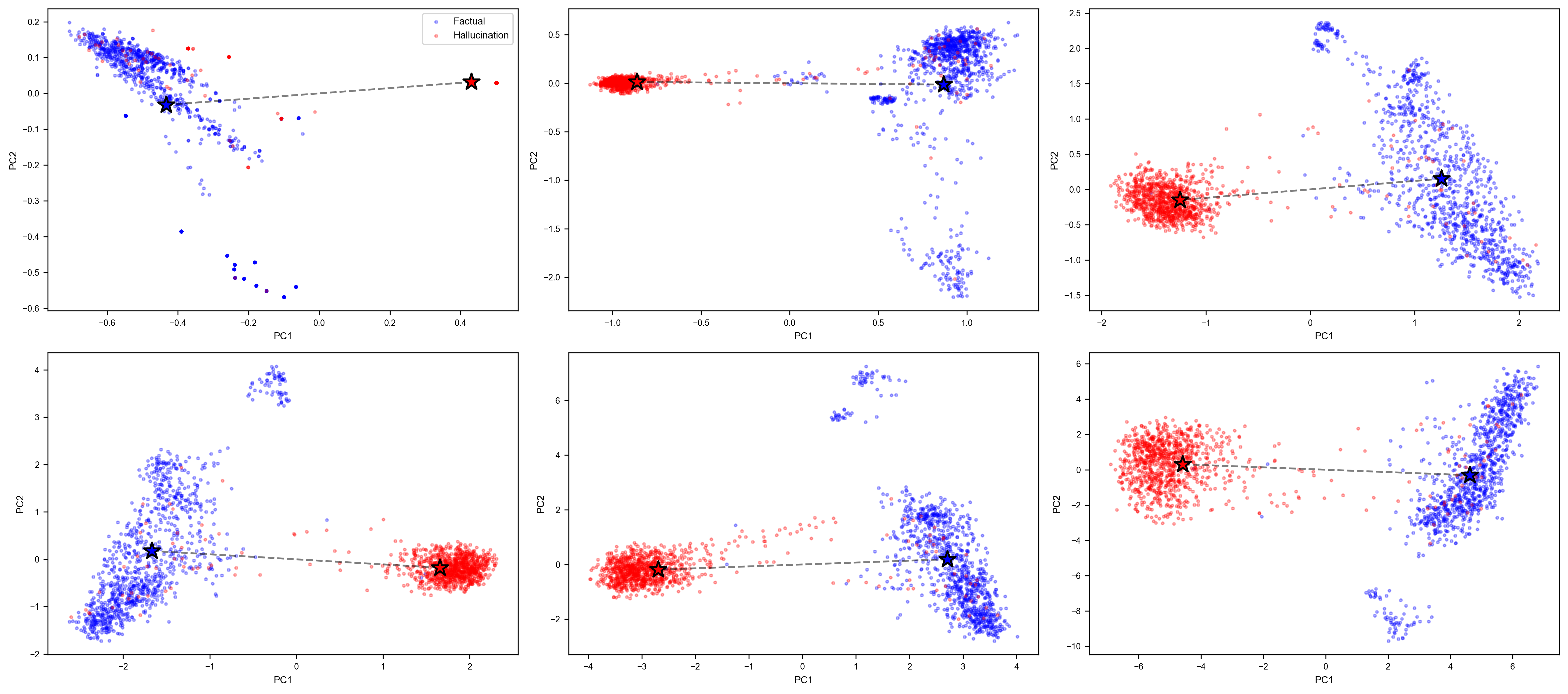}{2D PCA Evolution: Llama-3.2 1B (QA).}{fig:layer2d_llama1b_qa}
\OneWideFigure{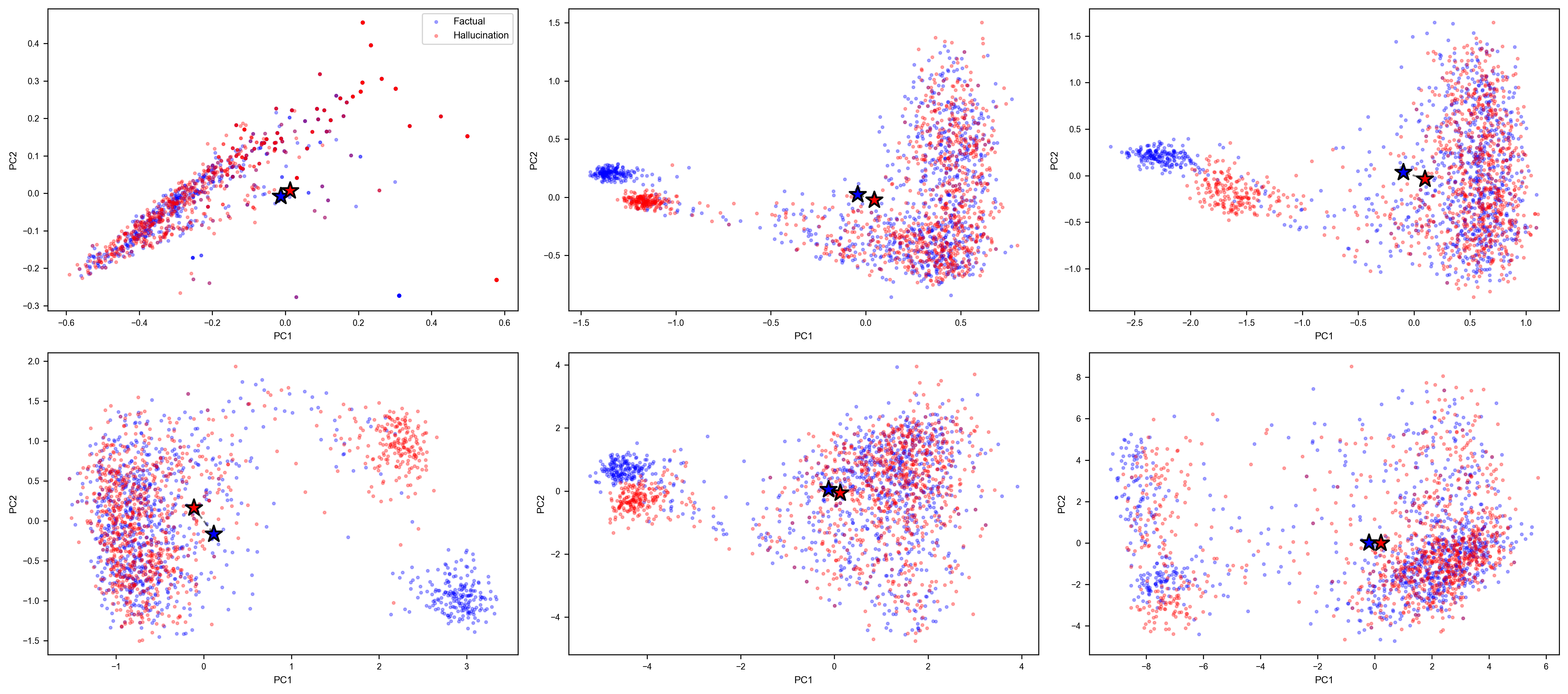}{2D PCA Evolution: Llama-3.2 1B (Summarization)}{fig:layer2d_llama1b_sum}
\OneWideFigure{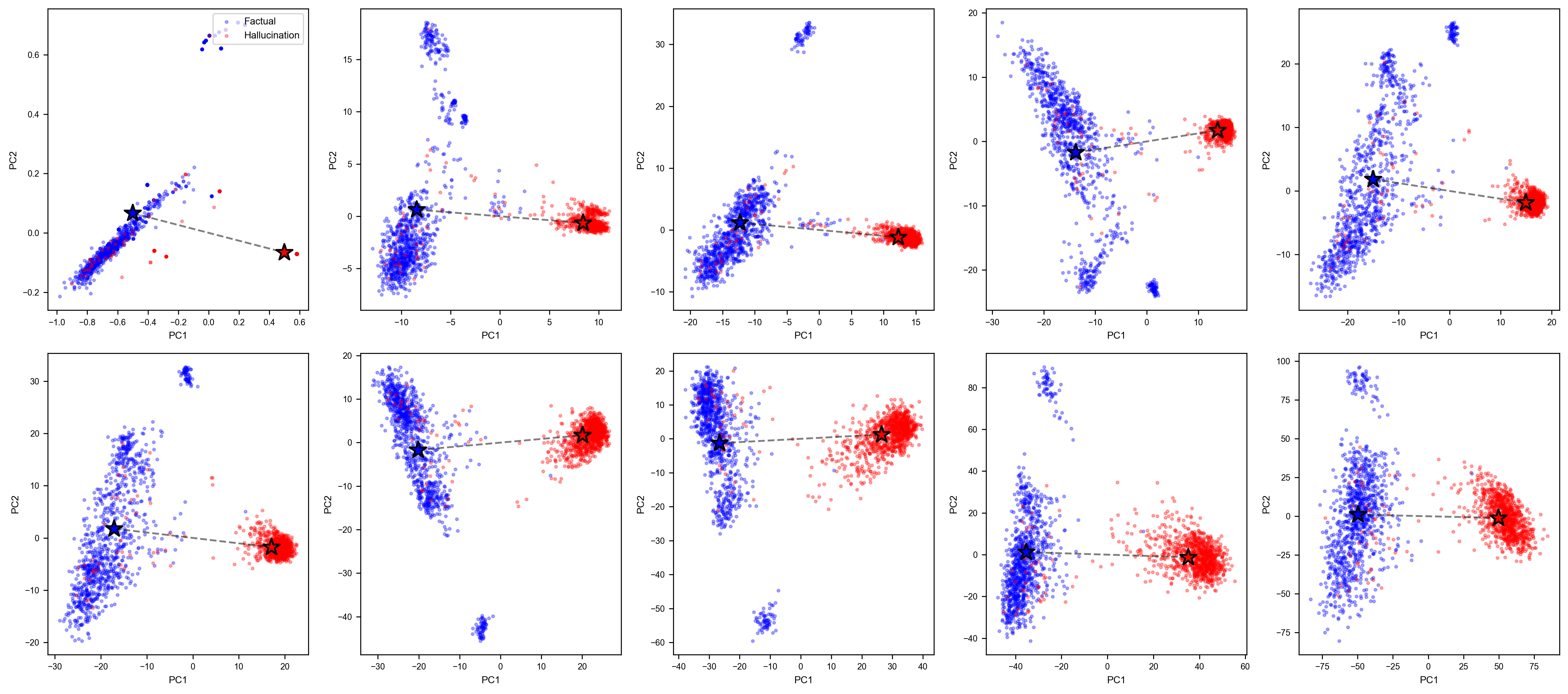}{2D PCA Evolution: Qwen-2.5 1.5B (QA)}{fig:layer2d_qwen1p5b}
\OneWideFigure{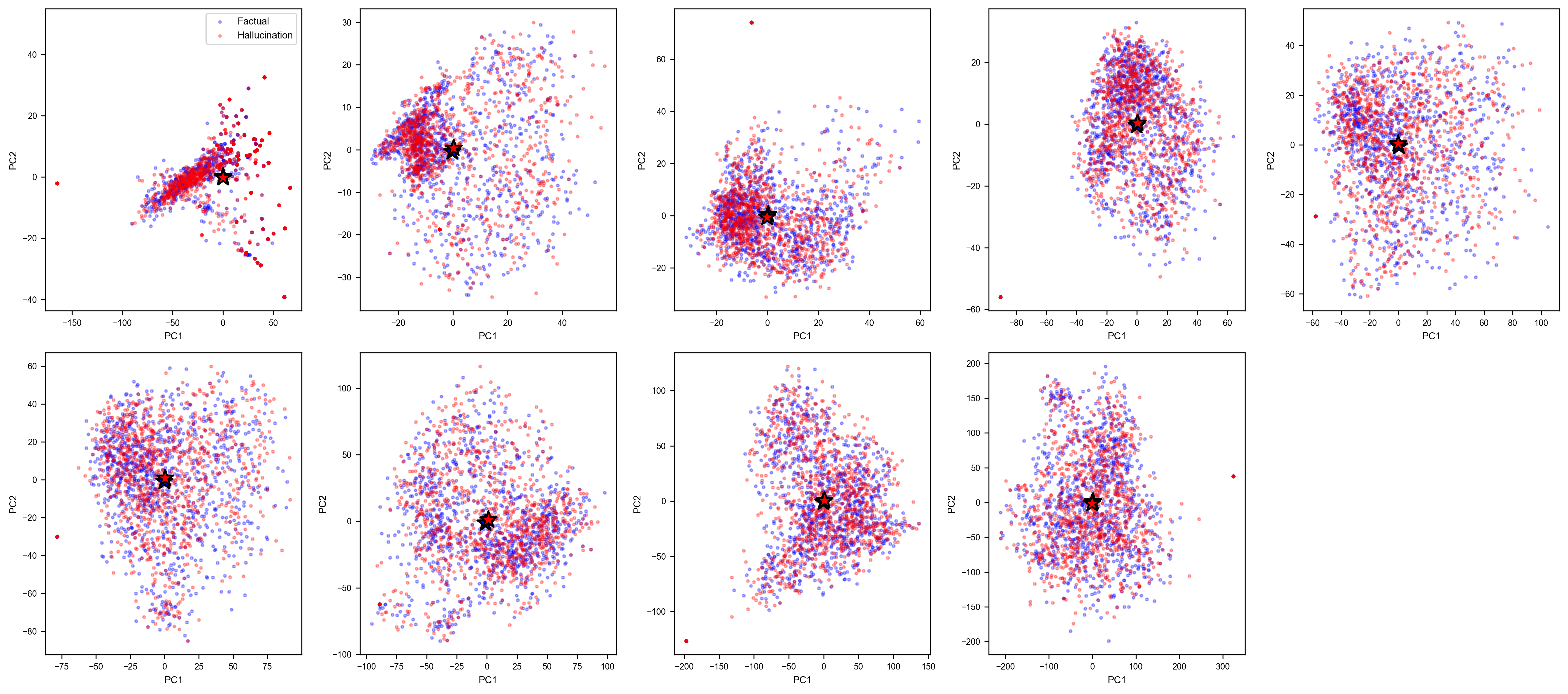}{2D PCA Evolution: Gemma-2 2B (Summarization)}{fig:layer2d_gemma2b}

\newpage
\subsection{3D Layer Evolutions}
This result provides samples at every 3rd layer of the model. And it evaluates and projects the subsequent 3D PCA plot. 
\noindent\includegraphics[width=\linewidth,height=0.72\textheight,keepaspectratio]{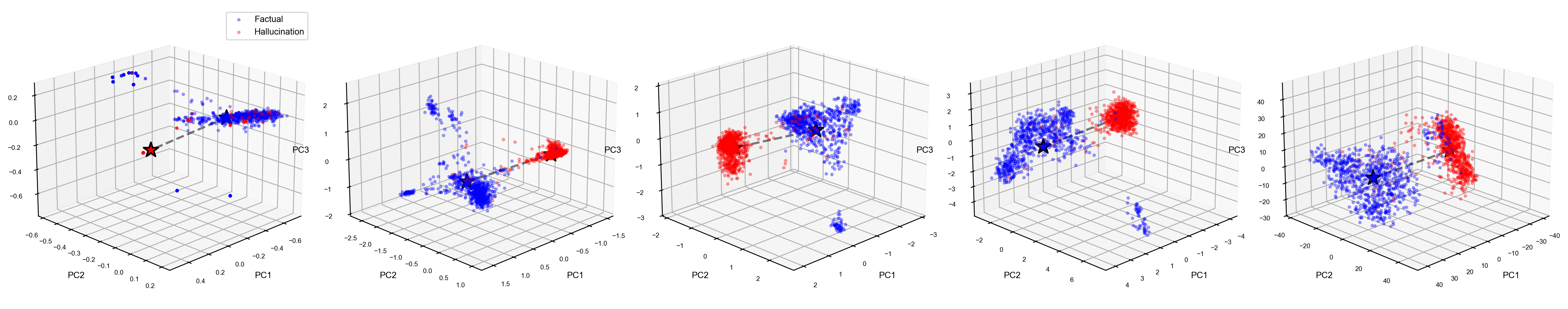}
\captionof{figure}{3D PCA Evolution: Llama-3.2 1B (QA)}
\label{fig:layer3d_llama1b_qa}

\noindent\includegraphics[width=\linewidth,height=0.72\textheight,keepaspectratio]{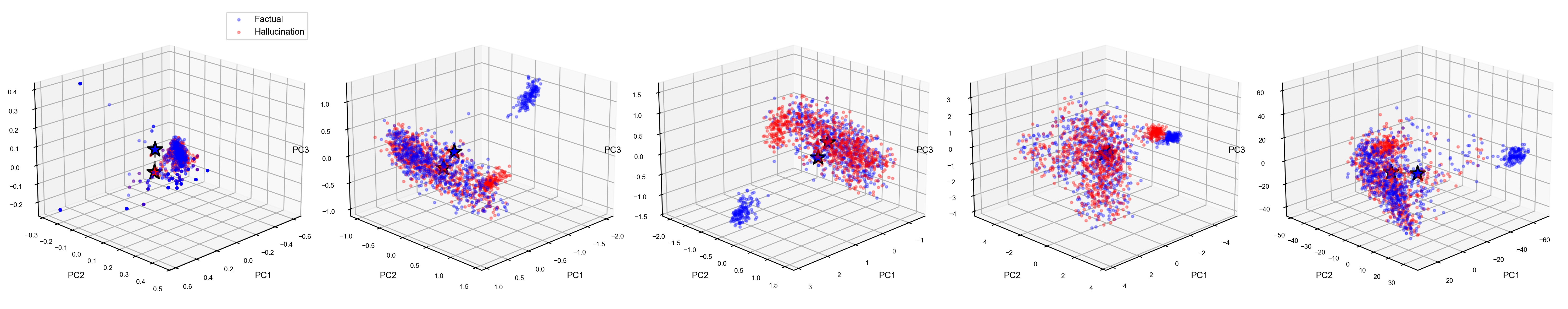}
\captionof{figure}{3D PCA Evolution: Llama-3.2 1B (Summarization)}
\label{fig:layer3d_llama1b_sum}

\noindent\includegraphics[width=\linewidth,height=0.72\textheight,keepaspectratio]{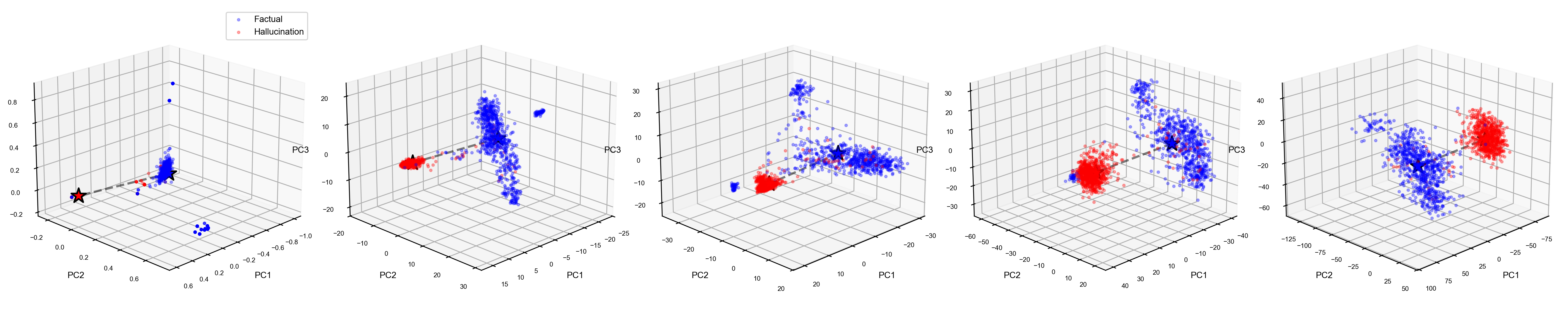}
\captionof{figure}{3D PCA Evolution: Qwen-2.5 1.5B (QA)}
\label{fig:layer3d_qwen1p5b}

\noindent\includegraphics[width=\linewidth,height=0.72\textheight,keepaspectratio]{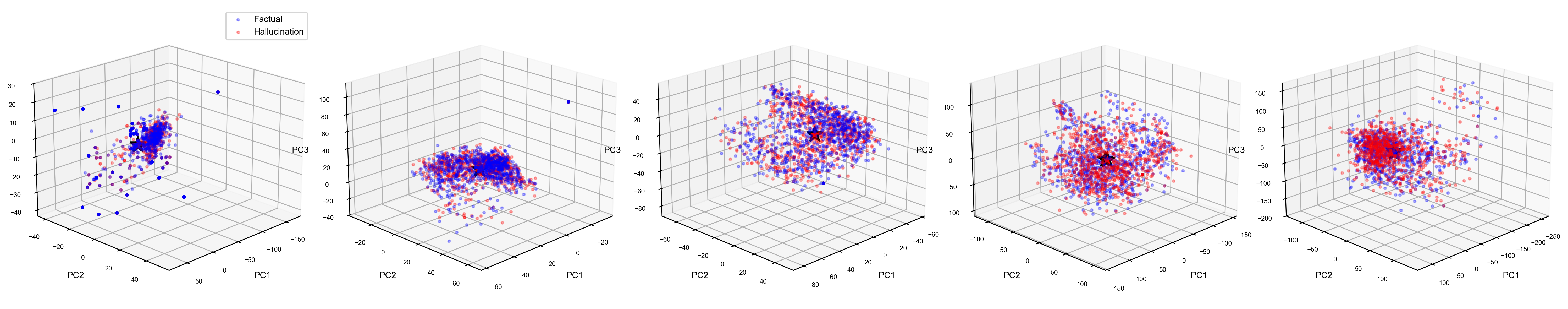}
\captionof{figure}{3D PCA Evolution: Gemma-2 2B (Summarization)}
\label{fig:layer3d_gemma2b}

\newpage
\section{Limitations and Future Work}
\label{limitations}
\paragraph{Access and Observability} Our approach assumes access to internal hidden states and the ability to estimate reference centroids from uninformative contexts, which may limit direct application to closed or inference-only APIs. Exploring black-box approximations is the natural direction for future work.

\paragraph{Computational Limitations} All experiments in this work were conducted using a single NVIDIA RTX 4060 Laptop GPU with 8GB of VRAM. While this setup is sufficient for controlled analysis of representation dynamics in mid- to large-sized open-source language models, it constrained the scale of models, context lengths, and experimental variants that could be evaluated. In particular, these limits made systematic experimentation with substantially larger models, dense hyperparameter sweeps, and long-horizon autoregressive decoding computationally impractical.

\paragraph{Models and Tasks} Experiments focus and run with several mid-sized open-source models and standard benchmarks for hallucination, which is sufficient to demonstrate and validate the phenomenon but doesn't necessarily guarantee transfer to different model architectures, that may be proprietary or multimodal. Thus, a natural direction for future work would include explorations into more model families, longer contexts, and domain-specific tasks.

\paragraph{Theoretical Approximations} Our theoretical results have a majority of simplifying assumptions (e.g., approximate attention uniformity), which are intended to clarify the mechanism rather than try to characterize all transformer variants present in the literature. We view this as potentially being more generalizable to families of transformer variants than to explain every single one.

\paragraph{Hidden-State Extraction} We analyze hidden-state trajectories under autoregressive decoding, but some runs retain low effective sample counts after filtering invalid trajectories. Improving throughput and expanding balanced autoregressive splits remain important future work.

\section{Hyperparameters, Reproducibility and Code Availability}

\subsection{Causal Intervention Protocol}
\label{app:discussion}
\paragraph{What we actually do?} During generation we inject a steering vector into the model’s hidden activations at a chosen transformer layer so that all downstream computation (attention, layernorm/FFN nonlinearities, and future token predictions) observes the perturbation. This is a true in-model causal intervention.

\subsection{Evaluation Protocol}
\label{app:evaluation_protocol}
For all experiments presented in this paper, we enforced strict train/test separation for both the labeling and reference construction processes. Each dataset is randomly split into 70\% training and 30\% test (the data is stratified wherever applicable). All layerwise reference statistics and measures (this includes factual and hallucinated centroids $\mu^{(\ell)}$, covariance estimates $\Sigma^{(\ell)}$, and basin radii $r^{(\ell)}$ are all \textbf{\textit{computed exclusively on the training split}} and then frozen. Test examples are never used at any stage of the reference estimation process, threshold selections, or for tuning hyperparameters. Hallucination labels are strictly derived from the dataset-provided annotations (e.g., ground-truth answers). These labels \textbf{do not} depend on internal model signals or detector-based outputs. Detection performance is evaluated only the held-out test split. All reported AUROC values are averaged over the course of three independent random splits; we report the mean, and for each split, we estimate $95\%$ confidence interval using $20$ bootstrap resamples of the test set, and report the mean alongside the $95\%$ confidence interval. 

\subsection{Prompt Templates}
\label{app:prompts}

\textbf{HaluEval QA Prompt:}
\begin{verbatim}
Question: {question}
Answer: {answer}
\end{verbatim}

\textbf{MuSiQue Prompt:}
\begin{verbatim}
Context: {paragraphs}
Question: {question}
Answer: {answer}
\end{verbatim}

\textbf{FEVER Prompt:}
\begin{verbatim}
Claim: {claim}
This claim is [factual/false].
\end{verbatim}

\textbf{TruthfulQA GPT-4 Judge Prompt:}
\begin{verbatim}
Question: {question}
Correct answers: {correct_answers}
Incorrect answers: {incorrect_answers}
Model response: {generated_text}

Is the model response factually correct or a hallucination?
Output only: FACTUAL or HALLUCINATION
\end{verbatim}

\begin{table}[H]
\begin{center}
\small
\caption{Global data generation, model loading, and evaluation hyperparameters shared across all experiments.}
\label{tab:global_hparams}
\begin{tabular}{@{}ll@{}}
\toprule
\textbf{Parameter} & \textbf{Setting} \\
\midrule
Random seed & 42 (NumPy, PyTorch, scikit--learn) \\
Determinism & Seeds fixed; some GPU ops may remain non-deterministic \\
Batch size (train / extraction) & 8 (training loaders); 4 (hidden-state extraction) \\
Generation batch size & Per experiment; demo runs use batch $=$ \#prompts (typically 3) \\
Maximum sequence length & 512 tokens (truncation applied) \\
Tokenizer padding & \texttt{pad\_token} $=$ \texttt{eos\_token}; left padding for decoder-only models \\
Numerical precision & 4-bit quantization when supported; float16 fallback otherwise \\
Compute device & CUDA when available; CPU fallback \\
Train / test split & 70\% / 30\%, stratified by label \\
Split seed & 42 (deterministic split) \\
Hidden-state extraction & Last-token hidden representation (unless stated otherwise) \\
Centroid definition & Class-wise mean of hidden vectors (Euclidean centroid) \\
Feature normalization & \texttt{StandardScaler} fitted on training split \\
Detection classifier & Logistic regression (L2, \texttt{lbfgs}, \texttt{max\_iter}=1000, seed=42) \\
Distance-based score & Ratio $d_{\mathrm{fact}} / (d_{\mathrm{hall}} + \varepsilon)$ \\
Covariance estimation & Ledoit--Wolf shrinkage (when robust covariance is required) \\
PCA (visualization) & PCA with $n{=}3$, no whitening, seed=42 \\
Figure export & Raw tensors saved as compressed NPZ \\
\bottomrule
\end{tabular}
\end{center}
\vspace{2pt}
\footnotesize\textit{Note:} Unless stated otherwise, all hallucination probabilities $\mathbb{P}(\mathrm{hall})$ are obtained from classifier-predicted probabilities over last-token hidden states. In-model causal interventions are gated via an environment flag and evaluated using stochastic generation on five prompts (no teacher forcing).
\end{table}

\begin{table}[H]
\begin{center}
\small
\caption{Experiment-specific hyperparameters used across detection, causality, steering, and ablation studies.}
\label{tab:experiment_hparams}
\begin{tabular}{@{}lll@{}}
\toprule
\textbf{Category} & \textbf{Parameter} & \textbf{Setting} \\
\midrule
Detection
& Evaluation layer & Middle layer ($\lfloor L/2 \rfloor$) \\
& Covariance estimate & Ledoit--Wolf shrinkage \\
& Detection model & Logistic regression (L2) on standardized features \\
\midrule
Causality
& Interpolation grid & $\alpha \in \{0, 0.1, \ldots, 1.0\}$ \\
& Control directions & Random + orthogonalized (Gram--Schmidt) \\
& Effect metric & Fold change $\frac{\mathbb{P}(\mathrm{hall})_{\text{int}}}{\mathbb{P}(\mathrm{hall})_{\text{base}}}$ \\
\midrule
Steering
& Intervention layers & $\lfloor L/3 \rfloor$, $\lfloor 2L/3 \rfloor$ \\
& Steering vector & $v_{\text{basin}} = \mu_{\text{hall}} - \mu_{\text{fact}}$ \\
& Strength grid & $\lambda \in \{0, 0.1, \ldots, 0.5\}$ \\
\midrule
Ablation
& Layer sweep & Sliding or exhaustive windows over layers \\
& Reported metrics & AUROC change, fold reduction \\
\midrule
Visualization
& Projection & PCA ($n{=}3$) on up to $10^3$ samples per class \\
\bottomrule
\end{tabular}
\end{center}
\vspace{2pt}
\footnotesize\textit{Note:} In-model interventions mirror offline interpolations by applying identical $\alpha$-scaled hidden-state shifts during generation. All hallucination probabilities are computed from the same detection classifier.
\end{table}

\end{document}